\documentclass[twoside]{article}

\usepackage[preprint]{aistats2027}
\AtBeginDocument{
    \setlength{\footskip}{20pt}
    \fancyfoot[C]{\thepage}
}
\usepackage[round]{natbib}

\usepackage{amsmath,amsfonts,bm,amssymb,amsthm}

\def\eqref#1{equation~\ref{#1}}
\def\Eqref#1{Equation~\ref{#1}}

\def\1{\bm{1}}

\DeclareMathAlphabet{\mathsfit}{\encodingdefault}{\sfdefault}{m}{sl}
\SetMathAlphabet{\mathsfit}{bold}{\encodingdefault}{\sfdefault}{bx}{n}

\usepackage{thm-restate}

\usepackage{graphicx}
\usepackage{xcolor}
\usepackage{booktabs}
\usepackage[colorlinks=true,linkcolor=blue!60!black,citecolor=blue!60!black,urlcolor=blue!60!black]{hyperref} 

\definecolor{Curve}{rgb}{0.20,0.58,0.82}
\definecolor{Origin}{rgb}{0.72,0.16,0.46}
\definecolor{Tangent}{rgb}{0.76,0.12,0.16}
\definecolor{Normal}{rgb}{0.90,0.55,0.05}
\definecolor{Biased}{rgb}{0.47,0.27,0.72}
\definecolor{Voronoi}{rgb}{0.05,0.50,0.30}

\usepackage{url}
\usepackage{cleveref}
\usepackage{subcaption}

\newcommand{\RR}{\mathbb{R}}
\newcommand{\CC}{\mathbb{C}}
\newcommand{\PP}{\mathbb{P}}
\newcommand{\cF}{\mathcal{F}}

\newcommand{\cH}{\mathcal{H}}
\newcommand{\cC}{\mathcal{C}}
\newcommand{\cX}{\mathcal{X}}
\newcommand{\cP}{\mathcal{P}}
\newcommand{\bw}{\mathbf{w}}
\newcommand{\bA}{\mathbf{A}}

\newcommand{\rank}{\operatorname{rank}}

\newcommand{\im}{\operatorname{im}}

\newcommand{\linspan}{\operatorname{span}}

\newcommand{\Vorloc}{\operatorname{Vor}^{\operatorname{loc}}}
\newcommand{\Vor}{\operatorname{Vor}}
\newcommand{\id}{\operatorname{id}}
\DeclareMathOperator{\sym}{sym}
\DeclareMathOperator{\Res}{Res}

\newtheorem{theorem}{Theorem}[section]
\newtheorem{proposition}[theorem]{Proposition}
\newtheorem{lemma}[theorem]{Lemma}
\newtheorem{corollary}[theorem]{Corollary}
\newtheorem{conjecture}[theorem]{Conjecture}

\theoremstyle{definition}
\newtheorem{definition}[theorem]{Definition}
\newtheorem{remark}[theorem]{Remark}
\newtheorem{example}[theorem]{Example}

\AddToHook{env/theorem/begin}
  {\crefalias{theorem}{theorem}}
\AddToHook{env/proposition/begin}
  {\crefalias{theorem}{proposition}}
\AddToHook{env/lemma/begin}
  {\crefalias{theorem}{lemma}}
\AddToHook{env/corollary/begin}
  {\crefalias{theorem}{corollary}}
\AddToHook{env/conjecture/begin}
  {\crefalias{theorem}{conjecture}}
\AddToHook{env/definition/begin}
  {\crefalias{theorem}{definition}}
\AddToHook{env/example/begin}
  {\crefalias{theorem}{example}}
\AddToHook{env/remark/begin}
  {\crefalias{theorem}{remark}}

\crefname{theorem}{theorem}{theorems}
\Crefname{theorem}{Theorem}{Theorems}
\crefname{lemma}{lemma}{lemmas}
\Crefname{lemma}{Lemma}{Lemmas}
\crefname{proposition}{proposition}{propositions}
\Crefname{proposition}{Proposition}{Propositions}
\crefname{corollary}{corollary}{corollaries}
\Crefname{corollary}{Corollary}{Corollaries}
\crefname{conjecture}{conjecture}{conjectures}
\Crefname{conjecture}{Conjecture}{Conjectures}
\crefname{remark}{remark}{remarks}
\Crefname{remark}{Remark}{Remarks}
\crefname{definition}{definition}{definitions}
\Crefname{definition}{Definition}{Definitions}
\crefname{example}{example}{examples}
\Crefname{example}{Example}{Examples}

\begin{document}
\thispagestyle{plain} 

%

%

\twocolumn[

\aistatstitle{Do Neural Models Prefer Their Submodels?}

\aistatsauthor{\'Alvaro Ribot \And Yulia Alexandr \And  Anna Seigal }

\aistatsaddress{  Harvard University \And  University of Melbourne \And Harvard University } ]

\begin{abstract}
A neural model is the set of functions that can be expressed by a given neural network architecture.
Neural models contain natural submodels, such as those obtained by setting some neurons to be inactive. We study the geometry of neural models and ask whether they prefer their submodels. By this we mean  
whether local minimizers of the loss function lie in a given submodel. 
We introduce a geometric framework to determine if this occurs.
For shallow multilayer perceptrons, we show that subnetworks with inactive hidden neurons are never preferred.  For deep multilayer perceptrons, we give conditions under which subnetworks with inactive hidden neurons are not preferred. For lightning self-attention, lower-rank submodels are usually not preferred, but we identify examples where they are because they lie on a boundary of the model.
Our results show that, aside from a few exceptions, preference for submodels cannot be explained by the geometry of the neural model, and must instead come from other sources such as training dynamics.
\end{abstract}

\section{Introduction}

\begin{figure}
    \centering
    \includegraphics[width=0.76\linewidth]{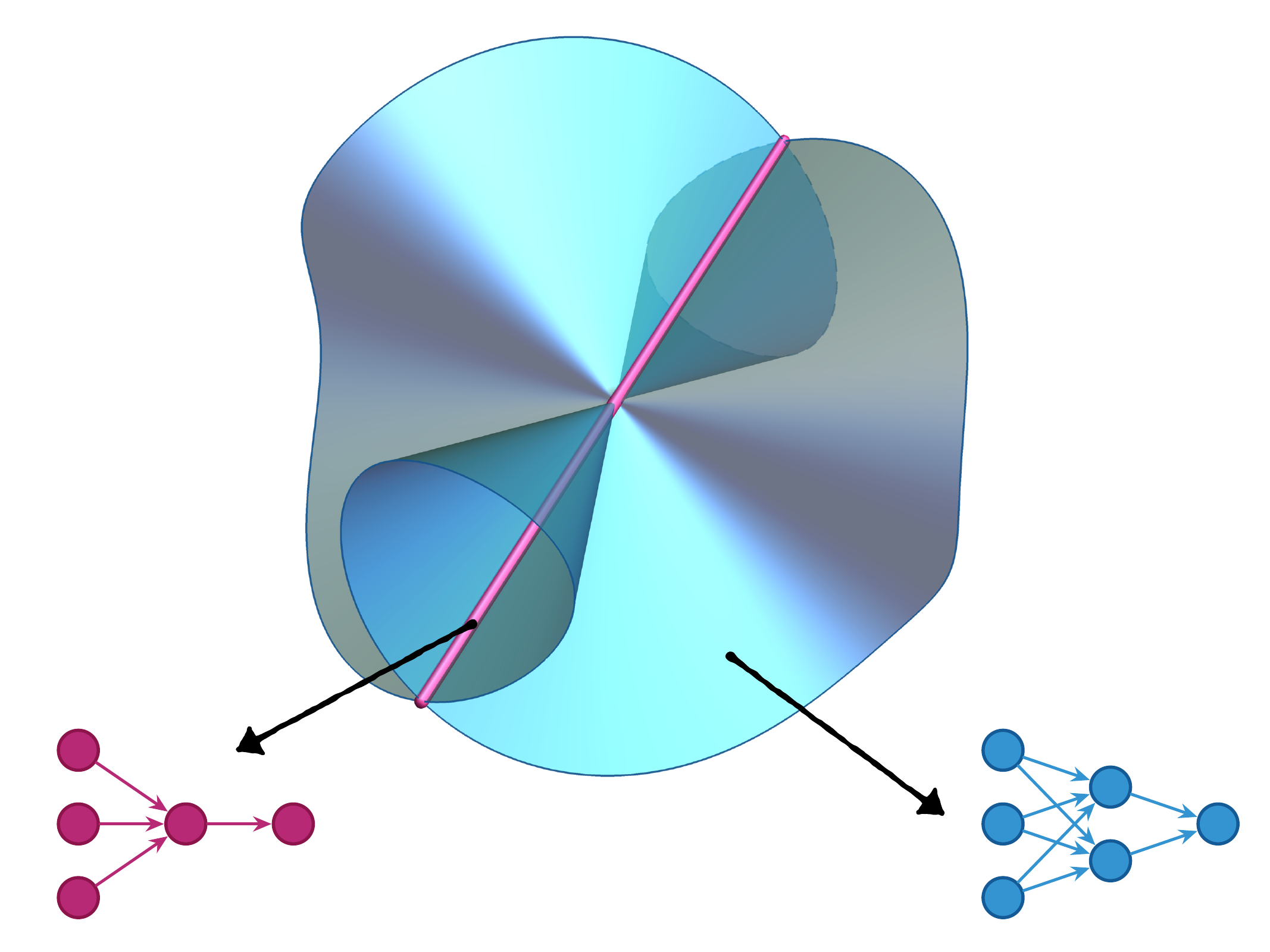}    
    \caption{
    \textbf{No proximity bias toward subnetworks of shallow multilayer perceptrons (MLPs)}.}
    \label{fig:mlp}
\end{figure}

A neural model $\cF$ is the set of functions that can be expressed by a given neural network architecture. Given a target function $h$, we seek 
a minimizer of the distance function from $h$ to $\cF$.
Submodels are subsets of $\cF$ obtained by reducing hyperparameters, such as rank, or by requiring some neurons to be inactive. 
We study whether neural models prefer their submodels or, more specifically, whether a local minimizer of the distance from $h$ to $\cF$ lies in a given submodel $\cF'$. We call this behavior \emph{proximity bias}.

Training a model is known to produce
solutions that lie on, or close to, one of these submodels \citep{frankle2018the,arora2019implicit,shahverdi2026learningrazorsedgeidentifiability}.
This could be due to structure in the data, the optimization dynamics, loss function, or the geometry of the neural model.
Identifying why it occurs is important to understand how models generalize and how to change their behavior.
We seek to isolate the role of the neural model geometry, which affects the dynamics and loss in a downstream~way. 

Compared to general points in a given model, points on submodels may possess more tangent directions to the model or fewer.
Proximity bias can occur only in the latter case, since a minimizer must lie in the polar of the tangent cone.
We show that proximity bias is governed by the local geometry of the model along its submodel. 
Figure~\ref{fig:mlp} illustrates our result for shallow MLPs: the submodel obtained using inactive hidden neurons has more tangent directions than a general point, so proximity bias does not occur.

Low-rank matrix approximation sets the scene. 
The best rank-at-most-$r$ approximation of a matrix $M$ in (e.g.)~Frobenius norm has rank exactly~$r$, unless $\rank(M) < r$, due to \citet{eckart1936approximation}.
That is, a low-rank matrix model uses available degrees of freedom rather than collapsing to an even-lower-rank submodel. The analogous statement holds for symmetric tensors \citep{seigal2017real}. This implies no proximity bias for single-output shallow MLPs with monomial activations. We extend this 
to deep MLPs and those with analytic activation functions, and to lightning self-attention.

\subsection{Summary of contributions}

We study two classes of neural models (also known as neuromanifolds): those corresponding to multilayer perceptrons and to lightning self-attention. For shallow MLPs, we show that the narrower submodel obtained by removing a hidden neuron is not preferred. For deep MLPs, inactive neurons are again not preferred in several cases: when one neuron is inactive in every hidden layer, when the earlier layers are wide enough, when the activations are polynomial of sufficiently high degree, and when the activations are analytic but not polynomial.
For attention, many lower-rank and singular strata do not exhibit proximity bias, but certain exceptional singular or boundary strata do. We show that singularity alone does not determine whether a submodel is preferred; it is governed by a finer geometry of the neural model; see \Cref{fig:implicit-bias-examples} for 
examples.

\subsection{Related work}

\paragraph{Implicit and sparsity bias.}
Trained models have been shown to prefer structured solutions, such as max-margin, low-rank or sparse ones, without them appearing in the objective function. This phenomenon is called \emph{implicit bias}
\citep{soudry2018implicit,gunasekar2017implicit,gunasekar2018characterizing,chizat2020implicit}. A preference for networks with fewer active neurons is called \emph{sparsity bias}: the noise of stochastic gradient descent drives networks toward simpler subnetworks \citep{chen2023stochastic}, and randomly initialized networks contain small subnetworks that perform comparably when trained alone \citep{frankle2018the}. Other works trace a preference for simple functions to the architecture or the parametrization \citep{valleperez2019deep,mingard2025deep}. Our notion of
\emph{proximity bias} depends only on the geometry of the neural model and is one possible source of implicit bias.

\paragraph{Singularities and learning.}\emph{Singular learning theory} studies models, including neural networks, that are not identifiable and whose Fisher information matrix is degenerate \citep{watanabe2009algebraic,wei2022deep}. Such singularities can cause plateaus in gradient-based learning \citep{amari2006singularities}, although these effects do not imply that singular points are local minimizers. These are singularities of the parametrization. We instead consider singularities and boundaries of the neural model itself, in the spirit of neuroalgebraic geometry \citep{marchetti2025position}, and our descriptions of neural models build on \citet{kileel2019expressive,kubjas2024geometry,HMK,grosdos2025algebraicgeometryrationalneural,alexandr2026attention}.

\paragraph{Subnetworks and critical points.}
Two recent works connect sparsity bias to singularities. \citet{shahverdi2026learningrazorsedgeidentifiability} study polynomial MLPs and show that, under mild conditions, subnetworks are singular points of the neural model and are critical points of the loss for an open set of targets. \citet{kohn2026singular} study deep monomial networks of high degree and show that the parametrization is critical exactly at subnetworks. However, critical points may be saddles. We ask when these submodels contain local minima for an open set of targets.

\section{Proximity bias}

\begin{figure*}[t]
    \centering
    \begin{subfigure}[t]{0.24\textwidth}
        \centering
        \includegraphics[width=\linewidth]{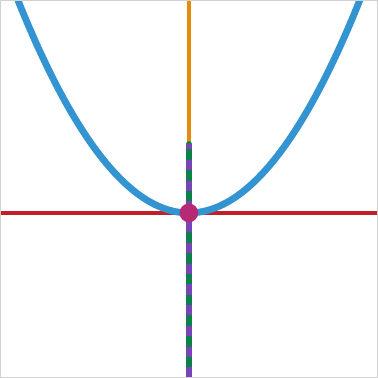}
        \caption{Smooth}
    \end{subfigure}
    \hfill
    \begin{subfigure}[t]{0.24\textwidth}
        \centering
        \includegraphics[width=\linewidth]{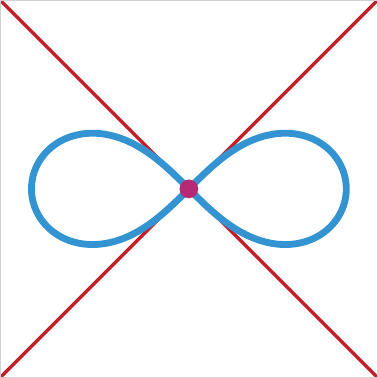}
        \caption{Node}
    \end{subfigure}
    \hfill
    \begin{subfigure}[t]{0.24\textwidth}
        \centering
        \includegraphics[width=\linewidth]{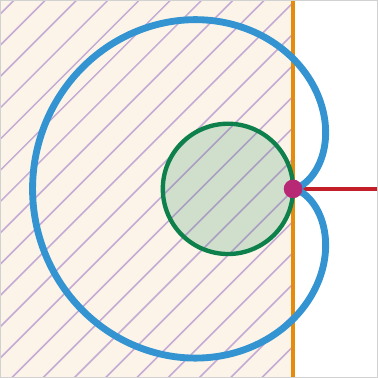}
        \caption{Cusp}
    \end{subfigure}
    \hfill
    \begin{subfigure}[t]{0.24\textwidth}
        \centering
        \includegraphics[width=\linewidth]{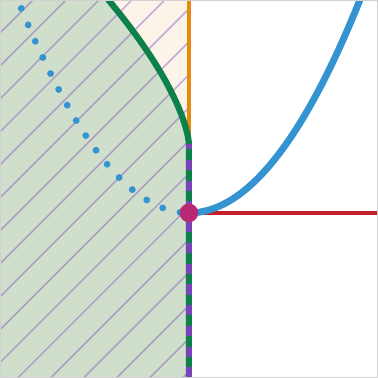}
        \caption{Boundary}
    \end{subfigure}
\caption{
\textbf{The presence or absence of proximity bias depends on local geometry.} For curve \textcolor{Curve}{$\mathcal{F}$}, there is proximity bias toward \textcolor{Origin}{$\mathcal{F'} =  \{\mathbf{0}\}$} in (c) and (d), but not in (a) and (b), as shown by whether the local Voronoi cell \textcolor{Biased}{$\Vorloc_{\mathcal F}(\mathbf 0)$} (purple dashes) is full-dimensional. 
We also draw the tangent cone \textcolor{Tangent}{$T_{\mathcal F}(\mathbf 0)$}, normal cone \textcolor{Normal}{$N_{\mathcal F}(\mathbf 0)$},~and Voronoi cell \textcolor{Voronoi}{$\Vor_{\mathcal F}(\mathbf 0)$} for comparison. The blue dots in~(d) are points in the Zariski closure of \textcolor{Curve}{$\mathcal{F}$}.
Details in \ref{sec:fig-examples}.}
    \label{fig:implicit-bias-examples}
\end{figure*}

We introduce a geometric framework to study the presence or absence of proximity bias toward submodels.
A neural model $\cF$ is a subset of a real Hilbert space of functions $\cH$.
For a fixed target function $h\!\in\!\cH$, we consider the squared distance
\[
\ell_h : \cH \to \RR, f \mapsto \ell_h(f) = \|h - f \|^2.
\]
We ask whether minimizing $\ell_h$ over $\cF$ selects a
distinguished lower-dimensional submodel $\cF'\subset\cF$. Such a preference cannot arise through smooth interior points of $\cF$; it can only occur where the local real geometry is nongeneric, for example along singular or boundary strata; see \Cref{prop:equal-dimension-bias}.
One may think of the target function $h \in \cH$ as an oracle that best explains the data, or as a teacher from which we try to distill knowledge using students in $\cF$ \citep{hinton2015distilling}.

The \emph{Voronoi cell} \citep{voronoi-cells} of $f \in \cF$ is 
\begin{align*}
    \Vor_\cF(f) &= \{h \in \cH : \| h - f \| \leq \| h - g\| \text{ for all } g \in \cF  \} \\
&= \{h \in \cH : f \text{ is a global minimizer of } \ell_h \rvert_\cF \}.
\end{align*}
We introduce the \emph{local Voronoi cell} of $f \in \cF$ as
\[
\Vorloc_\cF(f) = \{h \in \cH : f \text{ is a local minimizer of } \ell_h \rvert_\cF \}.
\]

More generally, given a set $\cF' \subseteq \cF$, we define its Voronoi cell and local Voronoi cell, respectively, as
\[
\Vor_\cF(\cF') = \!\!\bigcup_{f \in \cF'} \!\!\Vor_\cF(f), \quad \Vorloc_\cF(\cF') = \!\!\bigcup_{f \in \cF'}\!\! \Vorloc_\cF(f).
\]
We say that $\cF$ exhibits \emph{proximity bias toward $\cF'$} if $\Vorloc_{\cF}(\cF')$ has nonempty interior in $\cH$. 
When~$\cH$ is finite-dimensional and $\cF$ and $\cF'$ are semialgebraic sets (e.g., for polynomial neural networks), the set~$\Vorloc_{\cF}(\cF')$ is semialgebraic and the following are equivalent: $\Vorloc_{\cF}(\cF')$ has nonempty interior in $\cH$; $\Vorloc_{\cF}(\cF')$ has a positive Lebesgue measure;  $\dim \Vorloc_{\cF}(\cF') = \dim \cH$.

\begin{remark}
    Voronoi cells are closed convex sets that tessellate the ambient space. Local Voronoi cells are also convex, but may not be closed and may overlap.
\end{remark}

    We study (local) Voronoi cells using classical notions from variational geometry \citep{rockafellar1998variational}.
    Tangent cones and their polar normal cones encode the first-order geometry relevant to local optimality;
    see \Cref{fig:implicit-bias-examples}.
    Given $f\in\cF$, the \emph{tangent cone} of $\cF$ at~$f$, denoted $T_\cF(f)$, is the set of directions that arise as limits of sequences in $\cF$ tending to $f$. In symbols,
    $$
    T_\cF(f)
    =
    \left\{
    v\in\cH:
    \begin{array}{l}
    \exists (f_n)_{n}\subset\cF, \; \exists (\tau_n)_{n} \subset \RR_{> 0} \\
    f_n\to f,\; 
    (f_n-f)/\tau_n\to v
    \end{array}
    \right\}.
    $$
    The \emph{normal cone} of $\cF$ at $f$ is the polar cone of $T_\cF(f)$:
    $$
    N_\cF(f)
    =
    \{w\in\cH:
    \langle v,w\rangle\leq0
    \text{ for all }v\in T_\cF(f)\}.
    $$
    When $f$ is a smooth point of $\cF$, we recover the classical notions of tangent and normal spaces. The following chain of inclusions relates these sets.

\begin{restatable}{proposition}{implicitbiascriterion}\label{prop:implicit-bias-criterion}
    Given any $\cF' \subseteq \cF$ one has
\[
\cF' \subseteq
\operatorname{Vor}_{\cF}(\cF')
\subseteq
\Vorloc_{\cF}(\cF')
\subseteq
\bigcup_{f\in\cF'}\bigl(f+N_\cF(f)\bigr).
\]
\end{restatable}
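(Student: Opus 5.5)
We will prove the three inclusions one at a time. By the definitions of $\Vor_\cF(\cF')$ and $\Vorloc_\cF(\cF')$ as unions over $f\in\cF'$, it suffices to check each inclusion pointwise for a fixed $f\in\cF'$:
\[
f\in\Vor_\cF(f), \qquad \Vor_\cF(f)\subseteq\Vorloc_\cF(f), \qquad \Vorloc_\cF(f)\subseteq f+N_\cF(f).
\]
Taking unions over $f\in\cF'$ then gives the statement.

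\emph{First inclusion.} Take $h=f$. Then $\ell_f(f)=0\le\ell_f(g)$ for every $g\in\cF$, so $f$ is a global minimizer of $\ell_f|_\cF$. Hence $f\in\Vor_\cF(f)$.

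\emph{Second inclusion.} A global minimizer of $\ell_h|_\cF$ is in particular a local minimizer: use any neighborhood, for instance all of $\cH$. Hence $\Vor_\cF(f)\subseteq\Vorloc_\cF(f)$.

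\emph{Third inclusion.} This is the only step with content, and it is a first-order optimality condition. Let $h\in\Vorloc_\cF(f)$ and set $w=h-f$. We must show $\langle v,w\rangle\le 0$ for every $v\in T_\cF(f)$.

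Fix such a $v$, together with sequences $f_n\in\cF$, $f_n\to f$, and $\tau_n>0$ with $(f_n-f)/\tau_n\to v$. Since $f_n\to f$ and $f$ is a local minimizer of $\ell_h|_\cF$, for all large $n$ we have $\|h-f_n\|^2\ge\|h-f\|^2$. Writing $h-f_n=w-(f_n-f)$ and expanding gives
\[
-2\langle w,f_n-f\rangle+\|f_n-f\|^2\ge 0.
\]
Dividing by $\tau_n>0$ yields
\[
-2\Bigl\langle w,\tfrac{f_n-f}{\tau_n}\Bigr\rangle+\tau_n\Bigl\|\tfrac{f_n-f}{\tau_n}\Bigr\|^2\ge 0.
\]

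The only point needing care is the behavior of $\tau_n$. We cannot assume $\tau_n\to0$ directly; we argue as follows.
\begin{itemize}
\item If $v\neq0$, then $\|f_n-f\|/\tau_n\to\|v\|>0$ while $\|f_n-f\|\to0$, which forces $\tau_n\to0$.
\item If $v=0$, the desired inequality $\langle 0,w\rangle\le0$ is trivial.
\end{itemize}
So assume $v\neq0$ and $\tau_n\to0$. The quantity $\|(f_n-f)/\tau_n\|^2$ stays bounded because it converges to $\|v\|^2$, so the second term tends to $0$. Passing to the limit, using continuity of the inner product, gives $-2\langle w,v\rangle\ge0$, that is, $\langle v,h-f\rangle\le0$.

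Since $v\in T_\cF(f)$ was arbitrary, $h-f\in N_\cF(f)$, so $h\in f+N_\cF(f)$. This completes the argument.
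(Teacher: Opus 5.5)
Your proof is correct and follows the paper's argument. The paper obtains the third inclusion from a general lemma for Fr\'echet differentiable losses ($-\nabla\ell(f)\in N_\cF(f)$ at a local minimizer, applied with $-\nabla\ell_h(f)=2(h-f)$), whereas you expand the square directly, and your case split on whether $\tau_n\to0$ is harmless but unnecessary, since $\|f_n-f\|^2/\tau_n=\|f_n-f\|\cdot\|f_n-f\|/\tau_n\to0$ in any case.
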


\begin{proof}[Proof idea]
The first inclusion follows from $\ell_f(f) = 0$ and $\ell_f\geq 0$. The second inclusion says that global minimizers are local minimizers. The third uses that $- \nabla \ell_h(f) = 2(h - f)$ and that this vector lies in $N_\cF(f)$ if $f$ is a local minimizer~of~$\ell_h \rvert_\cF$. See Appendix~\ref{sec:proof-toolbox}.
\end{proof}
The above result uses the specificity of the normal cone to minimizers rather than any critical point.
\begin{corollary}
    If $\bigcup_{f\in\cF'}\bigl(f+N_\cF(f)\bigr)$ has empty interior, then there is no proximity bias toward $\cF'$. If~$\Vor_\cF(\cF')$ has nonempty interior, then there is proximity bias toward~$\cF'$.
\end{corollary}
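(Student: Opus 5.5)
Both statements follow from the chain of inclusions in \Cref{prop:implicit-bias-criterion}, together with the fact that having nonempty interior is monotone under inclusion: a subset of a set with empty interior has empty interior, and a superset of a set with nonempty interior has nonempty interior. By definition, proximity bias toward $\cF'$ means exactly that $\Vorloc_\cF(\cF')$ has nonempty interior in $\cH$, so each claim amounts to comparing $\Vorloc_\cF(\cF')$ with one of the outer terms of the chain.

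For the first claim, suppose $\bigcup_{f\in\cF'}(f+N_\cF(f))$ has empty interior. The third inclusion of \Cref{prop:implicit-bias-criterion} gives
\[
\Vorloc_\cF(\cF') \subseteq \bigcup_{f\in\cF'}\bigl(f+N_\cF(f)\bigr).
\]
Any open ball contained in $\Vorloc_\cF(\cF')$ would then lie in the right-hand side, which is a contradiction. Hence $\Vorloc_\cF(\cF')$ has empty interior, and there is no proximity bias toward $\cF'$.

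For the second claim, suppose $\Vor_\cF(\cF')$ has nonempty interior, so it contains some open ball $B$. The second inclusion $\Vor_\cF(\cF')\subseteq\Vorloc_\cF(\cF')$ gives $B\subseteq \Vorloc_\cF(\cF')$. Therefore $\Vorloc_\cF(\cF')$ has nonempty interior, which is proximity bias toward $\cF'$.

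The argument needs nothing beyond the proposition, so I expect no real obstacle. The only point to check is that "interior" is taken in the same topology of $\cH$ throughout, which it is, since all the sets involved are subsets of $\cH$.
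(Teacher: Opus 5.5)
Your proposal is correct and follows the paper's route: the corollary is stated there as an immediate consequence of the chain of inclusions in \Cref{prop:implicit-bias-criterion}, with the second inclusion giving the second claim and the third inclusion giving the first. Combining these inclusions with monotonicity of the interior is exactly what is intended.
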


There is always proximity bias toward equal-dimensional submodels  and proximity bias toward a lower-dimensional submodel requires points where the normal cone is larger than at smooth points.
\begin{restatable}{proposition}{equaldimensionbias}\label{prop:equal-dimension-bias}
    Assume $\dim \cH < \infty$ and consider semialgebraic sets $\cF' \subseteq \cF \subseteq \cH$. If $\dim\mathcal F'=\dim\mathcal F$, then there is proximity bias toward~$\mathcal F'$. If $\dim \cF' < \dim \cF$ and $\cF'$ is contained in the smooth locus of $\cF$, then there is no proximity bias toward $\cF'$.
\end{restatable}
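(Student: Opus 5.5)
Both parts use \Cref{prop:implicit-bias-criterion}. The first part needs a lower bound on $\Vor_\cF(\cF')$. The second part needs an upper bound on $\Vorloc_\cF(\cF')$ through the normal cones.

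\emph{Equal dimension.} Since $\cF'$ is semialgebraic with $\dim\cF'=\dim\cF=:d$, there is a point $f_0\in\cF'$ near which both $\cF'$ and $\cF$ are smooth $d$-dimensional manifolds. Concretely, take a cell of a stratification of $\cF'$ of maximal dimension $d$. The set of points of this cell that are non-smooth points of $\cF$, or at which the local dimension of $\cF$ exceeds $d$, is semialgebraic of dimension $<d$, so we can avoid it. Near $f_0$ the two manifolds have the same dimension, hence $\cF'$ and $\cF$ coincide on a neighborhood $U$ of $f_0$ in $\cF$. We want a set with nonempty interior inside $\Vorloc_\cF(\cF')$. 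Consider the normal bundle map $\Phi(f,w)=f+w$ for $f\in\cF'\cap U$ and $w\in N_\cF(f)$ with $\|w\|<\varepsilon$. By the tubular neighborhood theorem, $\Phi$ is a diffeomorphism onto an open set $W\ni f_0$ in $\cH$. Choose $\varepsilon$ smaller than the local reach and shrink $U$. Then for $h\in W$ the nearest point of $\cF$ within a small ball is the unique point $f=\pi(h)\in\cF'$. So $f$ is a local minimizer of $\ell_h|_\cF$, and $W\subseteq\Vorloc_\cF(\cF')$. Alternatively, one could argue that $\Vor_\cF(\cF')$ contains a neighborhood of $f_0$ once we use the positive distance from $f_0$ to $\cF\setminus U$, but the local version suffices. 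This is the step needing the most care: we must localize properly, using $\ell_h$ restricted to a neighborhood, and ensure that $\cF$ has no other sheets passing through $f_0$. Smoothness of $\cF$ at $f_0$ in the sense of a local manifold handles the second point.

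\emph{Lower dimension, smooth locus.} By \Cref{prop:implicit-bias-criterion}, $\Vorloc_\cF(\cF')\subseteq\bigcup_{f\in\cF'}(f+N_\cF(f))$, so it suffices to show that the right-hand side has dimension $<\dim\cH=:n$. At a smooth point $f$ with local dimension $d_f$, we have $T_\cF(f)=T_f\cF$ and $N_\cF(f)=(T_f\cF)^\perp$, which has dimension $n-d_f$. Stratify $\cF'$ into finitely many semialgebraic smooth cells $C$ on which the local dimension $d_C$ of $\cF$ is constant and $\dim C<d_C$. This uses $\dim\cF'<\dim\cF$ together with pure-dimensionality considerations. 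If $\cF$ is not pure-dimensional, we refine the cells so that $\dim C$ is less than the local dimension of $\cF$ along $C$. We must guarantee this; it holds generically, and we would state it as the precise meaning of the hypothesis, so this is the main subtlety. The conormal-type set $\{(f,w): f\in C,\ w\in N_\cF(f)\}$ is semialgebraic of dimension $\dim C+n-d_C<n$. Its image under the addition map $(f,w)\mapsto f+w$ is therefore semialgebraic of dimension $<n$, since semialgebraic images do not increase dimension. A finite union of such sets has empty interior, so there is no proximity bias toward $\cF'$.
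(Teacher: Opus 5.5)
Your proposal is correct. The second part is the same dimension count as the paper's, but the first part takes a heavier route.

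\textbf{First part.} The paper uses no smoothness here. Set $A=\overline{\cF}\setminus\cF'$. If no point of $\cF'$ had a neighborhood in $\overline{\cF}$ contained in $\cF'$, then $\cF'\subseteq\overline{A}\setminus A$, so $\dim\cF'<\dim A\le\dim\cF$, a contradiction. Hence some $f\in\cF'$ has $\overline{\cF}\cap B(f,3\varepsilon)\subseteq\cF'$. For $h\in B(f,\varepsilon)$, any nearest point of the closed set $\overline{\cF}$ lies within $2\varepsilon$ of $f$, hence in $\cF'$. So a whole ball lies in the \emph{global} cell $\Vor_\cF(\cF')$.

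You instead:
\begin{enumerate}
\item pick $f_0$ in a top-dimensional cell where $\cF$ is a $d$-manifold;
\item get $\cF=\cF'$ near $f_0$, implicitly by invariance of domain;
\item invoke the tubular neighborhood theorem and local reach.
\end{enumerate}
The genericity claim in step 1 is true, but you assert it without proof. Its standard proof rests on the same frontier inequality $\dim(\overline{A}\setminus A)<\dim A$ that the paper uses directly.

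The machinery in step 3 is not needed. Once $\cF$ agrees with $\cF'$ and is locally closed near $f_0$, take $h$ close to $f_0$. A nearest point of the compact set $\cF\cap\overline{B(f_0,r)}$ exists and lies in the open ball. It is therefore a local minimizer, and it lies in $\cF'$. Your route does give an explicit normal-bundle description of the cell near $f_0$. The paper's argument is self-contained, handles non-closed $\cF$ by passing to $\overline{\cF}$, and gives the stronger global statement.

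\textbf{Second part.} You are right that the paper's bound $\dim(f+N_\cF(f))=\dim\cH-\dim\cF$ tacitly reads ``smooth point'' as ``locally a manifold of dimension $\dim\cF$''. However, refining cells cannot enforce $\dim C<d_C$ under any other reading, and the condition does not hold generically. Take $\cF=\{x_3=0\}\cup\{x_1=0,\ x_3=1\}\subseteq\RR^3$ and let $\cF'$ be the line. Every point of $\cF'$ is a smooth point of $\cF$, and $\dim\cF'<\dim\cF$. Yet every target within distance $1/2$ of the line has its nearest point of $\cF$ on the line, so there is proximity bias. The reading you adopt at the end is therefore necessary, not merely convenient. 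Under it, your count $\dim C+n-\dim\cF<n$ coincides with the paper's.
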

\begin{proof}[Proof idea]
    First, a full-dimensional semialgebraic subset $\cF'$ of $\cF$ agrees with $\overline \cF$ near some $f \in \cF'$, so every target close to $f$ has all its nearest points of $\cF$ in $\cF'$. Second, local Voronoi cells of smooth points have dimension at most $\dim \cH - \dim \cF < \dim \cH - \dim \cF'$.
\end{proof}

\begin{remark}[Other losses]
    Our setting can be generalized to other \emph{nondegenerate losses}: Fr\'echet differentiable functions $\ell_u : \cH \to \RR_{\geq0}$ such that $\ell_u(u) \!= \!0$ and $\nabla\ell_u(f) = 0$ if and only if $f = u$.
    The squared distance $\ell_u(f) = \| f-u \|^2$ is a nondegenerate loss because $\nabla \ell_u(f) = 2(f-u)$. If~$\cH$ is a finite-dimensional space, then the mean-squared error (MSE) is a nondegenerate loss function provided that the dataset is sufficiently general and large enough \citep{kohn2022geometry}.
\end{remark}

\section{Multilayer Perceptrons}\label{sec:mlp}

Figure~\ref{fig:rank-one-cone} conveys the idea behind this section. It shows the cone of functions computed by a single neuron.
An inactive neuron computes the zero function, which is the apex of the cone. An inactive neuron can be reactivated in every direction of the cone, so submodels with inactive neurons have many tangent directions and tend not to be preferred. For shallow networks, this idea suffices to show that there is no proximity bias toward subnetworks. For deep networks, one must also take into account how the upstream and downstream layers affect the tangent directions added by inactive neurons, which makes the analysis more challenging.

\begin{definition}
A \emph{multilayer perceptron (MLP)} with architecture 
$\mathbf d = (d_0, \dots, d_L)$ and activation functions $\boldsymbol \rho = (\rho_1, \ldots, \rho_L)$ is a map 
\[ f_{\mathbf W, \mathbf b} = \tilde \rho_L \circ A_L \circ   \cdots \circ \tilde \rho_1 \circ A_1 : \RR^{d_0} \to \RR^{d_L}, \]
where $A_k(x) = W_kx + b_k$ with $W_k \in {\RR^{d_k \times d_{k-1}}}$, $b_k \in \RR^{d_k}$, and $\tilde \rho_k : \RR^{d_k} \to \RR^{d_k}$ applies a fixed function $\rho_k : \RR \to \RR$ coordinatewise. 
The \emph{neural model} $\cF$ of the MLP is the set of functions $f_{\mathbf W, \mathbf b}$ as the \emph{weight matrices} $\mathbf W = (W_1, \dots, W_L)$ and \emph{biases} $\mathbf{b} = (b_1, \dots, b_L)$ vary over $\prod_k\RR^{d_k \times d_{k-1}}$ and $\prod_k\RR^{d_k}$, respectively. 
The hyperparameter $L$ is the number of layers. We say that an MLP is \emph{shallow} if~$L=2$, and \emph{deep} if $L \geq 3$. 
\end{definition}

For simplicity, we assume linear readouts; i.e., the last activation function $\rho_L$ is the identity map. 
This assumption is mild, since the last activation can be absorbed by the loss function: $\ell_h(\rho \circ f) = (\ell_h \circ \rho)(f)$.
We assume that all other activations are nonconstant.

Given a fixed MLP architecture and activation functions, we take the ambient space $\cH$ of the neural model $\mathcal F$ to be the completion of its linear span 
with respect to a fixed inner product: $\cH = \overline{\linspan(\cF)}$. When all activations are polynomials, $\cF$ consists of (tuples of) polynomials of bounded degree, so $\cH$ is finite-dimensional and we take the standard inner product of polynomials (Bombieri inner product).
When some activation is non-polynomial, the universal approximation theorem implies that $\linspan(\cF)$ is infinite dimensional, and we take an $L^2$ inner product with respect to a finite compactly-supported Borel measure~$\mu$ on $\RR^{d_0}$, so that $\cH = \overline{\linspan(\cF)} = L^2(\mu, \RR^{d_L})$. This implies restricting the support of elements in $\cF$ to guarantee that their norm is finite. When the support of $\mu$ is a finite set $\Omega$, the functions in the restricted neural model $\mathcal{F}|_{\Omega}$ are identified with vectors in $\RR^{|\Omega|\times d_L}$, a finite-dimensional ambient space. 
Equipping $\RR^{|\Omega|\times d_L}$ with the Euclidean inner product corresponds to MSE for the dataset $\Omega$.

\subsection{Shallow Networks}
\label{sec:shallow}

Consider MLPs with architecture $\mathbf d = (d_0, d_1, d_2)$ and activations $\boldsymbol{\rho} = (\tilde \rho, \id)$. The \emph{single-neuron cone} is
\[
\mathcal C \!= \!\left\{f(x) = \rho(\langle w,x\rangle + \beta)  v + b : \!\begin{array}{l}
    w \in \RR^{d_0}, \beta \in \RR \\
     v, b \in \RR^{d_2}
    \end{array}
 \!\!\right\}.
\]
The single-neuron cone is the neural model of MLPs with architecture $(d_0, 1, d_2)$ and activations $\boldsymbol{\rho} = (\tilde \rho, \id)$: $f(x)$ above is $f_{\mathbf W, \mathbf{b}}$ with $\mathbf{W} = (w^\top, v)$ and $\mathbf{b} = (\beta, b)$.
\begin{lemma}\label{lem:shallow-sum-cones}
    The neural model $\mathcal F$ of a shallow MLP with architecture $(d_0, d_1, d_2)$ is the Minkowski sum of $d_1$ single-neuron cones:
    $
    \mathcal{F} = \sum\nolimits_{i = 1}^{d_1} \mathcal C.
    $
    Therefore, $\linspan(\cC) = \linspan(\cF)$.
\end{lemma}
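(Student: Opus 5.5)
The plan is to expand the shallow MLP function coordinatewise and regroup it neuron by neuron. A function in $\cF$ with weights $\mathbf W = (W_1, W_2)$ and biases $\mathbf b = (b_1, b_2)$ has the form
\[
f(x) = W_2\, \tilde\rho(W_1 x + b_1) + b_2 = \sum_{i=1}^{d_1} \rho(\langle w_i, x\rangle + \beta_i)\, v_i + b_2,
\]
where $w_i^\top$ is the $i$-th row of $W_1$, $\beta_i$ is the $i$-th entry of $b_1$, and $v_i$ is the $i$-th column of $W_2$. Each summand, after attaching a bias, is an element of $\cC$.

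For the inclusion $\cF \subseteq \sum_{i=1}^{d_1} \cC$, I will split the output bias as $b_2 = b_2 + 0 + \cdots + 0$. I then assign $b_2$ to the first summand and zero bias to the others. This writes $f$ as a sum of $d_1$ elements of $\cC$.

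For the reverse inclusion, take $g_i(x) = \rho(\langle w_i, x\rangle + \beta_i)\, v_i + c_i \in \cC$ for $i = 1, \dots, d_1$. I stack the $w_i^\top$ as the rows of $W_1$, the $\beta_i$ as the entries of $b_1$, and the $v_i$ as the columns of $W_2$, and set $b_2 = \sum_i c_i$. Then $f_{\mathbf W, \mathbf b} = \sum_i g_i$. Since the parameters range over all of Euclidean space, every such choice is allowed, so the Minkowski sum equals $\cF$.

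For the span statement, I will use $\cC \subseteq \cF$ together with $\cF \subseteq \sum \cC \subseteq \linspan(\cC)$. The first inclusion follows by setting $v_i = 0$ for $i \ge 2$ (and, if $d_1 = 0$ is excluded, nothing more is needed); equivalently, $0 \in \cC$ by taking $v = 0$ and $b = 0$. These two inclusions give $\linspan(\cC) \subseteq \linspan(\cF) \subseteq \linspan(\cC)$. If $\linspan$ here means the closure in $\cH$, taking closures preserves the equality.

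There is no real obstacle; the argument is bookkeeping. The only point needing care is that the output bias is a single vector while each cone carries its own bias. Splitting $b_2$ among the summands handles one direction, and summing the $c_i$ handles the other. This also makes clear why $0 \in \cC$ is what allows smaller sums to embed into larger ones.
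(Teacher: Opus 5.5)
Your proof is correct and follows the same neuron-by-neuron decomposition the paper relies on. The paper states this lemma without a separate proof, but it uses the same bias-splitting bookkeeping $b_L = b'_L + \tilde b_L$ in the proof of \Cref{lem:cone-paths}. One small fix: drop the parenthetical about $d_1 = 0$, since $d_1 \ge 1$ is implicit; for $d_1 = 0$ the empty sum is $\{0\}$ and the identity would fail.
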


We think of nonzero elements in $\cC$ as rank-one functions and elements in $\cF$ as rank at most $d_1$ functions, by analogy to the following special case.
\begin{example}[Symmetric matrices] Consider the architecture $\mathbf{d} \!= \!(d_0, d_1, 1)$ and $\boldsymbol{\rho} = (\tilde \rho, \id)$ with $\rho (t) = t^2$. The MLPs without bias terms $(b_k = 0)$ and weights $\mathbf{W} = (W, \lambda)$ are quadratic forms 
\[
f_{\mathbf{W}, \mathbf{0}}(x) = x^\top Ax, \quad A = \sum\nolimits_{i=1}^{d_1} \lambda_i w_iw_i^\top\]
where $w_i$ is the $i$-th row of $W$.
Each hidden neuron contributes a rank-one term $\lambda_i w_iw_i^\top$, so $\rank A \leq d_1$. Let $\cF$ be the set of $d_0 \times d_0$ symmetric matrices of rank at most $d_1$, where $d_1 < d_0$ and let $\cF' \subset \cF$ be its singular locus: points where the rank drops below $d_1$. As recalled above, the Eckart-Young theorem implies that a best rank-at-most-$r$ approximation of a matrix $M$ has rank $r$ (not smaller) unless $\rank(M) < r$. That is, $\Vor_\cF(\cF') = \cF'$.
\end{example}

Our next result extends this matrix example in three directions: to all local minimizers (not only global ones), to every inner product and nondegenerate loss, and to every activation function. Informally, we show that given a target function $h$, a locally best rank-at-most-$d_1$ approximation of $h$ has rank~$d_1$ (not smaller) unless $\rank h < d_1$. See \Cref{fig:mlp}.

\begin{figure}
    \centering
    \includegraphics[width=0.5\linewidth]{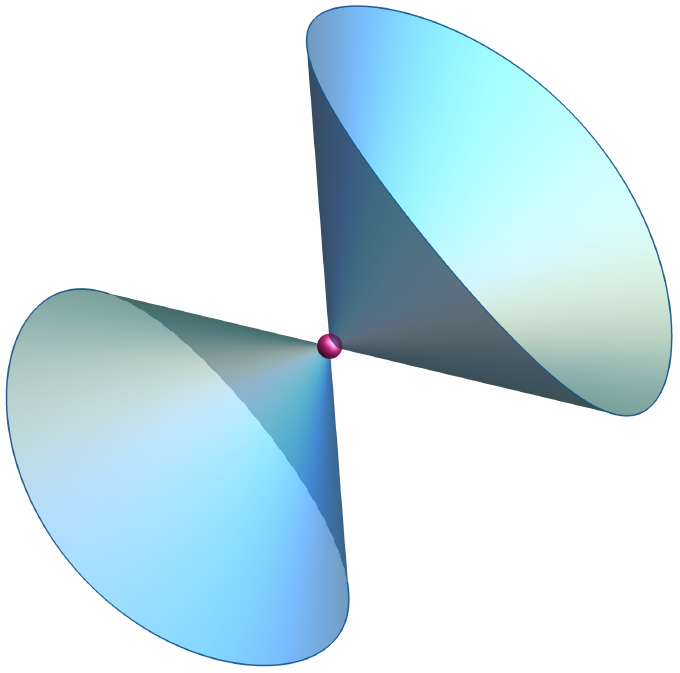}    
    \caption{
    \textbf{No proximity bias toward the origin for single-neuron cones}: $ \Vorloc_{\textcolor{Curve}{\cC}}(\textcolor{Origin}{0}) = N_{\textcolor{Curve}{\cC}}(\textcolor{Origin}{0}) = \{\textcolor{Origin}{0}\}$.}
    \label{fig:rank-one-cone}
\end{figure}

\begin{restatable}{theorem}{nobiasshallow}\label{thm:no-bias-shallow}
    Let $\cF$ be the neural model of
        a shallow MLP with architecture $\mathbf d = (d_0, d_1, d_2)$, $d_1 \geq 2$, and
        activations $(\tilde \rho, \operatorname{id})$. Let $\cH$ be the closure of $\linspan(\cF)$ with respect to a fixed inner product, and let $\cF'$ be the submodel with architecture $\mathbf d' = (d_0, d_1-1, d_2)$.
        Then, 
        \[
        \Vor_\cF(\cF') =  \Vorloc_\cF(\cF') = \cF'.
        \]
        That is, a target function $h \in \cH$ admits a (local) minimizer of $\ell_h\rvert_\cF$ in $\cF'$ if and only if $h \in \cF'$. The same holds with $\cF$ and $\cF'$ replaced by their closures.
\end{restatable}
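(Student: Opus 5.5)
By \Cref{prop:implicit-bias-criterion}, $\cF' \subseteq \Vor_\cF(\cF') \subseteq \Vorloc_\cF(\cF')$. So it suffices to show $\Vorloc_\cF(\cF') \subseteq \cF'$. The plan is a direct first-order argument exploiting the cone structure from \Cref{lem:shallow-sum-cones}.

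Let $f \in \cF'$ be a local minimizer of $\ell_h|_\cF$. Using \Cref{lem:shallow-sum-cones}, write $f = \sum_{i=1}^{d_1-1} c_i$ with $c_i \in \cC$. Since $0 \in \cC$, for every $g \in \cC$ the function $f + g$ lies in $\cF$: it is the sum of $d_1$ elements of $\cC$, the spare neuron carrying $g$. Moreover $\cC$ is closed under positive scaling, because $\rho(\langle w,x\rangle+\beta)v+b$ scales with $(v,b)$. Hence $f + t g \in \cF$ for all $t \geq 0$, and so $g \in T_\cF(f)$. The same reasoning with $-g \in \cC$ (flip $v,b$) shows $\cC \cup (-\cC) \subseteq T_\cF(f)$.

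Local minimality then gives $\langle h - f, g\rangle \le 0$ for all $g \in \pm\cC$. Concretely, $\ell_h(f+tg) = \ell_h(f) - 2t\langle h-f,g\rangle + t^2\|g\|^2$ must be at least $\ell_h(f)$ for small $t>0$. This also follows from the normal-cone inclusion in \Cref{prop:implicit-bias-criterion}. It forces $\langle h-f, g\rangle = 0$ for all $g\in\cC$, so $h - f \perp \linspan(\cC) = \linspan(\cF)$, again by \Cref{lem:shallow-sum-cones}. By continuity, $h-f$ is orthogonal to the closure $\cH$. Since $h - f \in \cH$, we get $h = f \in \cF'$.

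For nondegenerate losses the same argument gives $\nabla \ell_h(f) \perp \cH$, hence $\nabla\ell_h(f)=0$ and $f=h$.

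For the closures, take $f \in \overline{\cF'}$ and $g \in \cC$. Approximate $f$ by $f_n \in \cF'$; then $f_n + tg \in \cF$, so $f+tg \in \overline{\cF}$. Thus $\pm\cC \subseteq T_{\overline\cF}(f)$, and the argument above applies verbatim.

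The only delicate point is the closure case. Elements of $\overline{\cF'}$ need not decompose into $d_1-1$ neurons, so one must use the approximation argument rather than an explicit decomposition. One also needs $\overline{\cF'} + \cC \subseteq \overline{\cF}$, which holds by continuity of addition. Everything else is routine. In particular the hypothesis $d_1 \ge 2$ only ensures that $\cF'$ is a genuine architecture; the free neuron is always available.
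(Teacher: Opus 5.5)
Your proof is correct and follows the paper's argument. The paper also writes $\cF=\cF'+\cC$ with $\cC$ a symmetric cone whose span is dense in $\cH$, and concludes $\cC\subseteq T_\cF(f)$, hence $N_\cF(f)=\{0\}$ (via \Cref{lem:sym-cone-dense} and \Cref{lem:normal-cone-secants}). It handles the closures with the same observation that $f+\tfrac1n g\in\overline{\cF}$ for $f\in\overline{\cF'}$; your explicit expansion of $\ell_h(f+tg)$ simply unpacks that normal-cone step.
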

    \begin{proof}[Proof idea]
        We have $\cF' = \sum_{i=1}^{d_1-1} \mathcal C$ and $\cF = \cF' + \cC$, by \Cref{lem:shallow-sum-cones}. Given any $f \in \cF'$ its tangent cone at~$\cF$ contains~$\cC$, which spans the whole ambient space. Hence, the normal cone $N_\cF(f)$ is zero (see \Cref{fig:rank-one-cone}) and the conclusion follows from \Cref{prop:implicit-bias-criterion}.
    \end{proof}
    Details of all proofs in this section are in Appendix~\ref{sec:proofs-mlp}, and details of all figures are in Appendix~\ref{sec:figures}.
    \begin{corollary}
        Under the setup from \Cref{thm:no-bias-shallow}, there is no proximity bias toward $\cF'$ whenever $\cF'$ has empty interior in $\cH$. Thus, there is also no proximity bias toward (even) smaller subnetworks.
    \end{corollary}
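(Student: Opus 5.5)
The plan is to derive the statement from the preceding Theorem~\ref{thm:no-bias-shallow} together with \Cref{prop:implicit-bias-criterion}. By \Cref{prop:implicit-bias-criterion} and the theorem, $\Vorloc_\cF(\cF') = \cF'$. Proximity bias toward $\cF'$ means, by definition, that $\Vorloc_\cF(\cF')$ has nonempty interior in $\cH$. Since this set equals $\cF'$, proximity bias occurs exactly when $\cF'$ has nonempty interior. So if $\cF'$ has empty interior, there is no proximity bias. This is essentially a one-line deduction once the theorem is taken as given.

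For smaller subnetworks, let $\cF''$ be the model of architecture $(d_0, k, d_2)$ with $k < d_1 - 1$. Setting the extra neurons' output weights to zero (by \Cref{lem:shallow-sum-cones}, $0 \in \cC$) gives $\cF'' \subseteq \cF'$. Local Voronoi cells are unions over points, so $\Vorloc_\cF(\cF'') \subseteq \Vorloc_\cF(\cF') = \cF'$. If $\cF'$ has empty interior, then so does $\Vorloc_\cF(\cF'')$, and there is no proximity bias toward $\cF''$.

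The only genuine point to check is the monotonicity $\Vorloc_\cF(A) \subseteq \Vorloc_\cF(B)$ for $A \subseteq B$. This is immediate from the definition as a union of $\Vorloc_\cF(f)$. One could also note that the hypothesis on $\cF'$ is natural: for instance, when $\dim\cH<\infty$ and $\dim \cF' < \dim \cH$ (a semialgebraic set of lower dimension), the interior is automatically empty. The closure version follows identically from the closure statement of the theorem. I expect no real obstacle; the one subtle point is to make sure the reduction $\cF''\subseteq\cF'$ uses the same ambient $\cH$, which holds because $\cH$ is fixed as $\overline{\linspan(\cF)}$.
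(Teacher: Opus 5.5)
Your proposal is correct and matches the paper, which gives no separate proof because the corollary is meant as this immediate deduction: \Cref{thm:no-bias-shallow} gives $\Vorloc_\cF(\cF')=\cF'$, so proximity bias toward $\cF'$ occurs exactly when $\cF'$ has nonempty interior. For a smaller subnetwork $\cF''\subseteq\cF'$ (using $0\in\cC$), monotonicity of the union gives $\Vorloc_\cF(\cF'')\subseteq\Vorloc_\cF(\cF')=\cF'$, which has empty interior by hypothesis.
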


\begin{remark}[Smallest Voronoi cells]
    The equalities $\Vor_\cF(\cF') = \Vorloc_\cF(\cF') = \cF'$ in \Cref{thm:no-bias-shallow} mean the cells are as small as possible, by \Cref{prop:implicit-bias-criterion}.
\end{remark}

    \begin{remark}[Saddle points in parameter space]
        Our results pertain to function space, but have implications in parameter space. Inactive neurons (i.e., subnetworks) lead to critical points of the loss in parameter space for a nonempty open set of target functions~\citep{kohn2026singular}. Our tangent cone computation implies that these critical points cannot be local extrema at parameters where the parametrization is locally open (unless the loss vanishes there), so they must be saddle points.
    \end{remark}
    \begin{remark}[Bounding normal cones]\label{rem:bound-normal-parametrization}
    Conversely, results in parameter space can have implications in function space. Given a parametrization~$\phi: \Theta \to \cF$, we have $N_{\cF}(\phi(\theta)) \subseteq \im(d\phi \rvert_ \theta)^\perp$, the orthogonal complement of the image of the differential of $\phi$ at~$\theta$.
    This containment can sometimes be used to deduce that there is no proximity bias, but not always (e.g. for MLPs) since we may have strict inequalities $\dim N_{\cF}(\phi(\theta)) < \dim  \im(d\phi \rvert_ \theta)^\perp$.
    In Appendix~\ref{sec:cnn-proof} we use this to show that there is no proximity bias toward subnetworks of convolutional neural networks (CNNs) with polynomial activations.
    \end{remark}

\subsection{Deep Networks}\label{sec:deep}

In this section, we consider deep MLPs.
We start by analyzing proximity bias in the linear case.

\begin{example}[Deep linear networks]
    If all activations are the identity, the neural model of architecture $(d_0, \dots, d_L)$ consists of affine maps $\RR^{d_0} \to \RR^{d_L}, x \mapsto Wx + b$ with~$\rank W \leq r := \min_k d_k$, so it is the neural model of the shallow linear network $(d_0, r, d_L)$. If we remove a neuron from a bottleneck layer $k$ (i.e., one with $d_k = r$), there is no proximity bias toward the corresponding submodel, by \Cref{thm:no-bias-shallow}. Otherwise, deleting a neuron of a hidden layer does not change the model ($\mathcal F' = \mathcal F$), so there is proximity bias toward~$\cF'$.
\end{example}

Unlike the shallow case, neural models of deep MLPs can no longer be expressed as a sum of single-neuron cones, in general. However, for some neural submodels $\cF' \subseteq \cF$ we can find a cone $\tilde \cC$ whose span is dense in~$\cH$ such that $\cF' + \tilde \cC \subseteq \cF$.
This inclusion is enough to conclude that there is no proximity bias toward $\cF'$.

Next, we show that there is no proximity bias toward subnetworks with inactive neurons in different situations: when one neuron is inactive in every hidden layer, when a neuron of the first hidden layer is inactive and the subnetwork is small enough, when a neuron of the last hidden layer is inactive and the earlier layers are wide enough, and, for analytic activations or polynomial activations of sufficiently high degree, when a neuron of any hidden layer is inactive.

\paragraph{All hidden layers.} The idea behind \Cref{thm:no-bias-shallow} can be generalized to deep MLPs when our submodel is defined by having an inactive neuron in every hidden layer, which we call an inactive path (cf. \cite{alexandr2025relu}).

\begin{restatable}[No bias toward inactive paths]{theorem}{inactivepaths}\label{thm:inactive-paths}
    Let $\mathcal{F}$ be the neural model of an MLP with architecture $\mathbf d = (d_0, d_1, \dots, d_{L-1}, d_L)$ and let $\mathcal{F}'$ be the submodel with architecture $(d_0, d_1-1, \dots, d_{L-1}-1, d_L)$. Assume the activations are continuous. Then,
    \[
    \Vor_\cF(\cF') = 
    \Vorloc_{\mathcal{F}}(\mathcal F') = \mathcal F'.
    \]
\end{restatable}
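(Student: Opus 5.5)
The approach mirrors the proof idea of \Cref{thm:no-bias-shallow}. I will show that for every $f \in \cF'$ the tangent cone $T_\cF(f)$ contains a cone whose linear span is dense in $\cH$. Then $N_\cF(f)=\{0\}$, and \Cref{prop:implicit-bias-criterion} gives $\Vorloc_\cF(\cF') \subseteq \bigcup_{f\in\cF'}(f+\{0\}) = \cF'$. Combined with the trivial inclusions $\cF'\subseteq \Vor_\cF(\cF')\subseteq\Vorloc_\cF(\cF')$, this yields the claimed equalities. As the remark preceding the theorem suggests, the concrete goal is to exhibit a cone $\tilde\cC$ with $\overline{\linspan(\tilde\cC)}=\cH$ and $\cF'+\tilde\cC\subseteq\cF$.

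\textbf{Step 1 (the path construction).} Write $f\in\cF'$ as an MLP with widths $(d_0,d_1-1,\dots,d_{L-1}-1,d_L)$. Embed it in $\cF$ by adding one extra neuron $n_k$ in each hidden layer $k$. Neuron $n_1$ receives $w^\top x+\beta$ from the input. For $k\geq 2$, neuron $n_k$ receives input only from $n_{k-1}$, with scalar weight $a_k$ and bias $c_k$. The last neuron $n_{L-1}$ feeds the output through a vector $v\in\RR^{d_L}$. The new neurons send zero weight into the old neurons, so the old neurons are unaffected. Then $f+g\in\cF$, where $g(x)=v\,\rho_{L-1}(a_{L-1}\rho_{L-2}(\cdots\rho_1(w^\top x+\beta)\cdots)+c_{L-1})$. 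The output bias is shared, so it absorbs nothing new. Let $\tilde\cC$ be the set of all such $g$; it is closed under positive scaling via $v$, so it is a cone. Since $f+tg\in\cF$ for all $t>0$ and $g\in\tilde\cC$, every element of $\tilde\cC$ lies in $T_\cF(f)$, using $f_n=f+\tau_n g$. Hence $\linspan(\tilde\cC)$ lies in the closed linear span of $T_\cF(f)$, and so $N_\cF(f)\subseteq \tilde\cC^{\perp}$.

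\textbf{Step 2 (density of $\linspan(\tilde\cC)$).} This is the main obstacle: I must show $\tilde\cC^\perp=\{0\}$ in $\cH=\overline{\linspan(\cF)}$. The plan is to show that $\linspan(\tilde\cC)$ contains $\linspan(\cF)$, or at least is dense in it. My first attempt is to observe that $\tilde\cC$ is itself the neural model of the width-one architecture $(d_0,1,\dots,1,d_L)$. So I need every element of $\cF$ to be a limit of finite sums of width-one deep networks. The constant output bias in $\cF$ is handled by choosing $w=0$: then $g$ is the constant $v\rho_{L-1}(\cdots)$, and since $\rho_{L-1}$ is nonconstant, some choice of constants makes this value nonzero. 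A general element of $\cF$, however, has wide intermediate layers, and widths do not obviously decompose into sums of narrow paths. So I would look for a shortcut: by continuity of the $\rho_k$, compose near-identity affine reparametrizations. Scaling an inner path value to be small and then un-scaling approximates a linear pass-through only if the $\rho_k$ are differentiable, which is not assumed.

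A more robust alternative is to argue directly that $\tilde\cC^\perp\cap\cH=\{0\}$ by replacing $\tilde\cC$ with the larger cone obtained by keeping $f$'s old neurons as inputs to the new path. Concretely, I would let the new neuron $n_k$ read from all of layer $k-1$, including the old neurons, while still sending nothing back. For the old part fixed at the parameters of $f$, this gives a cone $\tilde\cC_f$ with $f+\tilde\cC_f\subseteq\cF$.

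\textbf{Step 3 (closures).} If the density argument holds at every $f\in\cF'$, then $N_\cF(f)=\{0\}$ on $\cF'$ and the statement follows. If the theorem is also wanted for closures, the same construction applies to limits, since $\overline{\cF'}+\tilde\cC\subseteq\overline{\cF}$ by continuity of addition.
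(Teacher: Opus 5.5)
Your Step 1 is the paper's construction: the width-one path submodel $\tilde\cC$ of architecture $(d_0,1,\dots,1,d_L)$ satisfies $\cF'+\tilde\cC\subseteq\cF$. The reduction to $\overline{\linspan\tilde\cC}=\cH$ is also the right one. One inference needs fixing. From ``$\linspan\tilde\cC$ lies in the span of $T_\cF(f)$'' you cannot conclude $N_\cF(f)\subseteq\tilde\cC^\perp$ for a general cone: a half-line has a half-space as its polar. The conclusion holds here because $\tilde\cC$ is \emph{symmetric}. Since $v$ ranges over all of $\RR^{d_L}$, both $g$ and $-g$ are tangent directions.

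The genuine gap is Step 2: you never prove density, and both routes you sketch aim at the wrong target. You do not need to write elements of $\cF$ as limits of sums of paths. It suffices to show that sums of paths are dense in the whole ambient function space, which contains $\cF$ automatically.

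\textbf{Polynomial case.} Set all scalar path weights to $1$ and biases to $0$. Then $\tilde\cC$ contains $\sigma(\langle w,x\rangle+\beta)\,v$, where $\sigma=\rho_{L-1}\circ\cdots\circ\rho_1$ has degree $R=r_1\cdots r_{L-1}$. Expand $\sigma(t_1x_{i_1}+\dots+t_kx_{i_k}+s)$ for $k\le R$. The coefficient of $t_1\cdots t_ks^{R-k}$ is a nonzero multiple of $x_{i_1}\cdots x_{i_k}$, so these functions span all of $\cP_{\le R}$. Every function in $\cF$ has degree at most $R$, so the span is everything (\Cref{lem:ridge}).

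\textbf{Non-polynomial case.} The linear pass-through you wanted does not need differentiability, because you are allowed linear combinations. On a compact interval, the identity $s\mapsto s$ is a uniform limit of finite sums $\sum_j\lambda_j\rho_i(w_js+c_j)$. This follows from the universal approximation theorem if $\rho_i$ is not polynomial, and holds exactly by the ridge lemma if $\rho_i$ is a polynomial of degree $\ge1$. Substituting $s=p(x)$ for a path output $p$ shows that the uniform closures $V_i$ of $\linspan\Pi_i$ increase with $i$. Now take a non-polynomial layer $i$. After a nonconstant scalar path $\tau$, approximate $\arctan$ in the same way. This puts into $V_i$ all ridge functions $\psi(\langle a,x\rangle+b)$ with $\psi=\arctan\circ\tau$, which is bounded and nonconstant, hence non-polynomial. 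The universal approximation theorem then gives $V_i=C(K)$. Tensoring with $v\in\RR^{d_L}$ gives density in $L^2(\mu;\RR^{d_L})$ (\Cref{lem:paths}).

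Your alternative cone $\tilde\cC_f$ does not remove this obstacle; it only makes the density question depend on $f$.
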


\begin{proof}[Proof idea]
Using the inactive path we construct tangent directions $\rho_{L-1} \circ \cdots \circ \rho_{1}(\langle w, x \rangle + \beta) \; v$ for all $f \in \cF'$, and these directions span a dense subset of $\cH$. 
\end{proof}

\paragraph{Last hidden layer.} We use the observation that a deep MLP  is a shallow MLP whose input space is a space of functions rather than the vector space $\RR^{d_0}$. For a parameter $\theta=(\mathbf W,\mathbf b)$ let
\[
h_\theta=\rho_{L-2}\circ A_{L-2}\circ\dots\circ\rho_1\circ A_1\colon\ \mathbb R^{d_0}\to\mathbb R^{d_{L-2}}
\]
be the \emph{feature map} computed by the first $L-2$ layers ($h_\theta(x)=x$ for $L=2$). Fixing these layers gives a shallow MLP on the features $h_\theta$:
\[
f_\theta(x)=b_L+\sum_{i=1}^{d_{L-1}}\rho_{L-1}\bigl(\langle w_i,h_\theta(x)\rangle+\beta_i\bigr)\,v_i ,
\]
where $w_i$ is the $i$-th row of $W_{L-1}$, $v_i$ the $i$-th column
of $W_L$, and
$\beta_i$ the $i$-th entry of $b_{L-1}$. 
For fixed weights in the first $L-2$ layers, a neuron in the last hidden layer computes $\rho_{L-1}\circ q$ with $q$ in the space of functions $V_\theta=\operatorname{span}\{1,h_{\theta,1},\dots,h_{\theta,d_{L-2}}\}$ spanned by the features, just as a neuron of a shallow network computes $\rho\circ q$ with $q\in\operatorname{span}\{1,x_1,\dots,x_{d_0}\}$: the deep network is a shallow network whose input space is the function space $V_\theta$. In the language of kernel methods, $V_\theta$ is the reproducing kernel Hilbert space of the kernel $k_\theta(x,x')=1+\langle h_\theta(x),h_\theta(x')\rangle$.

This viewpoint explains what happens when neurons in the last hidden layer $(L-1)$ are inactive.
Define
\[
\cC_\theta =  \!\{\rho_{L-1}(\langle w, h_\theta\rangle + \beta) v  : w \in \RR^{d_{L-2}}, \beta \in \RR, v \in \RR^{d_L}  \!\},
\]
the set of functions computed by a single neuron of the last hidden layer with input features $h_\theta$, which is a symmetric cone.
We say that $\theta$ has \emph{complete features} 
if the span of $\cC_\theta$ is dense in $\cH$, that is, the deep network has the same expressivity as a shallow network defined on a function space. For polynomials, this occurs when hidden layers are wide enough (see \Cref{cor:mlp-last-layer-polynomials}). In the following, a representation of $f \in \cF'$ is a subnetwork parameter~$\theta$ with $f_\theta = f$.

\begin{restatable}{proposition}{completefeaturesorthogonal}\label{thm:complete-features-orthogonal}
    Let $\cF$ be the neural model of an MLP with arbitrary activations,
$L\ge3$ layers, $d_{L-1}\ge2$ and let $\cF'$ be the subnetwork with one neuron of the last hidden layer inactive.
If every $f \in \mathcal F'$ has a representation with complete features, then 
$\Vor_\cF(\cF') =  \Vorloc_{\mathcal F}(\cF') = \cF'$.
\end{restatable}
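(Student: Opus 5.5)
The argument follows the shallow case (\Cref{thm:no-bias-shallow}). By \Cref{prop:implicit-bias-criterion} we always have $\cF' \subseteq \Vor_\cF(\cF') \subseteq \Vorloc_\cF(\cF') \subseteq \bigcup_{f\in\cF'}(f+N_\cF(f))$. It therefore suffices to show that $N_\cF(f)=\{0\}$ for every $f\in\cF'$, since then the right-hand side collapses to $\cF'$.

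Fix $f\in\cF'$ and pick a representation $\theta$ of $f$ with complete features. Here $\theta$ is a subnetwork parameter in which the distinguished neuron $j$ of layer $L-1$ is inactive, so that neuron contributes nothing to $f_\theta$.
\begin{itemize}
\item \textbf{Step 1.} Keep the first $L-2$ layers fixed, so the feature map $h_\theta$ is unchanged. Also keep the parameters of the other $d_{L-1}-1$ neurons of the last hidden layer and the bias $b_L$ fixed. Reactivate neuron $j$ by giving it incoming weights $(w,\beta)$ and outgoing column $v$. The resulting network lies in $\cF$ and computes $f+\rho_{L-1}(\langle w,h_\theta\rangle+\beta)\,v$. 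This shows $f+\cC_\theta\subseteq\cF$.
\item \textbf{Step 2.} Since $\cC_\theta$ is a cone (closed under positive scaling of $v$), every $c\in\cC_\theta$ gives the curve $f+tc\in\cF$ for $t>0$. Hence $c\in T_\cF(f)$, i.e.\ $\cC_\theta\subseteq T_\cF(f)$.
\item \textbf{Step 3.} Let $w\in N_\cF(f)$. Then $\langle c,w\rangle\le 0$ for all $c\in\cC_\theta$. Because $\cC_\theta$ is symmetric ($v\mapsto -v$), also $-c\in\cC_\theta$, so $\langle c,w\rangle=0$. By linearity and continuity, $w$ is orthogonal to $\overline{\linspan(\cC_\theta)}$, which equals $\cH$ by complete features. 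Hence $w=0$.
\end{itemize}
Since $f\in\cF'$ was arbitrary, the inclusion chain gives $\Vor_\cF(\cF')=\Vorloc_\cF(\cF')=\cF'$.

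The delicate point is Step 1. We must check that the representation with complete features is a genuine parameter of the full architecture with neuron $j$ zeroed out, and that the submodel convention identifies ``inactive'' with that neuron contributing the zero function. If $\cF'$ is defined by deleting the neuron, we embed $\theta$ into $\cF$ by choosing $v_j=0$. Reactivating then only requires changing $(w_j,\beta_j,v_j)$ and leaves every other coordinate untouched.

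No regularity of the activations is needed. The tangent directions come from exact rays $f+tc\in\cF$, not from differentiating the parametrization. This is why the proposition allows arbitrary activations. The hypothesis $d_{L-1}\ge2$ guarantees that $\cF'$ is the nontrivial architecture $(d_0,\dots,d_{L-1}-1,d_L)$, so the construction makes sense.
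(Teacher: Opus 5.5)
Your proof is correct and is essentially the paper's argument. The paper also notes that $f_\theta+\cC_\theta\subseteq\cF$ for a representation $\theta$ with complete features, runs the proof of \Cref{lem:normal-cone-secants} pointwise at $f_\theta$ to get $N_\cF(f_\theta)=\{0\}$ from the symmetric cone with dense span, and concludes via \Cref{prop:implicit-bias-criterion}.
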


\begin{proof}[Proof idea]
    The deleted neuron of the last hidden layer can be put back with arbitrary incoming weights $w$ and arbitrarily small outgoing weights $v$ of either sign. This gives lines in $\cF$ through $f$ in the directions $\rho_{L-1}(\langle w, h_\theta \rangle + \beta) v \in \cC_\theta$. So $N_\cF(f)$ is orthogonal to $\linspan \cC_\theta$, and it is zero for complete features.
\end{proof}

\begin{restatable}{corollary}{mlplastlayerpolynomials}\label{cor:mlp-last-layer-polynomials}
    Under the assumptions of \Cref{thm:complete-features-orthogonal} with nonconstant polynomial activations of degree $r_1, \dots, r_{L-1}, 1$, if 
    $
    d_k \geq \binom{d_0 + r_1 \cdots r_k}{d_0} -1 
    $
    for all $k = 1, \dots, L-2$,
    then $\Vor_\cF(\cF') = \Vorloc_{\mathcal F}(\cF') = \cF'$. 
\end{restatable}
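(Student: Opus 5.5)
The plan is to reduce the corollary to \Cref{thm:complete-features-orthogonal}. It suffices to show that every $f \in \cF'$ has a representation $\theta$ with complete features, i.e.\ that $\linspan \cC_\theta$ is dense in $\cH$. With polynomial activations $\cH$ is finite-dimensional, so density is just equality of spans. By \Cref{lem:shallow-sum-cones}, $\linspan(\cF) = \linspan$ of the single-neuron cone of the whole network. So $\cH$ consists of (tuples of) polynomials of degree at most $r_1\cdots r_{L-1}$ in $d_0$ variables. The goal is therefore to show that the last hidden neuron, fed with features $h_\theta$, can produce every polynomial $\rho_{L-1}(q)$ with $q$ ranging over enough polynomials to span this space.

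\textbf{Step 1 (wide layers produce all polynomials).} Set $D_k = r_1\cdots r_k$ and $N_k = \binom{d_0 + D_k}{d_0}$, the dimension of polynomials of degree $\le D_k$ in $d_0$ variables. I would show by induction on $k \le L-2$ that, given $f\in\cF'$ with some representation, we can modify it without changing $f$ so that $1, h_{\theta,1},\dots,h_{\theta,d_k}$ span all polynomials of degree $\le D_k$. This needs $d_k \ge N_k - 1$ neurons, matching the hypothesis.

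The obstruction is that the neurons of layers $1,\dots,L-2$ are already used by $f$. So I cannot pick them freely; I can only reparametrize. The key trick is to exploit the inactive last-hidden-layer neuron differently. Perhaps instead I use that $f$ is itself in the closure of such models. For a cleaner route, I would first observe that the set of parameters with complete features is open and dense (it is the complement of a proper algebraic subset, being defined by the nonvanishing of some maximal minor), once I show it is nonempty. Nonemptiness: the span of $\{\rho_k(\langle w, g\rangle+\beta)\}$ over $g$ ranging in polynomials of degree $\le D_{k-1}$ equals all polynomials of degree $\le D_k$. This follows from the shallow case: powers of linear forms span, and $\rho_k$ nonconstant of degree $r_k$ yields $(\ell)^{r_k}$ after differencing in $\beta$ and scaling. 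Then choosing $d_k \ge N_k-1$ generic neurons gives a basis together with the constant.

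\textbf{Step 2 (every $f$, not generic $\theta$).} This is the main obstacle: \Cref{thm:complete-features-orthogonal} requires complete features at \emph{some} representation of \emph{every} $f\in\cF'$. My plan is to show that, under the width hypothesis, layers $1,\dots,L-2$ can be reparametrized. Since layer $k$ features span all polynomials of degree $\le D_k$ for generic weights, any feature map $h$ used by $f$ has coordinates of degree $\le D_{L-2}$. We can therefore choose a complete-feature $\theta'$ in the first $L-2$ layers and a linear map $M$ with $h_\theta = M h_{\theta'} + c$, exactly because $h_{\theta'}$ together with $1$ spans everything. We then absorb $M,c$ into $W_{L-1},b_{L-1}$. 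This gives a representation of $f$ with complete features at layer $L-2$.

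\textbf{Step 3 (conclude).} With $V_{\theta'}$ equal to all polynomials of degree $\le D_{L-2}$, the last-layer cone $\cC_{\theta'}$ contains $\rho_{L-1}(q)v$ for all such $q$ and all $v$. By the same power-spanning argument, these span all polynomial tuples of degree $\le D_{L-1}$, i.e.\ $\cH$. Then \Cref{thm:complete-features-orthogonal} gives $\Vor_\cF(\cF')=\Vorloc_\cF(\cF')=\cF'$. The delicate points are the differencing argument extracting top-degree powers from a general polynomial $\rho$, and checking that the constant $1$ is available in $V$ at each stage.
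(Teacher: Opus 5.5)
Your proposal is correct and follows essentially the paper's proof: choose the first $L-2$ layers (via \Cref{lem:ridge} and induction) so that $1,h^{(L-2)}_1,\dots,h^{(L-2)}_{d_{L-2}}$ span $\cP_{\le r_1\cdots r_{L-2}}$, rewrite the layer-$(L-1)$ pre-activations of any representation of $f$ in these features by absorbing the change of basis into $W_{L-1},b_{L-1}$, and conclude with \Cref{lem:ridge} and \Cref{thm:complete-features-orthogonal}. The open-and-dense detour in Step 1 is unnecessary, since your Step 2 already handles every $f$. Your appeal to \Cref{lem:shallow-sum-cones} for a deep network is misplaced but harmless, because $\cH\subseteq\RR^{d_L}\otimes\cP_{\le r_1\cdots r_{L-1}}$ follows from degree counting alone.
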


\begin{proof}[Proof idea]
    If the first $L-2$ layers are wide enough, they can compute a basis of all polynomials of degree at most $r_1 \cdots r_{L-2}$.
    Every function of $\cF'$ can be computed in this basis, and then a single new neuron in layer $L-1$ gives all polynomials of degree at most $r_1 \cdots r_{L-1}$, so the features are complete.
\end{proof}

\paragraph{First hidden layer.} The question of what happens when a neuron is deleted in the first layer reduces to the shallow setting of Section~\ref{sec:shallow}, but with a different loss.
Cut the MLP after the affine map of layer 2: $f_{\mathbf{W}, \mathbf{b}} = \Psi \circ z$, where $z = W_2 \tilde{\rho}(W_1 x + b_1) + b_2$ is a shallow MLP with architecture $(d_0, d_1, d_2)$ and $\Psi$ is the map computed by the remaining layers. This gives the pulled-back loss $\tilde{\ell}_h(z) = \ell_h(\Psi \circ z)$, which may be degenerate. Indeed, its gradient $\nabla \tilde \ell_h(z) = J^\top \nabla \ell_h(f)$, with $J = D\Psi \circ z$, vanishes whenever $\nabla \ell_h(f) \in \ker J^\top$. Local Voronoi cells
may not be trivial, but we can exclude proximity bias when $\mathcal{\cF'}$ is small enough.

\begin{restatable}{proposition}{mlpfirstlayerpoly}\label{thm:mlp-first-layer-poly}
    Let $\cF$ be the neural model of a deep MLP with architecture $\mathbf{d} = (d_0, \dots, d_L)$ and nonconstant polynomial activations $\boldsymbol \rho$. Let $\deg \rho_1 = r_1$ and let $\mathcal F'$ be the submodel with width
$d_1-1$ in the first hidden layer and all other layers unchanged. 
If
\[
\dim \mathcal F' < \binom{d_0 + r_1}{d_0} = \dim \RR[x_1, \dots, x_{d_0}]_{\leq r_1}
\]
then there is no proximity bias toward $\mathcal F'$.
\end{restatable}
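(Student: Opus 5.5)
The plan is to apply \Cref{prop:implicit-bias-criterion} together with a dimension count. Since the activations are polynomial, $\cH$ is finite-dimensional and everything is semialgebraic. It therefore suffices to show that
\[
\dim \bigcup_{f\in\cF'}\bigl(f+N_\cF(f)\bigr)\;\le\;\dim\cF'+\max_{f\in\cF'}\dim N_\cF(f)\;<\;\dim\cH .
\]
The second inequality would follow from the hypothesis $\dim\cF'<\binom{d_0+r_1}{d_0}$ once I prove the bound
\[
\dim N_\cF(f)\le \dim\cH-\binom{d_0+r_1}{d_0}\quad\text{for every } f\in\cF'.
\]
Equivalently, the tangent cone $T_\cF(f)$ should contain a linear subspace of dimension at least $\binom{d_0+r_1}{d_0}$.

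\textbf{Step 1: tangent lines from the deleted neuron.} Write $f=\Psi\circ z$ as in the discussion before the statement, with $z$ a shallow network of width $d_1-1$. I would reactivate the deleted first-layer neuron with incoming affine map $\ell(x)=\langle w,x\rangle+\beta$ and outgoing column $t u\in\RR^{d_2}$. This yields the curve
\[
t\mapsto \Psi\bigl(z+t\,\rho_1(\ell)\,u\bigr)\in\cF,
\]
which is defined for $t$ of both signs. Expanding (Ψ is polynomial) gives
\[
f+t\,D\Psi_{z(x)}\bigl[\rho_1(\ell(x))u\bigr]+O(t^2),
\]
so both $\pm D\Psi_{z}[\rho_1(\ell)u]$ lie in $T_\cF(f)$. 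Consequently $N_\cF(f)$ is orthogonal to
\[
S_f=\operatorname{span}\{D\Psi_z[\rho_1(\ell)u]\}.
\]

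\textbf{Step 2: spanning all polynomials of degree at most $r_1$.} Next I would use the classical fact that, for $\rho_1$ of degree exactly $r_1$, the functions $\rho_1(\langle w,x\rangle+\beta)$ span $\RR[x]_{\le r_1}$. To see this, take finite differences or derivatives in $\beta$ and a scaling of $w$ to extract the powers $(\langle w,x\rangle+\beta)^k$ for $k\le r_1$; powers of affine forms span. By linearity of $D\Psi_z$ in its argument, $S_f\supseteq\{D\Psi_{z(x)}[p(x)u]: p\in\RR[x]_{\le r_1}\}$. If there is a $u$ for which $g(x):=D\Psi_{z(x)}u$ is a nonzero polynomial vector, then the map $p\mapsto p\,g$ is injective because $\RR[x]$ is a domain. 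This gives $\dim S_f\ge\binom{d_0+r_1}{d_0}$, as needed.

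\textbf{Main obstacle.} The hard step is the degenerate case, in which $D\Psi_{z(x)}\equiv0$ for every $u$ in the chosen representation, so that $S_f=0$. Examples include a zero downstream weight matrix or $\Psi$ being critical along the whole image of $z$. I would first exploit freedom in the representation: the model only constrains $f$, so I would look for another parameter of the subnetwork with $f_\theta=f$ and nondegenerate derivative, for instance by rescaling or by inserting neurons whose contributions cancel. If that fails, I would bound the degenerate locus $E\subseteq\cF'$ separately. One route is to show that $f$ is then essentially independent of $z$, which forces $f$ into a much smaller submodel, such as constants or networks depending only on later layers. Then I would either find other tangent directions at such $f$ (reactivating neurons in later layers as in \Cref{thm:no-bias-shallow}), or show that $\dim E+\dim \cH$-codimension bookkeeping still gives
\[
\dim\bigcup_{f\in E}\bigl(f+N_\cF(f)\bigr)<\dim\cH .
\]
Combining the generic and degenerate pieces with \Cref{prop:implicit-bias-criterion} would then give empty interior of $\Vorloc_\cF(\cF')$.
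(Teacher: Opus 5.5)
Your Steps 1 and 2 are the paper's argument at nonconstant points. Reinserting the neuron gives $N_\cF(f)\perp J\,g\,e_l$ for all $g\in\cP_{\le r_1}$, where $J=D\Psi\circ z$, and injectivity of $g\mapsto g\,Je_l$ yields $\binom{d_0+r_1}{d_0}$ independent constraints.

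The genuine gap is the degenerate case, which you leave open. You list several routes (changing the representation, inserting cancelling neurons, ``networks depending only on later layers'', extra tangent directions from later layers) but carry none of them out. So the bound $\dim N_\cF(f)\le\dim\cH-\binom{d_0+r_1}{d_0}$ is not established on the degenerate locus, and without it your dimension count does not close.

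The missing idea is short. Since $x$ enters $f$ only through $z$, the chain rule gives, for \emph{any} representation,
\[
Df \;=\; J\,Dz \;=\; \sum\nolimits_l (Je_l)(\nabla z_l)^\top .
\]
If every column $Je_l$ vanishes identically, then $Df\equiv 0$ and $f$ is constant. Equivalently, at every nonconstant $f\in\cF'$ \emph{every} representation has a nonzero column $Je_l$, so no search over representations is needed. There are also no nonconstant ``networks depending only on later layers''.

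At a constant $c$ the normal cone vanishes for a different reason. The output bias gives $c+\cF=\cF$, and scaling $(W_L,b_L)$ makes $\cF$ a symmetric cone. Hence $c+tg\in\cF$ for all $t\in\RR$ and $g\in\cF$, so $N_\cF(c)\perp\linspan\cF=\cH$, i.e.\ $N_\cF(c)=\{0\}$ (this is \Cref{lem:const}).

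With these two facts your normal-cone bound holds at every point of $\cF'$, and the count finishes the proof. To make your first inequality rigorous, you should also note that the incidence set $\{(f,\nu):f\in\cF',\ \nu\in N_\cF(f)\}$ is semialgebraic (Tarski--Seidenberg). Alternatively, as in \Cref{lem:count}, replace $N_\cF(f)$ by the linear space cut out by the parametrized constraints $\langle \nu, J_{\theta'}p_j e_l\rangle=0$.
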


\begin{proof}[Proof idea]
    The reinserted neuron gives more linearly independent directions than the dimension of $\cF'$, so the union of translated normal cones from \Cref{prop:implicit-bias-criterion} has positive codimension.
\end{proof}

\paragraph{Any hidden layer.}
A neuron of hidden layer $k$ combines both situations. It reads the features computed by the previous layers, as in the last layer, and its output is processed by the subsequent layers, as in the first layer. Cutting after the affine map of layer $k+1$, $f_\theta = \Psi \circ z$, where $z$ is a shallow network on the features $h_\theta^{(k-1)}$ trained with the pulled-back loss $\tilde \ell_h(z) = \ell_h(\Psi \circ z)$.

    For polynomial MLPs, there is no proximity bias toward a submodel $\mathcal{F}'$ with width $d_{k}-1$ in a hidden layer $k$, provided the $d_j$ are sufficiently high, 
by \Cref{thm:mlp-first-layer-poly} and \Cref{cor:mlp-last-layer-polynomials}. 
    The width conditions are that
    $d_j \ge \binom{d_0 + r_1 \cdots r_j}{d_0} - 1$ for all $j < k$ and we also require $\dim \cF' < \binom{d_0+r_1\cdots r_k}{d_0}$.

These width conditions can be dropped when the activation in layer $k$ has sufficiently high degree.

\begin{restatable}{proposition}{anylayerpoly}\label{prop:any-layer-poly}
    Let $\cF$ be the neural model of a deep MLP with architecture $\mathbf{d} = (d_0, \dots, d_L)$ and nonconstant polynomial activations $\boldsymbol \rho$, and let $\cF'$ be the submodel with width $d_k - 1$ in a hidden layer $k$. If $\deg \rho_k \geq \dim \cF'$, there is no proximity bias toward $\cF'$.
\end{restatable}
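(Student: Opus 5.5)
We will use the counting strategy of \Cref{thm:mlp-first-layer-poly}. The aim is to show that the set $\bigcup_{f\in\cF'}(f+N_\cF(f))$ has empty interior, which rules out proximity bias by \Cref{prop:implicit-bias-criterion}. Here $\cH$ is finite-dimensional and everything is semialgebraic, so it suffices to bound dimensions. The plan is to show that for every $f\in\cF'$ the tangent cone $T_\cF(f)$ contains more than $\dim\cF'$ linearly independent directions, i.e. $\dim \linspan T_\cF(f) \ge \dim\cF'+1$. Then $N_\cF(f)\subseteq (\linspan T_\cF(f))^\perp$ has dimension at most $\dim\cH-\dim\cF'-1$. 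Taking the union over the $\dim\cF'$-dimensional family $\cF'$, the semialgebraic incidence set has dimension at most $\dim\cH-1$, so its image has empty interior.

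\textbf{Producing the directions.} Fix $f\in\cF'$ with a subnetwork parameter $\theta$. Let $h=h^{(k-1)}_\theta$ be the feature map feeding layer $k$, and write $f=\Psi\circ z$ with $z$ the output after the affine map of layer $k+1$. Reinsert the inactive neuron in layer $k$ with incoming weights $(w,\beta)$ and outgoing column $tv\in\RR^{d_{k+1}}$, where $t\to0$. This is a curve in $\cF$ through $f$, and its derivative at $t=0$ is
\[
D\Psi|_{z}\bigl[\rho_k(\langle w,h\rangle+\beta)\,v\bigr].
\]
Since $t$ may have either sign, these are genuine lines in the tangent cone, and so $\linspan T_\cF(f)$ contains $J(\linspan\{\rho_k(q)v\})$, where $q$ ranges over affine functions of $h$ and $J=D\Psi\circ z$. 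Now write $\rho_k(s)=\sum_{j=0}^{r}c_js^j$ with $c_r\neq0$ and $r=\deg\rho_k$. Replacing $q$ by $\lambda q+\beta$ and varying $\lambda,\beta$, then taking finite differences / Vandermonde arguments, shows that the span contains $q^j v$ for every $j\le r$ (at least $q^r v$, and with the bias shift all lower powers as well). In particular it contains $\ell^j\,v$ for a fixed nonconstant affine function $\ell=\langle w,h\rangle+\beta$ and $j=0,\dots,r$.

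\textbf{Independence after pushing through $\Psi$.} This is the main obstacle: we need $J$ to keep at least $\dim\cF'+1$ of these directions independent. The approach is to choose $v$ and then look at the compositions $J[\ell^j v]$. Because $\Psi$ is polynomial, $J[g\,v](x)=\sum_i \partial_i\Psi(z(x))\,v_i\,g(x)$; that is, it is multiplication of $g$ by the fixed vector of polynomials $u(x)=D\Psi|_{z(x)}v$. If $u\neq0$, then the map $g\mapsto u\,g$ is injective on polynomials (no zero divisors), so the functions $u\,\ell^j$ for $j=0,\dots,r$ are linearly independent. This holds because $\ell$ is a nonconstant polynomial in $x$, using that the activations are nonconstant and choosing $w$ generically so that $\ell\circ$ (features) is nonconstant. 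That gives $r+1\ge\dim\cF'+1$ independent tangent directions, as required.

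It remains to ensure that some $v$ gives $u\neq0$, i.e. that $D\Psi$ is not identically zero along $z$. This can fail, for instance, when downstream weights vanish. Such $f$ I would handle in one of two ways. The first option is to enlarge the perturbation by simultaneously perturbing downstream weights along the reinserted path, which is the inactive-path idea of \Cref{thm:inactive-paths}, and to argue at second order. The second option is to note that this degenerate set of $f$ lies in a lower-dimensional semialgebraic subset of $\cF'$, and there the weaker bound $\dim N_\cF(f)\le \dim\cH-\dim(\text{that piece})-1$ follows from its own tangent directions within $\cF'$ plus one extra direction. I would try the first option first, since it gives a uniform argument.
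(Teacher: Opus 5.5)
Your route matches the paper's. You reinsert the neuron in layer $k$ and push the directions $\rho_k(\langle w,h\rangle+\beta)\,v$ through $J=D\Psi\circ z$. The ridge argument then gives $u\,\ell^j$ for $j\le r$, multiplication by a nonzero polynomial vector is injective, and a semialgebraic dimension count finishes the proof.

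\textbf{The gap.} You leave the degenerate case open, and there is a second one you do not flag. Nonconstant activations do not make the features nonconstant (take $W_1=0$), so ``choose $w$ generically so that $\ell$ is nonconstant'' can fail. The missing observation is that both failures occur only at constant functions:
\begin{itemize}
\item Since $f=\Psi\circ z$, the chain rule gives $Df=J\,Dz$, so $J\equiv0$ forces $f$ to be constant.
\item Since $f$ factors through the features $h=h^{(k-1)}_\theta$, if every coordinate of $h$ is constant, then so is $f$.
\end{itemize}
Hence for every nonconstant $f\in\cF'$ there are $l,i$ with $Je_l\neq0$ and $h_i$ nonconstant. Your argument then runs with $u=Je_l$ and $\ell=h_i$, giving $r+1>\dim\cF'$ independent constraints.

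Constant $f=c$ are handled directly. The output bias gives $c+\cF\subseteq\cF$, and $\cF$ is a symmetric cone spanning $\cH$. So every element of $\cF$ is a two-sided direction at $c$, and $N_\cF(c)=\{0\}$ (\Cref{lem:const}). These points contribute only the constants themselves, which form a proper subspace of $\cH$.

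Neither of your proposed remedies is needed, and neither is carried out. A second-order perturbation typically yields only one-sided directions: a curve $f+t^2c+O(t^3)$ gives $c$ but not $-c$. That produces half-space constraints on $N_\cF(f)$ rather than hyperplanes, which the dimension count cannot use. In your second option, the ``one extra independent direction'' is asserted without justification.

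\textbf{A misstatement in your framing.} The inclusion $N_\cF(f)\subseteq(\linspan T_\cF(f))^\perp$ is false in general. The polar of a cone contains the orthogonal complement of its span; it is not contained in it. At the one-sided boundary strata of \Cref{thm:multi-token-rank-one}, the normal cone is not a subspace, and this is exactly how proximity bias arises there. What you need is $N_\cF(f)\perp u$ for every $u$ with $\pm u\in T_\cF(f)$, and your curves with $t$ of both signs provide exactly this (\Cref{lem:curve}). So phrase the count in terms of the span of these two-sided directions, as in \Cref{lem:count}. There the fiberwise bound is made semialgebraic by quantifying over all parameters that represent $f$.
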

\begin{proof}[Proof idea]
    The functions computed by a single neuron in layer $k$ span all polynomials of degree at most $\deg \rho_k$ in the features, which gives $\deg \rho_k + 1 > \dim \cF'$ linearly independent constraints on the normal cones.
    Hence, a dimension count as in \Cref{thm:mlp-first-layer-poly} concludes the proof.
\end{proof}
For non-polynomial activations, no condition on the dimension of $\cF'$ is needed.

\begin{restatable}{theorem}{mainnonpolyanyhiddenlayer}\label{thm:main-nonpoly-any-hidden-layer}
    Given a compactly supported finite Borel measure $\mu \neq 0$ on $\RR^{d_0}$ with a Lebesgue density, let $\mathcal F \subseteq L^2(\mu, \RR^{d_L})$ be the neural model of an MLP with architecture $\mathbf{d} = (d_0, \dots, d_L)$ and analytic nonconstant activations. Let $\mathcal F'$ be the submodel given by the subarchitecture with width $d_k - 1$ in a hidden layer~$k$, and suppose that $\rho_k$ is not polynomial. Then, there is no proximity bias toward $\cF'$.
    Moreover, if $k=1$ and $d_L = 1$, then $\Vor_\cF(\cF') = \Vorloc_{\mathcal F}(\cF') = \cF'$.
\end{restatable}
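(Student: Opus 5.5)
The plan is to bound the translated normal cones from \Cref{prop:implicit-bias-criterion} using the directions obtained by reinserting the deleted neuron of layer $k$. Fix $f \in \cF'$ with a subnetwork parameter $\theta$, and write $f_\theta = \Psi_\theta \circ z_\theta$ as in the discussion before \Cref{prop:any-layer-poly}. Here $z_\theta$ is the output of the affine map of layer $k+1$, and $h^{(k-1)}_\theta$ denotes the features entering layer $k$. We reactivate the missing neuron with incoming weights $(w,\beta)$ and an outgoing column $t u$, where $u \in \RR^{d_{k+1}}$ and $t \to 0^{\pm}$. This produces curves in $\cF$ through $f$ whose velocity is
\[
v_{w,\beta,u} \;=\; \bigl(D\Psi_\theta\circ z_\theta\bigr)\bigl[\rho_k(\langle w, h^{(k-1)}_\theta\rangle + \beta)\, u\bigr].
\]
Both signs of $t$ are allowed, so $\pm v_{w,\beta,u} \in T_\cF(f)$. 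Hence $N_\cF(f)$ is orthogonal to $S_\theta = \linspan\{v_{w,\beta,u}\}$. All maps are analytic, so these derivatives exist and lie in $L^2(\mu)$ because $\mu$ is compactly supported.

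The key step is to show that $S_\theta$ is infinite-dimensional whenever the feature map $h^{(k-1)}_\theta$ is nonconstant on the support of $\mu$. Fix $w$ with $g = \langle w, h^{(k-1)}_\theta\rangle$ nonconstant, so that $g(\operatorname{supp}\mu)$ contains an interval, using the Lebesgue density and analyticity. The functions $\rho_k(sg+\beta)$ then span an infinite-dimensional space, because $\rho_k$ is analytic and not a polynomial. Concretely, differentiating in $\beta$ gives the functions $\rho_k^{(m)}(g+\beta)$, and if these spanned a finite-dimensional space then $\rho_k$ would satisfy a linear ODE forcing a polynomial; to rule out exponentials we also use scaling in $s$, as in the standard universal-approximation argument. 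Next we must show that composing with $D\Psi_\theta$ does not collapse this to a finite-dimensional space. Choose $u$ and a generic point $x_0$ where $D\Psi_\theta(z_\theta(x_0))u \neq 0$. Such a point exists unless $D\Psi_\theta \circ z_\theta$ vanishes identically in every direction, which is a degenerate analytic situation that we treat separately, e.g.\ via a different representation of $f$. Multiplying by the nonzero analytic vector function $D\Psi_\theta(z_\theta)u$ preserves linear independence on a set of positive measure, so $\dim S_\theta = \infty$.

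Then we carry out the dimension count. The representations $\theta$ range over a finite-dimensional parameter space of dimension $P$, and for each $\theta$ the set $f_\theta + N_\cF(f_\theta)$ lies in $f_\theta + S_\theta^\perp$. To show that $\bigcup_{f\in\cF'}(f+N_\cF(f))$ has empty interior, we pick $P+1$ directions $v_0,\dots,v_P$ that are linearly independent for all $\theta$ in a dense open set. This is possible by analyticity in $\theta$. The union then lies in the image of the map $(\theta, n) \mapsto f_\theta + n$ with $n \perp \linspan(v_i(\theta))$. This set is a countable union of Lipschitz images of a bundle of codimension $P+1$ over a $P$-dimensional base, so it has empty interior; concretely, we project onto the finite-dimensional span of the $v_i$ and use Sard. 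The residual set of parameters, where features are constant or independence fails, is handled by induction or by the same argument on a smaller analytic stratum. The statement then follows from the Corollary to \Cref{prop:implicit-bias-criterion}.

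For the case $k=1$ and $d_L=1$, the features are simply $x$, so $g$ is affine and $\linspan\{\rho_1(\langle w,x\rangle+\beta)\}$ is dense in $L^2(\mu)$ by universal approximation. $\Psi_\theta$ is a scalar-output network, and $J = D\Psi_\theta\circ z_\theta$ acts as multiplication by an analytic function $a(x)=\nabla\Psi_\theta(z_\theta(x))^\top u$. If $a\not\equiv 0$, then $a$ is nonzero almost everywhere, so the products $a\cdot\rho_1(\cdots)$ are dense and $N_\cF(f)=0$. If instead $a\equiv 0$ for every choice of $u$, then $\Psi_\theta$ is constant along the range, $f$ is constant, and $f$ can be reexpressed by a parameter for which $a\neq0$, e.g.\ by changing later weights. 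Thus $N_\cF(f)=\{0\}$ and \Cref{prop:implicit-bias-criterion} gives equality. I expect the main obstacle to be this degenerate case together with the uniform independence needed in the dimension count, namely controlling parameters where $D\Psi_\theta$ kills the new directions.
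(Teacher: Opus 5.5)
Your reinsertion curves and your treatment of the case $k=1$, $d_L=1$ match the paper: multiplication by the bounded analytic function $Je_l$, which is nonzero $\mu$-a.e., preserves density, so $N_\cF(f)=\{0\}$. The gap is in the general dimension count. The inference ``a countable union of Lipschitz images of a bundle of codimension $P+1$ over a $P$-dimensional base, so it has empty interior'' is not valid in the infinite-dimensional space $\cH=L^2(\mu,\RR^{d_L})$, for two reasons. First, a closed hyperplane of a separable infinite-dimensional Hilbert space is isometric to the whole space, so Lipschitz images of ``positive-codimension'' sets can have interior. Second, a countable union of sets with empty interior can have interior unless the pieces are closed and Baire applies. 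Null sets do not rescue this, since each piece would only be null in its own finite-dimensional slice.

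What makes the argument work is the special form $f_\theta+S_\theta^\perp$. You need \emph{one} finite-dimensional subspace $V$ on which $v\mapsto(\langle v,v_i(\theta)\rangle)_i$ has rank at least $P+1$ \emph{uniformly} for $\theta$ in a compact set. Then every slice $\{v\in V: h_0+v\in B_n\}$ of the corresponding piece $B_n$ is Lebesgue-null; this is where a Sard or covering argument applies. The pieces $B_n$ must also be closed so that Baire can be used. ``Projecting onto the span of the $v_i$'' does not produce such a $V$, because the $v_i$ move with $\theta$.

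The uniform rank bound is what your ``dense open set, then induction on smaller analytic strata'' fails to deliver; it is the obstacle you flag yourself. The missing observation is that independence holds at \emph{every} parameter with $f_\theta$ nonconstant, so no stratification is needed. Indeed, $Df=J\,Dz\neq0$ forces some column $Je_l$ to be a nonzero analytic function, hence nonzero almost everywhere. Since $f$ factors through $\eta=h^{(k-1)}_\theta$, some coordinate $\eta_i$ is nonconstant, so the functions $Je_l\,\eta_i^{j}$, $0\le j\le P$, are linearly independent. These lie in the closed span of your directions by UAT on $\eta(K)$, so $N_\cF(f)$ is orthogonal to them.

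Your degenerate case $D\Psi_\theta\circ z_\theta\equiv0$ is exactly the case of constant $f$. There $N_\cF(f)=\{0\}$, because the output bias gives $c+\cF=\cF$. The paper then takes the compact sets $\Theta_n=\{|\theta'|\le n,\ \operatorname{dist}(f_{\theta'},\text{constants})\ge1/n\}$. It uses lower semicontinuity of rank and a finite cover to get a single $V$ for each $n$. The sweep lemma then shows that each closed $B_n$ has null slices and is therefore nowhere dense, and Baire concludes.

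There is also a minor slip: $g(\operatorname{supp}\mu)$ need not contain an interval, e.g.\ for a fat Cantor support. What you need is that an analytic function vanishing $\mu$-a.e.\ vanishes identically, so relations transfer to the interval $g(\RR^{d_0})$.
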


\begin{proof}[Proof idea]
For analytic nonpolynomial $\rho_k$, a single neuron can approximate every function of the features, and $J$ vanishes only on a measure zero set. The normal cones need not vanish, but they satisfy more linearly independent constraints than $\cF'$ has parameters.
Dimension counts on finite-dimensional slices and the Baire category theorem conclude the proof.
\end{proof}

It remains to understand what happens when we have an inactive neuron in an arbitrary hidden layer for polynomial neural networks with low activation degrees. 
In some cases, we may obtain a proper submodel $\cF' \subsetneq \cF$ with $\dim \cF' = \dim \cF$ (e.g., \Cref{ex:rank-two-three}), in which case there is proximity bias by \Cref{prop:equal-dimension-bias}.
We conjecture that having a submodel of smaller dimension is sufficient to rule out proximity bias, and provide numerical evidence in Appendix~\ref{sec:numerical-evidence-conjecture}. 
\begin{conjecture}\label{conj:mlp-small-dim-no-bias}
    Let $\cF$ be the neural model of an MLP with polynomial activations, and consider the submodel $\mathcal F' \subset \cF$ obtained by making some hidden neurons inactive. If $\dim\mathcal F'<\dim\mathcal F$, then there is no proximity bias toward $\mathcal F'$.
\end{conjecture}

\section{Lightning Self-Attention}
\begin{figure*}[t]
    \centering

    \begin{subfigure}[t]{0.32\textwidth}
        \centering
        \includegraphics[width=\linewidth]{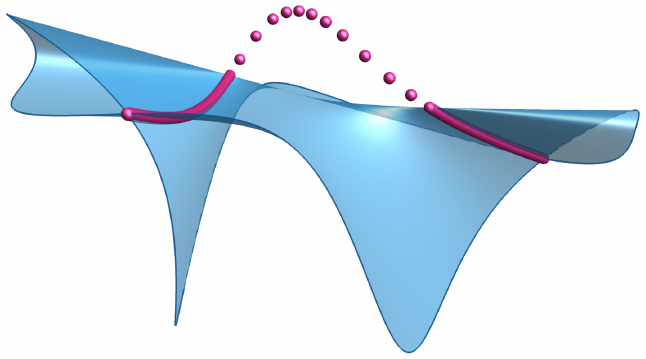}
        \caption{$a = d=2$}
        \label{fig:sub1}
    \end{subfigure}
    \hfill
    \begin{subfigure}[t]{0.32\textwidth}
        \centering
        \includegraphics[width=0.5\linewidth]{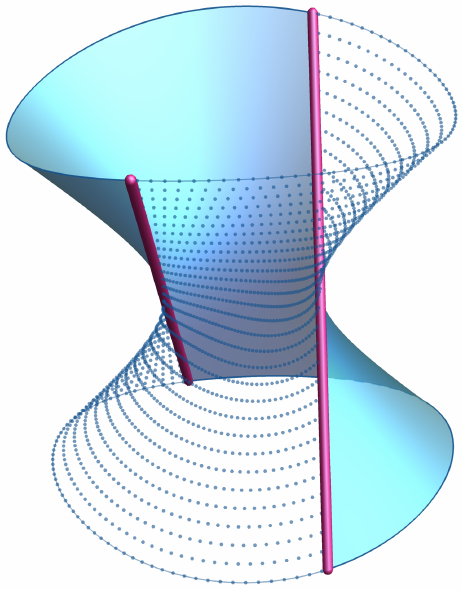}
        \caption{$a=1$, $d=2$}
        \label{fig:sub2}
    \end{subfigure}
    \hfill
    \begin{subfigure}[t]{0.32\textwidth}
        \centering
        \includegraphics[width=0.5\linewidth]{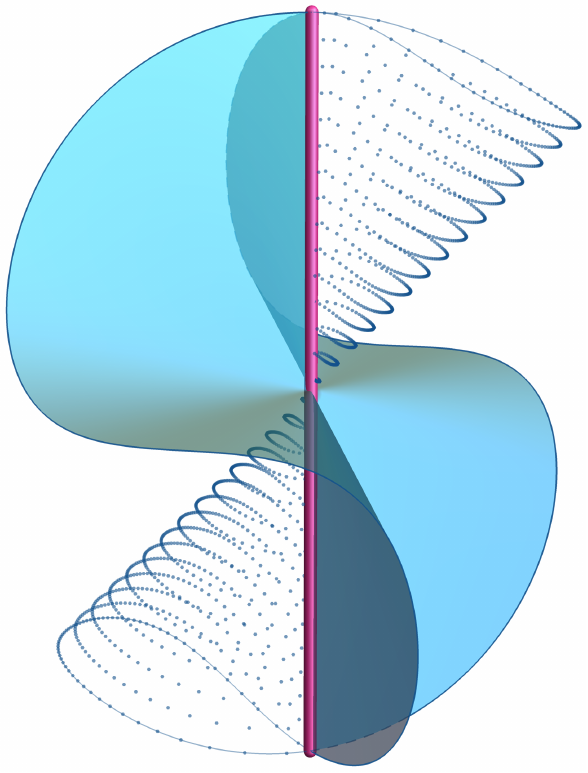}
        \caption{$a=1$, $d=3$}
        \label{fig:sub3}
    \end{subfigure}

    \caption{
    \textbf{No proximity bias toward submodels of multi-token self-attention mechanisms}, except for a rank-one scalar output model $(d' = a=1)$; see \Cref{thm:multi-token-lower-rank,thm:multi-token-rank-one}. Submodels are the magenta loci, and the dotted parts represent points in the Zariski closure of the respective neural models. Details in \ref{sec:fig-attention}.
    }
    \label{fig:attention-rank-one-geometry}
\end{figure*}

We next study proximity bias in lightning self-attention, following
the setup of \citet{HMK}. Let $X\in\RR^{d\times t}$ contain $t$
input tokens, and let $Q,K\in\RR^{a\times d}$ and
$V\in\RR^{d'\times d}$ be the query, key, and value matrices.
Softmax self-attention takes the form
$$
X\longmapsto VX\operatorname{softmax}\bigl(X^\top K^\top QX\bigr),
$$
where softmax is applied columnwise.
Lightning self-attention omits the softmax normalization, giving
$$
X\longmapsto VXX^\top K^\top QX.
$$
Since $K$ and $Q$ enter only through $A=K^\top Q$, we
reparametrize the model as
$$
\phi(A,V)(X)=VXX^\top AX,
\qquad \rank(A)\leq a,
$$
and write
$$
\mathcal X_a
=
\{\phi(A,V):\rank(A)\leq a,\ V\in\RR^{d'\times d}\}.
$$
Each output coordinate is a homogeneous cubic polynomial in the
$dt$ entries of $X$. Throughout our analysis of lightning attention,
the ambient space is therefore
$
\cH=\RR^{d'\times t}\otimes\RR[x_1,\dots,x_{dt}]_3,
$
equipped with a fixed inner product.

For $s<a$, let $\mathcal X_s\subseteq\mathcal X_a$ be the submodel
consisting of functions in $\mathcal X_a$ that have a parametrization with $\rank(A)\leq s$. 
Thus,
$
\mathcal X_1\subseteq\mathcal X_2\subseteq\cdots\subseteq\mathcal X_a.
$
Our goal is to determine whether there is proximity bias toward these lower-rank submodels or
singular loci of the model. 
The singular locus need not be a submodel obtained
by lowering the rank constraint, so we consider singularities separately.
The answer depends on the number of tokens
and on the rank. For multiple-token attention, there is no proximity bias toward any proper
lower-rank submodel. The only multiple-token proximity bias
occurs in the case $a=d'=1$. For
single-token attention, the rank constraint creates
boundary strata that give proximity bias.

\subsection{Multiple-Token Attention}

The multiple-token case is $t>1$. We assume without loss of generality
that $a\leq d$, since for $a>d$ the rank constraint is vacuous and
$\mathcal X_{a-1}=\mathcal X_a$. For $a>1$, there is no proximity bias
toward any proper lower-rank attention submodel.

\begin{restatable}{theorem}{multitokenlowerrank}
\label{thm:multi-token-lower-rank}
Assume $t>1$, and $1<a\leq d$. Then there is no proximity bias
toward $\mathcal X_s$ for any $s<a$.
\end{restatable}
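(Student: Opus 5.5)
We follow the strategy behind \Cref{thm:no-bias-shallow} and \Cref{thm:inactive-paths}. Concretely, we will show that every $f\in\mathcal X_s$ has a tangent cone in $\mathcal X_a$ large enough that the union $\bigcup_{f\in\mathcal X_s}(f+N_{\mathcal X_a}(f))$ has empty interior in $\cH$. The conclusion then follows from \Cref{prop:implicit-bias-criterion}.

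It suffices to treat $s=a-1$, since $\mathcal X_s\subseteq\mathcal X_{a-1}$ and local Voronoi cells of a union are unions. Fix $f=\phi(A,V)$ with $\rank A\le a-1$. Because $\rank A<a$, the matrix $A+\varepsilon uw^\top$ still has rank at most $a$ for all $u,w\in\RR^d$. Varying $V$ as well, we obtain curves
\[
\phi(A+\varepsilon uw^\top, V+\varepsilon B)\in\mathcal X_a .
\]
Their velocities are $VXX^\top uw^\top X + BXX^\top AX$. Taking both signs of $\varepsilon$, the tangent cone $T_{\mathcal X_a}(f)$ contains the linear space
\[
L_f=\linspan\{VXX^\top uw^\top X : u,w\in\RR^d\}+\{BXX^\top AX : B\in\RR^{d'\times d}\}.
\]
By bilinearity in $(u,w)$, the first summand equals $\{VXX^\top MX : M\in\RR^{d\times d}\}$. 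Therefore $N_{\mathcal X_a}(f)\subseteq L_f^\perp$.

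Next comes a dimension count, in the style of \Cref{thm:mlp-first-layer-poly}. Since the sets are semialgebraic, it suffices to prove
\[
\dim\mathcal X_{a-1}+\operatorname{codim} L_f<\dim\cH
\]
for $f$ in each stratum. Equivalently, we need $\dim L_f>\dim\mathcal X_{a-1}$ on a dense stratum, with the degenerate strata handled separately. We split into cases.

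\textbf{Case $V\neq0$ and $A\neq0$.} The map $M\mapsto VXX^\top MX$ is injective modulo $\ker V$ acting on the left. More precisely, its kernel is $\{M : VXX^\top MX\equiv 0\}$, and because $t>1$ the polynomial identity forces $M=0$ whenever $V\ne0$. This step uses $t>1$: the entries of $XX^\top$ and of the columns of $X$ are algebraically independent enough. So this part of $L_f$ alone has dimension $d^2$ when $V$ has full row rank in an appropriate sense. We then compare with
\[
\dim\mathcal X_{a-1}\le (a-1)(2d-a+1)+d'd-1.
\]
The $B$-part adds up to $d'd$ further directions, so we must check that
\[
d^2+d'd-(\text{overlap})>(a-1)(2d-a+1)+d'd-1 .
\]
Since $(a-1)(2d-a+1)\le d^2-1$ with equality only at $a-1=d$, which is excluded by $a\le d$, the inequality holds provided the overlap between the two summands is small. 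Bounding this overlap is the main technical step. We would compute it via the redundancy $\phi(A,V)=\phi(\lambda A,\lambda^{-1}V)$ and show that the overlap is at most one-dimensional, namely the direction $f$ itself.

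\textbf{Degenerate strata.} If $V=0$ or $A=0$, then $f=0$. The lower-dimensional strata where $V$ is rank deficient have smaller $\dim$, and the count above only improves, since the stratum dimension drops at least as fast as $\dim L_f$.

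We expect the main obstacle to be the injectivity and overlap computation for $M\mapsto VXX^\top MX$ together with $B\mapsto BXX^\top AX$. If a direct argument is hard, we would fall back to the following reduction. Choose coordinates so that $A=\operatorname{diag}(I_r,0)$ and $V$ is in a normal form. Then evaluate on $X$ whose columns are generic, exploiting $t\ge2$ so that $XX^\top$ and the column $x_j$ can be varied independently. This reduces to a rank computation that is linear in $M$ and $B$ and can be verified directly.
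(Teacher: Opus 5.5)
Your framework matches the paper's. Since $\rank A<a$, every rank-one change of $A$ and every change of $V$ is a two-sided tangent direction, so $N_{\mathcal X_a}(f)\subseteq L_f^\perp$, where $L_f=\{VXX^\top MX+BXX^\top AX\}$ is the paper's $\mathcal J_{A,V}$. One then counts dimensions stratum by stratum. (Strictly, the tangent cone need not contain the linear space $L_f$, since two rank-one updates can push the rank above $a$. The normal-cone inclusion only needs each generator with both signs, so this is harmless.)

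The genuine gap is the step you flag as the main technical one: you do not compute the overlap, and the answer you predict is false. If $A=kq^\top$ and $V=hk^\top$, then for every $r\in\RR^d$ the symmetry of $XX^\top$ gives
\[
VXX^\top(rq^\top)X=h\,(k^\top XX^\top r)\,q^\top X=(hr^\top)XX^\top AX .
\]
So $(M,B)=(rq^\top,-hr^\top)$ lies in the kernel of the differential, the two summands of $L_f$ share a $d$-dimensional subspace, and $\dim L_f=d^2+dd'-d$. These points lie in $\mathcal X_1\subseteq\mathcal X_{a-1}$, and there your uniform comparison breaks down. For $a=d$ you get $\dim L_f=d^2+dd'-d\le d^2+dd'-2$, which is exactly your bound for $\dim\mathcal X_{d-1}$.

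The fix is to treat $Y=\{\phi(kq^\top,hk^\top)\}$ as its own stratum, with $\dim Y\le 2d+d'-2$ (three vectors, two scalings). This gives $\dim Y+\dim\cH-\dim L_f\le\dim\cH-(d-1)(d+d'-2)<\dim\cH$. Your paragraph on degenerate strata does not supply this. It singles out rank-deficient $V$, but rank deficiency of $V$ alone does not shrink $L_f$. The claim that stratum dimensions drop ``at least as fast'' is asserted, not checked.

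You also still owe the generic statement: if $(A,V)$ is not of this exceptional form, the only overlap is the scaling direction $(M,B)=(A,-V)$. The paper gets this from the terms involving two distinct tokens $x,y$. These give $(b_i^\top y)(y^\top Ax)+(v_i^\top y)(y^\top Mx)=0$ for the rows $b_i^\top,v_i^\top$ of $B,V$. Whenever $v_i^\top y$ and $y^\top Ax$ are nonzero, nonproportional linear forms in $y$, coprimality forces $b_i=-\lambda v_i$ and $Mx=\lambda Ax$. If this never happens, every nonzero row of $V$ is proportional to every nonzero vector of $\operatorname{im}A$, i.e.\ $A=kq^\top$ and $V=hk^\top$. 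A point of $\mathcal X_{a-1}$ with at least one preimage of rank below $a$ outside the exceptional form can use that preimage; all other points lie in $Y$.

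Two smaller points:
\begin{itemize}
\item At $f=0$ the space $L_f$ is zero and gives no constraint. You need the separate observation that $\mathcal X_a$ is a symmetric cone, so $N_{\mathcal X_a}(0)\subseteq\linspan(\mathcal X_a)^\perp$, a proper subspace.
\item Since $(a-1)(2d-a+1)=d^2-(d-a+1)^2$, equality with $d^2-1$ occurs at $a=d$, not at $a-1=d$. Your inequality in the generic case still holds, because what is needed there is $(d-a+1)^2>0$.
\end{itemize}
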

\begin{proof}[Proof idea]
    At a function with $\rank A < a$, every rank-one change of $A$ and every change of $V$ is a two-sided tangent direction. Outside a small exceptional locus these directions span a space of dimension $\dim \mathcal X_d$, so a dimension count shows that the translated normal cones are small enough.
\end{proof}

We next ask whether there is proximity bias toward singular points of the attention model.
The algebraic description of the singular loci, obtained by
complexifying and projectivizing, is given in
Appendix~\ref{app:multi-token-attention}. For $1<a<d$, the 
real singular locus is $\mathcal X_{a-1}$. When $a=d$, it is contained
in $\mathcal X_1$. Thus, Theorem~\ref{thm:multi-token-lower-rank}
implies no proximity bias toward the real singular locus
for any $a>1$.

It remains to consider $a=1$. There is no nonzero lower-rank submodel
in this case, but the singular-locus question can still be posed. When
$d'>1$, the nonzero real singular locus is the rank-one locus
$$
Z
=
\{\phi(A,V):\rank(A)=\rank(V)=1\},
$$
and again there is no proximity bias toward it.

\begin{restatable}{proposition}{multitokenrankonenoBias}
\label{prop:multi-token-rank-one-no-bias}
Assume that $t>1$, $d>1$, $a=1$, and $d'>1$. Then there is no proximity bias toward the real singular locus of $\mathcal X_1$.
\end{restatable}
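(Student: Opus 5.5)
We aim to show that $\bigcup_{f\in Z}(f+N_{\mathcal X_1}(f))$ has empty interior in $\cH$; by \Cref{prop:implicit-bias-criterion} this rules out proximity bias toward $Z$. The origin needs separate treatment, but a single point contributes the set $N_{\mathcal X_1}(0)$. Since $\mathcal X_1$ is a cone containing the lines $\RR\phi(A,V)$, every element of $\mathcal X_1$ lies in $T_{\mathcal X_1}(0)$. Hence $N_{\mathcal X_1}(0)\subseteq \mathcal X_1^\perp$, which is a proper subspace and so has empty interior.

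For the nonzero part we parametrize $Z$ by $A=uw^\top$ and $V=pq^\top$ with $u,w,q\in\RR^d$ and $p\in\RR^{d'}$. Then $\phi(A,V)(X)=p\,(q^\top X)(X^\top u)(w^\top X)$. Modulo scaling, $Z$ has dimension at most $d'+3d-3$. At $f=\phi(A,V)\in Z$ we produce two-sided tangent directions, as in the proof of \Cref{thm:multi-token-lower-rank}. The curves $\phi(A,V+\epsilon V_1)$ with arbitrary $V_1\in\RR^{d'\times d}$ and $\phi(A+\epsilon A_1,V)$ with $A_1$ of the form $u_1w^\top+uw_1^\top$ stay inside $\mathcal X_1$ (the latter after absorbing into rank one to first order; more precisely use $(u+\epsilon u_1)(w+\epsilon w_1)^\top$). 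Differentiating gives the linear span
\[
L_f=\{V_1XX^\top uw^\top X\}+\{p\,q^\top XX^\top(u_1w^\top+uw_1^\top)X\},
\]
and $N_{\mathcal X_1}(f)\subseteq L_f^\perp$. The key observation is that at rank-one points the $V$-directions are free, which contributes $d'd$ dimensions. The directions $V_1XX^\top uw^\top X$ for varying $V_1$ are linearly independent in $V_1$ because $t>1$, $d>1$, and the cubic polynomials $(e_j^\top X)(X^\top u)(w^\top X)$ are independent. We then count
\[
\dim L_f\ \ge\ d'd + (\text{contribution of }u_1,w_1 \text{ modulo overlap}).
\]
We need $\dim L_f>\dim Z+1$ generically on $Z$, that is, $\dim L_f\ge d'+3d-2$, so that $\dim\bigcup_{f\in Z}(f+L_f^\perp)\le \dim Z+\dim\cH-\dim L_f<\dim\cH$. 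Since $d'd\ge d'+d-1$ with room to spare when $d,d'\ge2$, it suffices to show that the $u_1,w_1$ directions add roughly $2d-1$ further independent directions not already in the $V_1$-span. Their outputs are all proportional to the fixed vector $p$ in the output coordinate, while the $V_1$ directions fill every output coordinate. Choosing a coordinate orthogonal to $p$ (possible because $d'>1$) separates the two families partially. We would compute explicitly, after changing coordinates so that $u,w,q$ are standard, using $t\ge2$ to ensure that the cubics $(q^\top X)\,(X^\top u_1)(w^\top X)$ are not all of the form $(e_j^\top X)(X^\top u)(w^\top X)$.

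The main obstacle is this independence computation: proving that $\dim L_f$ exceeds $\dim Z+1$ on a dense subset of $Z$ and handling the lower-dimensional exceptional subset where it fails. For the exceptional subset $E\subset Z$ we use the crude bound $\dim L_f\ge d'd$, valid everywhere on $Z\setminus\{0\}$ by the independence above. Combined with $\dim E\le \dim Z-1$, this should still give a set of dimension $<\dim\cH$. We would first try to verify this inequality directly, falling back on a finer stratification if needed. Everything is semialgebraic, so a finite union of pieces of dimension $<\dim\cH$ has empty interior, which concludes the argument.
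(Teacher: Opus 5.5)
Your framework matches the paper's. You bound $N_{\mathcal X_1}(f)$ by $L_f^\perp$, where $L_f$ is the image of the differential on $T_{D_1}(A)\times\RR^{d'\times d}$, count dimensions over $Z$, and treat the origin separately; your handling of the origin is fine. But the step that carries the proof, computing $\dim L_f$, is left open, and the reduction you sketch does not do it. Write $M=\{(v^\top XX^\top u)(w^\top X):v\in\RR^d\}$ and $N=\{q^\top XX^\top(u_1w^\top+uw_1^\top)X\}$. Then $L_f=\RR^{d'}\otimes M+p\otimes N$. Every $A_1$-direction lies in $p\otimes(\cdot)$, so its overlap with the $V_1$-span sits entirely in the $p$-component. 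Using $d'>1$ therefore only reduces you to computing $\dim(M+N)$, a scalar-output question, and that is exactly what remains.

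The missing ingredient is the kernel computation of Lemma~\ref{lem:attention-differential-t>1}. Comparing terms in two distinct tokens gives $(\dot v_i^\top y)(y^\top Ax)+(v_i^\top y)(y^\top Bx)=0$. Unique factorization then shows the kernel is only the scaling direction unless $q\parallel u$, where it is $d$-dimensional. Hence $\dim L_f=dd'+2d-2$ off $Y=\{q\parallel u\}$. Note that the $A_1$-family adds $2d-2$, not $2d-1$, because $A_1=A$ duplicates $V_1=V$. On $Y$ one gets $\dim L_f=dd'+d-1$, because there $M\subseteq N$: $(v^\top XX^\top u)(w^\top X)$ is the $A_1\propto vw^\top$ direction.

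Your fallback for the exceptional set fails when $d'=2$. With only $\dim L_f\ge dd'$ and $\dim E\le\dim Z-1=3d+d'-4$, the union of translated normal spaces has dimension at most $n+(3d+d'-4-dd')$. This is $<n$ for $d'\ge3$, but equals $n+d-2\ge n$ for $d'=2$. Even the sharp bound $\dim Y\le 2d+d'-2$ only gives $n-(d-1)(d'-2)$, which is $n$ when $d'=2$. You need the extra $d-1$ directions from the $A_1$-family at points of $Y$. With $\dim L_f=dd'+d-1$ on $Y$ and $\dim L_f=dd'+2d-2$ on $Z\setminus Y$, both pieces have dimension at most $n-(d-1)(d'-1)<n$, which is the paper's count. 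The argument closes only once the exceptional locus $Y$ is identified and the rank of the restricted differential is computed on it.
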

\begin{proof}[Proof idea]
    Dimension count for the singular locus~$Z$, via unique factorization and birationality.
\end{proof}

The scalar-output case $d'=1$ is different. Writing
$A=kq^\top$ and $V=v^\top$, with $k,q,v\in\RR^d$, gives
$$
\phi(A,V)(X)
=
\bigl(v^\top XX^\top k\bigr)q^\top X.
$$
Since $XX^\top$ is symmetric, the parametrization is invariant under
exchanging $v$ and $k$, and is therefore generically two-to-one.
Setting
$
S=\tfrac{1}{2}(vk^\top+kv^\top)
$
removes this symmetry and gives
$$
\phi(A,V)(X)
=
\left(\sum_{j=1}^t x_j^\top Sx_j\right)q^\top X,
$$
where $x_j$ is the $j$-th column of $X$. We therefore write $\psi(q,S)(X) = \phi(A,V)(X)$
and define
$$
U
=
\{\psi(q,S):q\in\RR^d,\ \rank(S)\leq1\}.
$$
In terms of the original parameters, $\rank(S)\leq1$ exactly when
$v$ and $k$ are linearly dependent.

\begin{restatable}{theorem}{multitokenrankone}
\label{thm:multi-token-rank-one}
Assume $t>1$, $a=d'=1$, and $d>1$. There is proximity bias
toward $U$. If $d>2$, $U$ is both the real singular locus of
$\mathcal X_1$ and a boundary of the real model. If $d=2$, $U$ is a boundary of the real~model.
\end{restatable}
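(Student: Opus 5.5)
The plan is to make the real geometry of $\mathcal X_1$ explicit in the coordinates $(q,S)$. Then we show that, near a generic point of $U$, the model looks like one half of a variety cut out by a sign condition. Targets lying on the missing half should then project onto $U$.

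\textbf{Step 1: describe $\mathcal X_1$ and its Zariski closure.} For independent $v,k$, the matrix $S=\tfrac12(vk^\top+kv^\top)$ has eigenvalues $\tfrac12\|v\|\|k\|(\cos\theta\pm1)$. So $S$ is indefinite of rank $2$. For dependent $v,k$, $S$ is semidefinite of rank at most $1$. Conversely, every symmetric $S$ of rank at most $1$, and every indefinite $S$ of rank $2$, arises this way. Hence
\[
\mathcal X_1=\{\psi(q,S): \rank S\le 2,\ S \text{ not definite of rank }2\}.
\]
The Zariski closure $Y$ is $\{\psi(q,S):\rank S\le2\}$.

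\textbf{Step 2: identifiability away from degenerate points.} For $t>1$ and $S\neq0$, the quadratic form $Q_S(X)=\sum_j x_j^\top S x_j$ has rank $t\cdot\rank S\ge2$. When $\rank S=1$ and $t\ge 3$, or when $\rank S=2$, this rank is at least $3$, so $Q_S$ is irreducible. The case $t=2$, $\rank S=1$ may need a separate check. Irreducibility makes the factorization $\psi(q,S)=Q_S\cdot q^\top X$ unique up to $(q,S)\mapsto(cq,c^{-1}S)$. This is the unique-factorization input. It shows three things:
\begin{itemize}
\item $\psi$ is an immersion modulo this scaling at points with $q\neq0$ and $S\neq0$;
\item the sign type of $S$ (indefinite rank $2$, definite rank $2$, or rank $1$) is well defined on $Y$;
\item points $\psi(q,S)$ with $S$ definite of rank $2$ are not in the Euclidean closure of $\mathcal X_1$.
\end{itemize}
Together with Step~1, this gives that $U$ separates the real model from the rest of $Y$: $U$ is the set where $\mathcal X_1$ meets the closure of its complement in $Y(\RR)$, so it is a boundary. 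For $d>2$, the locus $\{\rank S\le1\}$ is the singular locus of the rank-$\le 2$ symmetric determinantal variety. Pushing this through the immersion identifies $U$ with the real singular locus of $Y$ (hence of $\mathcal X_1$). For $d=2$ the rank condition is vacuous, so $Y$ is smooth and $U$ is only a boundary.

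\textbf{Step 3: proximity bias via a local normal form at a generic point of $U$.} Fix $f_0=\psi(q_0,e_1e_1^\top)$.
\begin{itemize}
\item \emph{Case $d=2$.} Near $f_0$, the set $\mathcal X_1$ is the image under a local diffeomorphism of the half-space $\{\det S\le0\}$ inside the smooth manifold $Y$. It is therefore a manifold with boundary $U$. Take a target $h=f+\nu$ with $f\in U$ near $f_0$ and $\nu$ small. Here $\nu$ is normal to $Y$ plus a small component along the inward conormal pointing out of $\mathcal X_1$, i.e. toward $\det S>0$. For such $h$, $f$ is a strict local minimizer of $\ell_h|_{\mathcal X_1}$. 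This follows from the first-order sign condition on the boundary, plus a second-order check that is uniform for small $\nu$. These $h$ form a set of full dimension, since $\dim U+\operatorname{codim}_{\cH}Y+1=\dim\cH$.
\item \emph{Case $d>2$.} Write $S$ near $e_1e_1^\top$ in Schur-complement form. The conditions $\rank S\le 2$ and non-definiteness become: the Schur complement $C\in\operatorname{Sym}_{d-1}$ lies in the closed cone $K=\{-ww^\top\}$ (up to the sign convention fixed by $S_{11}>0$). So locally $\mathcal X_1\cong U\times K$ via a diffeomorphism, with $U$ corresponding to $C=0$. The polar of $K$ contains the open cone of positive definite matrices. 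Hence targets $h=f+\nu$, with the $C$-component of $\nu$ positive definite, should have $f$ as a local minimizer. The reason is that the nearest point of $K$ to a positive definite matrix is $0$ with a strictly positive margin, and this margin survives under small smooth perturbations.
\end{itemize}
In both cases we then count dimensions to show the union of these sets has nonempty interior.

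\textbf{Main obstacle.} The difficulty is that the inner product on $\cH$ is not the Frobenius inner product on the $(q,S)$ chart. The normal form is only a diffeomorphism, so second-order terms of the chart could in principle destroy local minimality. I would handle this by first reducing to a first-order statement: the strict inequality $\langle \nu, v\rangle\le -c\|v\|\|\nu\|$ for $v$ in the tangent cone directions transverse to $U$. Then a Taylor expansion with the margin $c\|\nu\|$ dominates the $O(\|v\|^2)$ errors, provided the $U$-tangential part is handled by smoothness of $U$.
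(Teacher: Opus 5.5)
Your plan is correct and follows the paper's proof. The paper also identifies $\mathcal X_1$ with $\{q\otimes S: S\in\mathcal R_1\}$, gets the singular locus from the Segre product with the rank-$\le 2$ symmetric determinantal variety, and pairs the Schur complement $C=R-Q^\top P^{-1}Q\preceq 0$ with $W\succ 0$ supported on $\ker S$ to obtain $\langle w_W,x-p\rangle\le c\|x-p\|^2$; no strict margin is needed. It then sweeps out an open set with the inverse function theorem on the normal bundle of $U$, which is what your closing ``dimension count'' must amount to, and it even obtains global rather than local minimizers. Two simplifications are available:
\begin{itemize}
\item The paper replaces your unique-factorization Step~2, including the $t=2$ special case, by observing that the coefficients of $\psi(q,S)$ linear in $x_1$ and quadratic in $x_2$ recover $q\otimes S$. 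Hence $q\otimes S\mapsto\psi(q,S)$ is injective and linear.
\item Your separate $d=2$ case is subsumed by the general argument, since there $K=\RR_{\le 0}$.
\end{itemize}
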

\begin{proof}[Proof idea]
    A real inertia constraint makes $U$ one-sided, and the Voronoi cell of $U$ fills an open set.
\end{proof}

The two geometric regimes in
Theorem~\ref{thm:multi-token-rank-one} are visible already in the
smallest dimensions; see Figure~\ref{fig:attention-rank-one-geometry}.
By Lemma~\ref{lem:multi-token-symmetric-quotient}, the map
$q\otimes S\mapsto\psi(q,S)$ is injective for $t>1$, so these
low-dimensional slices faithfully represent the corresponding
geometry of $\mathcal X_1$. For $d=2$, the locus $U$ appears as a
boundary of the real model despite the algebraic model being smooth,
whereas for $d=3$ it appears as a singular locus.

The multiple-token case is summarized as follows.

\begin{center}
\begin{tabular}{|c|c|c|}
\hline
Regime & $a>1$ or $d'>1$ & $a=d'=1$\\
\hline
proximity bias? & No & Yes\\
\hline
\end{tabular}
\end{center}

\subsection{Single-Token Attention}

The geometry of the attention model changes when $t=1$. Writing
$S=\frac12(A+A^\top)$, we reparametrize using a symmetric matrix
$S\in S^2(\RR^d)$ and $V\in\RR^{d'\times d}$:
\[
\psi(S,V)(x)=(Vx)(x^\top Sx).
\]
For $t>1$, the fiber is controlled by the bilinear form
$y^\top Ax$, where rank one is the exceptional reducible case.
For $t=1$, this is replaced by the quadratic form $x^\top Sx$,
which is reducible over $\CC$ when $\rank(S)\leq2$. Consequently,
rank-two symmetric matrices now appear in the singular locus.

The rank constraint is also different under symmetrization.
Over $\CC$, matrices $A$ with $\rank(A)\leq a$ give the
symmetric matrices $S$ with $\rank(S)\leq\min\{2a,d\}$,
whereas over~$\RR$ they give those $S$ with at most $a$ positive
and at most $a$ negative eigenvalues. These two observations
lead to the following.

\begin{restatable}{theorem}{singletokenattention}
\label{thm:single-token-attention}
Assume that $t=1$.
\begin{enumerate}
    \item If $a=d>1$ and $d'>1$, then
    \[
    \operatorname{Sing}\bigl(\PP(\mathcal X_d^\CC)\bigr)
    =
    \PP\left(
    \left\{\psi(S,V): \begin{matrix}
        \rank(V)=1,\\ \rank(S)\leq2
    \end{matrix}\right\}
    \right),
    \]
    and there is no proximity bias toward the corresponding real
    singular locus.

    \item If $a>1$ and $2a<d$, then
    \[ 
    \operatorname{Sing}\bigl(\PP(\mathcal X_a^\CC)\bigr)
    =
    \PP\bigl(
    \{\psi(S,V):\rank(S)\leq2a-1\}
    \bigr).
    \]
    The real model $\cX_a$ exhibits proximity bias toward the rank-$(2a-1)$ stratum $U$ where $S$ has inertia
    $(a,a-1,d-2a+1)$.
\end{enumerate}
\end{restatable}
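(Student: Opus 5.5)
The plan is to treat the two parts separately. Each part needs two ingredients: an algebraic computation of the singular locus of the projectivized complex model, and a real, variational analysis of tangent and normal cones as in \Cref{prop:implicit-bias-criterion} and \Cref{prop:equal-dimension-bias}.

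\textbf{Singular loci.} Over $\CC$, the closure of $\mathcal X_a^\CC$ is the image of
\[
\{S:\rank S\le \min(2a,d)\}\times \RR^{d'\times d}\to \CC^{d'}\otimes \CC[x]_3,\qquad (S,V)\mapsto (Vx)(x^\top Sx).
\]
First, compute the fibres. By unique factorization in $\CC[x]$, if the quadric $x^\top S x$ is irreducible, which holds exactly when $\rank S\ge3$, then $\psi(S,V)$ determines $(S,V)$ up to the scaling $(\lambda S,\lambda^{-1}V)$. When $\rank S\le 2$, the quadric factors as $\ell_1\ell_2$, and if moreover $\rank V=1$ we have $\psi=u\,\ell_0\ell_1\ell_2$ with $u\in\CC^{d'}$. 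The linear factors can then be permuted, so the fibre is disconnected or positive dimensional.

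Second, compute differentials. The image of $d\psi$ at $(S,V)$ is
\[
\{(\dot V x)(x^\top S x)+(Vx)(x^\top \dot S x)\}.
\]
Its dimension equals $\dim(\text{source})-1$ exactly when the only relation $(\dot V x)q_S=-(Vx)q_{\dot S}$ is the scaling one. By divisibility this holds when $q_S$ is irreducible, so away from the stated loci the parametrization is an immersion with finite fibres, hence the points are smooth.

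To show that the stated loci really are singular, exhibit two distinct tangent branches there, coming from the different factorizations. Alternatively, compute the tangent cone and show it is not a linear space of dimension $\dim\mathcal X$.

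For part 2 the situation is different, since $\rank S\le 2a<d$ is a determinantal constraint. The singular locus is then inherited from the symmetric determinantal variety $\{\rank S\le 2a-1\}$. Injectivity up to scaling for $\rank S\ge 3$ lets us transfer singularity through $\psi$, using that the product $V\cdot$ is a smooth fibration there.

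\textbf{No bias in part 1.} At $f=u\,\ell_0\ell_1\ell_2$ with $\rank V=1$ and $\rank S\le2$, the tangent cone of the real model contains the images of $d\psi$ at every real parametrization of $f$. When the factors are real there are at least three such parametrizations, obtained by choosing which linear factor is the $Vx$ part. The sum of these image spaces should be as large as $\dim \mathcal X_d$, or at least have codimension exceeding $\dim$ of the singular stratum. Then $N_{\mathcal X}(f)$ lies in the orthogonal complement of this span, and $\bigcup_f (f+N(f))$ has dimension at most $\dim(\text{stratum})+\dim(\text{complement})<\dim\cH$. The case of complex conjugate factors, where $S$ is definite of rank $2$, is handled by a similar count or by noting that the stratum is lower dimensional there. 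This dimension count is the step I expect to be the main obstacle, since it requires explicitly computing the span of several $d\psi$-images and comparing dimensions.

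\textbf{Bias in part 2.} Over $\RR$, the set of admissible $S$ is those with at most $a$ positive and at most $a$ negative eigenvalues. Near $S_0$ of inertia $(a,a-1,d-2a+1)$, the only way to increase rank keeps the positive index at $a$, so the new eigenvalue must be negative. Locally, the admissible $S$ form a smooth manifold with boundary: a half-space in the normal direction to the rank-$(2a-1)$ stratum inside $\{\rank\le 2a\}$. Composing with $\psi$, which is a local immersion since $\rank S\ge3$, shows that $\mathcal X_a$ near $\psi(S_0,V_0)$ is a manifold with boundary $U$, of full dimension $\dim\mathcal X_a$. The tangent cone at a boundary point is a half-space of the tangent space, so the normal cone contains the normal space plus an outward ray. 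For targets $h=f+n+t\nu$, with $\nu$ the outward normal, $t>0$ small and $n$ a small normal vector, the nearest point of $\mathcal X_a$ near $f$ lies on the boundary. A standard tubular-neighbourhood argument for manifolds with boundary then shows that $\Vorloc(U)$ contains an open set. This gives proximity bias toward $U$, by the definition directly or via \Cref{prop:implicit-bias-criterion}.
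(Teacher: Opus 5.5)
\textbf{Part 2 (bias) rests on a false local model.} Because $2a<d$, the kernel $K=\ker S_0$ has dimension $k=d-2a+1\ge 2$. In blocks for $K^\perp\oplus K$, the inertia constraint on a nearby $S'$ forces the Schur complement $C=R-Q^\top P^{-1}Q$ to be negative semidefinite of rank at most one, i.e.\ $C=-zz^\top$ with $z\in K$ free. You are right that the new eigenvalue must be negative, but its eigenvector ranges over the $k$-dimensional space $K$. So the transverse slice to the rank-$(2a-1)$ stratum is the $k$-dimensional cone $\{-zz^\top: z\in\RR^k\}$, not a half-line. Consequently $U$ has codimension $k\ge 2$ in $\mathcal X_a$, and $\mathcal X_a$ is not a manifold with boundary along $U$. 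The tangent cone of the admissible set at $S_0$ is the nonconvex cone $\{B: B_{22}\preceq 0,\ \rank B_{22}\le 1\}$, not a half-space. Even granting your ``normal space plus one outward ray'', sweeping it over $U$ gives a set of dimension at most $\dim U+(n-\dim\mathcal X_a)+1=n-k+1<n$, so your tubular-neighbourhood step produces no open set.

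The missing idea is that the polar of this tangent cone is the whole cone $\{W=P_KWP_K,\ W|_K\succeq 0\}$, of dimension $\binom{k+1}{2}=\operatorname{codim}_{\mathcal X_d}(U)$. You must show that every $W$ with $W|_K\succ 0$ gives a genuine Voronoi direction. Here $W$ is transported through the local parametrization of the unconstrained model $\mathcal X_d$ and perturbed by small vectors normal to $\mathcal X_d$. This follows from the Schur-complement estimate $\langle W,S'-S\rangle=\langle W,C\rangle+\langle W,Q^\top P^{-1}Q\rangle\le c_0\|S'-S\|^2$, using $C\preceq 0$. Only with this dimension match does the inverse function theorem for $(p,w)\mapsto p+w$ give an open subset of $\bigcup_{p\in U}\Vor_{\mathcal X_a}(p)$.

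\textbf{Part 1 follows the paper's route in outline, but the decisive counts are missing and one stratum is overlooked.} The paper isolates $Y=\{\psi(v^{\otimes 2},hv^\top)\}$, i.e.\ $f=u\ell^3$. There the projective preimage is unique and critical, with a $d$-dimensional kernel. So your ``three parametrizations'' do not exist there, and one needs the separate count $\dim Y=d'+d-1<m-d$, where $m=dd'+\binom{d+1}{2}$. On $Z\setminus Y$ no sum of images is needed. If $V=hv^\top$ and $Q_S=(v^\top x)(w^\top x)$ with $w\not\parallel v$, then $V'=hw^\top$, $Q_{S'}=(v^\top x)^2$ is a noncritical preimage. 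Its image already has dimension $m-1>d'+3d-3\ge\dim Z$. The definite rank-two case is an open part of $Z$, not a lower-dimensional one, but its real preimage is noncritical anyway. The origin also needs the symmetric-cone argument.

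For the singular loci, ``two tangent branches'' cannot show that points of $Y$ are singular, since their fibre is a singleton. An ``immersion with finite fibres'' does not give smoothness by itself. You also checked singleton fibres and injective differentials only for irreducible $q_S$, omitting reducible $q_S$ with $\rank V\ge 2$. The paper handles every case at once with the finite-birational criterion: the image is smooth at $y$ iff the fibre is a single point $x$, the source is smooth at $x$, and $d_x\psi$ is injective. The same criterion gives singularity in part 2 directly from the singular source $\mathcal D_{2a-1}$. This includes the points with $\rank S\le 2$, where your injectivity-based transfer does not apply.
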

\begin{proof}[Proof idea]
    Part 1 is a dimension count as in \Cref{prop:multi-token-rank-one-no-bias}. Part 2 is a one-sided argument that implies full-dimensional Voronoi cells, as in \Cref{thm:multi-token-rank-one}.
\end{proof}

Thus, the larger singular locus in the single-token setting
does not by itself produce proximity bias. In the first regime, the
rank-two factorization locus is singular but there is no proximity bias toward it. In the
second, proximity bias arises from the one-sided boundary induced by
the real inertia constraint, paralleling the boundary mechanism in
Theorem~\ref{thm:multi-token-rank-one}. Proofs are given in
Appendix~\ref{app:single-token-attention}.

\subsection{Beyond lightning self-attention}
\label{sec:softmax-experiment}

Theorem~\ref{thm:multi-token-rank-one} identifies a
proximity bias toward~$U$ in rank-one scalar-output lightning attention.
We ask whether this proximity bias is visible under gradient-based training after replacing
lightning attention by softmax attention. The change in identifiability
caused by normalization is studied in~\citet{HMK}; here we compare the
training dynamics empirically.

We take $d=t=2$ and $a=d'=1$. In each run, the target is
$h=\psi(q_\ast,S_\ast)$, where $q_\ast\in\RR^2$ is a unit vector and
$S_\ast\succ0$ has eigenvalues $1$ and $0.2$. We sample $512$
standard Gaussian inputs $X_i$ and use the labels $h(X_i)$ for both
models. We initialize from $k_0,q_\ast,v_0$, with
$
S_0=\frac12(v_0k_0^\top+k_0v_0^\top)
$
having eigenvalues $1$ and $-0.05$, and train $k,q,v$ for $500$ steps
using Adam and squared loss. For softmax attention, the only change is
replacing
$\phi(A,V)(X)=VXX^\top AX$
by
$
VX\operatorname{softmax}(X^\top AX).
$

At training step $t$, let
$
S_t=\frac12(v_tk_t^\top+k_tv_t^\top).
$
Figure~\ref{fig:softmax-experiment} shows
$\sigma_2(S_t)/\sigma_1(S_t)$. For lightning attention, the ratio
rapidly converges to zero, consistent with
Theorem~\ref{thm:multi-token-rank-one}. For softmax,
the ratio decreases at a slower rate.

\begin{figure}[!h]
    \centering
    \includegraphics[width=\linewidth]{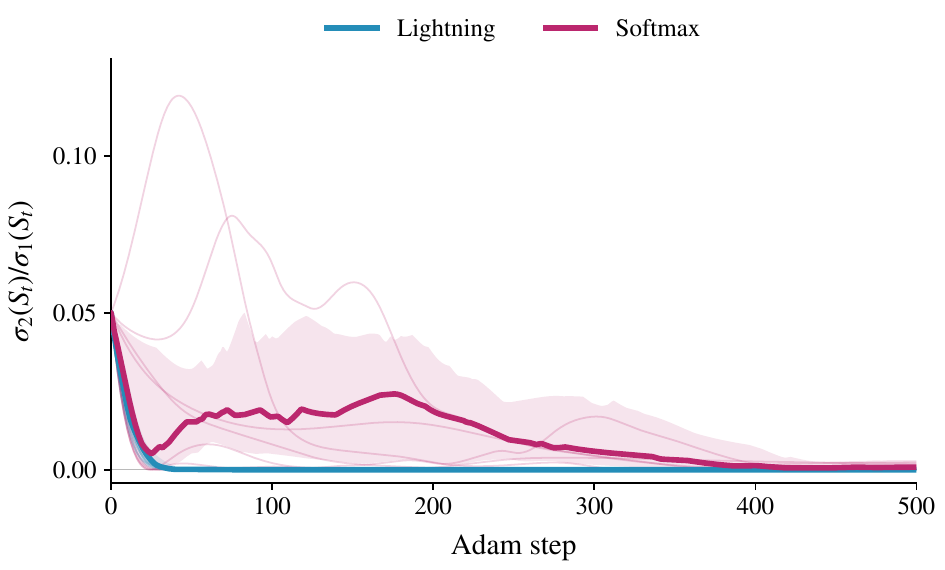}
    \caption{
    \textbf{Implicit bias for softmax attention.}
    Ratio
    $\sigma_2(S_t)/\sigma_1(S_t)$ over $500$ steps.
    Both models use the same inputs, labels, and initialization.
    Thick curves show medians over $30$ runs; shaded regions show
    interquartile ranges; faint curves show 8 individual~runs.}
    \label{fig:softmax-experiment}
\end{figure}

\section{Discussion}

Our results show that preference for submodels is not caused by singularities alone. In the models studied here, most lower-rank or inactive-neuron submodels are not preferred, even when they lie in the singular locus of the neural model. The exceptional cases reflect the real geometry, such as one-sided boundary strata or large normal cones.
In lightning self-attention, proximity bias arises at one-sided boundaries coming from a real inertia constraint.

It is natural to generalize these results to more expressive models. Low-rank matrix approximation is the starting point, from which one can move toward neural models along two natural hierarchies:
\[
\begin{aligned}
\text{matrix rank}
&\longrightarrow \text{tensor rank}\\
&\longrightarrow \text{shallow MLPs}\\
&\longrightarrow \text{deep MLPs}
\longrightarrow \cdots,
\end{aligned}
\]
\[
\begin{aligned}
\text{matrix rank}
&\longrightarrow \text{lightning self-attention}\\
&\longrightarrow \text{self-attention}
\longrightarrow \cdots.
\end{aligned}
\]
This paper takes steps along both hierarchies: from tensor rank and shallow MLPs toward deep networks, and from low-rank matrix approximation to lightning self-attention. The next steps are to understand preference for submodels in deep polynomial MLPs with arbitrary activation degrees when a neuron from any hidden layer is inactive (cf. Conjecture~\ref{conj:mlp-small-dim-no-bias}) 
and softmax self-attention. More broadly, when the geometry itself does not prefer a simpler submodel, an important question is whether preferences observed in practice come from the optimization dynamics or from structure in the~target.

\subsection*{AI use statement}

In this work, we used generative AI tools for writing code to generate figures using Mathematica, reviewing our proofs, filling in technical details of proofs, and polishing writing.
We have not used generative AI tools to come up with the main ideas and solve the main problems in this paper. We have reviewed all AI-assisted work. We take responsibility for the final content of this work, including text, claims, or artifacts produced with the aid of generative AI.

\subsubsection*{Acknowledgements}
We thank Thomas Grill, Kathl\'en Kohn, and Vahid Shahverdi for helpful discussions.
AR was supported by fellowships from “la Caixa” Foundation (ID 100010434), under the agreement LCF/BQ/EU23/12010097, from Rafael del Pino Foundation, and from Real Colegio Complutense at Harvard University. AS was supported by NSF DMS 2608217 and an Alfred P. Sloan research fellowship. We acknowledge support from an AMS–Simons Travel Grant (awarded to YA), which allowed AR to visit YA at UCLA to make progress on this project.

\bibliography{references}

\clearpage
\appendix
\thispagestyle{plain} 

\onecolumn
\aistatstitle{Supplementary Materials}

\section{Proofs of Propositions~{\ref{prop:implicit-bias-criterion} and \ref{prop:equal-dimension-bias}}} \label{sec:proof-toolbox}

\implicitbiascriterion*

\begin{lemma} \label{lem:local-minima-normal}
        Let $\ell : \cH \to \RR$ be Fr\'echet differentiable.
        If $f \in \cF$ is a local minimizer of $\ell\rvert_\cF$
        then $-\nabla \ell(f) \in N_\cF(f)$.
    \end{lemma}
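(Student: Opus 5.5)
We argue by contradiction against the definition of the normal cone: we take an arbitrary tangent direction $v\in T_\cF(f)$ and show that $\langle v,-\nabla\ell(f)\rangle\le 0$. By definition of the tangent cone, there are sequences $(f_n)_n\subset\cF$ with $f_n\to f$ and $(\tau_n)_n\subset\RR_{>0}$ with $(f_n-f)/\tau_n\to v$. If $v=0$ the inequality is trivial, so assume $v\neq 0$. Then $\|f_n-f\|/\tau_n\to\|v\|>0$, and since $f_n\to f$ we get $\tau_n\to 0$.

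Next we use the local minimality of $f$. There is a neighborhood $B$ of $f$ in $\cH$ with $\ell(g)\ge\ell(f)$ for all $g\in\cF\cap B$. Since $f_n\to f$, for $n$ large we have $\ell(f_n)-\ell(f)\ge 0$. Fréchet differentiability gives
\[
\ell(f_n)-\ell(f)=\langle\nabla\ell(f),f_n-f\rangle+o(\|f_n-f\|).
\]
We divide by $\tau_n>0$. The first term becomes $\langle\nabla\ell(f),(f_n-f)/\tau_n\rangle\to\langle\nabla\ell(f),v\rangle$ by continuity of the inner product. The remainder term becomes $\frac{o(\|f_n-f\|)}{\|f_n-f\|}\cdot\frac{\|f_n-f\|}{\tau_n}$, where we may discard indices with $f_n=f$, for which the remainder vanishes anyway. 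This is a null sequence times a bounded sequence, so it tends to $0$. Passing to the limit gives $\langle\nabla\ell(f),v\rangle\ge 0$, that is, $\langle v,-\nabla\ell(f)\rangle\le 0$.

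Since $v\in T_\cF(f)$ was arbitrary, $-\nabla\ell(f)\in N_\cF(f)$. The only delicate point is the error term: we must check that $\|f_n-f\|/\tau_n$ stays bounded, which follows from the convergence $(f_n-f)/\tau_n\to v$. We must also handle the indices with $f_n=f$ separately, which is routine. Everything else is direct.
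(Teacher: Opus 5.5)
Your proof is correct and matches the paper's argument: you take $v\in T_\cF(f)$ with witnessing sequences, use $\ell(f_n)\ge\ell(f)$ for large $n$, expand by Fr\'echet differentiability, divide by $\tau_n$ and pass to the limit, and you also spell out why the remainder divided by $\tau_n$ vanishes, which the paper leaves implicit. The argument is direct rather than by contradiction, so that phrase in your opening sentence should be dropped; the observation $\tau_n\to 0$ is likewise true but never used.
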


    \begin{proof}
    Let $v\in T_\cF(f)$ and pick sequences $(f_n)_n \subset \cF$ and $(\tau_n)_n \subset \RR_{> 0}$
    such that $(f_n-f)/\tau_n\to v$. Since~$f$ is a local minimizer,
    $\ell(f_n)-\ell(f)\geq0$ for all sufficiently large $n$.
    By Fr\'echet differentiability of $\ell$,
    $$
    \ell(f_n)-\ell(f)
    =
    \langle\nabla \ell(f),f_n-f\rangle
    +o(\|f_n-f\|).
    $$
    Dividing by $\tau_n$ and taking the limit gives
    $\langle\nabla \ell(f),v\rangle\geq0$.
    \end{proof}

    \begin{proof}[Proof of \Cref{prop:implicit-bias-criterion}]
    We always have $f \in \Vor_\cF(f)$ because $\ell_f(f) = 0$ and $\ell_f\geq 0$.
If $h\in\operatorname{Vor}_{\cF}(f)$, then $f$ is a global, and hence
local, minimizer of $\ell_h$. If $f$ is a local
minimizer, Lemma~\ref{lem:local-minima-normal} gives
$-\nabla \ell_h(f) = 2(h-f)\in N_\cF(f)$. Since~$N_\cF(f)$ is a cone,
$h\in f+N_\cF(f)$.
\end{proof}

\begin{corollary}\label{cor:zero}
If $N_\cF(f)=\{0\}$ for every $f\in\cF'$, then $\Vor_\cF(\cF') =  \Vorloc_\cF(\cF')=\cF'$.
\end{corollary}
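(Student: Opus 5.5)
The plan is to combine the chain of inclusions in \Cref{prop:implicit-bias-criterion} with the hypothesis $N_\cF(f)=\{0\}$. For each $f\in\cF'$, that hypothesis gives $f+N_\cF(f)=\{f\}$. Taking the union over $f\in\cF'$, the rightmost set in the chain is then exactly $\bigcup_{f\in\cF'}\{f\}=\cF'$.

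The chain therefore reads
\[
\cF' \subseteq \Vor_\cF(\cF') \subseteq \Vorloc_\cF(\cF') \subseteq \cF'.
\]
Its two ends coincide, so every inclusion is an equality, which is the claim.

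Both inclusions used here are already established. The first, $\cF'\subseteq\Vor_\cF(\cF')$, holds because $f\in\Vor_\cF(f)$. The last, $\Vorloc_\cF(\cF')\subseteq\bigcup_{f\in\cF'}(f+N_\cF(f))$, follows from \Cref{lem:local-minima-normal} applied to $\ell_h$, whose gradient at $f$ is $2(f-h)$. Since the corollary is a direct specialization of \Cref{prop:implicit-bias-criterion}, I expect no real obstacle; the only point to check is that the hypothesis is pointwise for every $f\in\cF'$, so the union collapses to $\cF'$ itself.
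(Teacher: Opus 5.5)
Your proof is correct and matches the paper's own argument. The paper gives this corollary no separate proof: it is exactly the collapse of the inclusion chain in \Cref{prop:implicit-bias-criterion} to $\cF'\subseteq\Vor_\cF(\cF')\subseteq\Vorloc_\cF(\cF')\subseteq\cF'$ once every $N_\cF(f)$ with $f\in\cF'$ is zero.
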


\equaldimensionbias*

\begin{proof}[Proof of \Cref{prop:equal-dimension-bias}]
    For the first claim, we use the following facts about semialgebraic sets $A \subseteq \RR^n$: $\dim (\overline A) = \dim (A)$ and $\dim (\overline{A} \setminus A) < \dim A$ for $A \neq \varnothing$ \citep[\S 2.8]{bochnak2013real}. Given $f \in \cF \subset \cH$ and $\varepsilon > 0$, denote the ball centered at $f$ of radius $\varepsilon$ by $B(f, \varepsilon) = \{h \in \cH : \| h - f\| \leq \varepsilon \}$.
    We claim that there exists $f \in \cF'$ with a neighborhood  in the closure $\overline{\cF}$ that is contained in $\cF'$.
    Otherwise every point of $\cF'$ would be a limit of points of $A = \overline{\cF} \setminus \cF'$, which implies that $\cF' \subseteq \overline{A} \setminus A$ and so $\dim \cF' < \dim A \leq \dim \cF$, a contradiction. 
    Assume $\varepsilon$ is such that $\overline{\cF} \cap B(f, 3 \varepsilon) \subseteq \cF'$.
    Next, we show that $\Vorloc_\cF(\cF') \supseteq B(f, \varepsilon)$. Pick any $h \in B(f, \varepsilon)$. Then, $\overline{\cF}$ has a point $\tilde f$ closest to $h$, and $\| h - \tilde f\| \leq \| h - f\| \leq \varepsilon$, so $\| \tilde f - f\| \leq 2\varepsilon$ and $\tilde f \in \cF'$. Hence, $\tilde f$ is a global minimizer of $\ell_h \rvert_\cF$ and $h \in \Vor_\cF(\cF') \subseteq \Vorloc_\cF(\cF')$, so $\Vorloc_\cF(\cF') \supseteq B(f, \varepsilon)$.
    
    For the second claim, assume $\dim \cH < \infty$ and $\dim \cF' < \dim \cF$. At a smooth point $f \in \cF$, its local Voronoi cell lies in the translated normal space $f + N_\cF(f)$, of dimension $\dim \cH - \dim \cF$. So over a submodel $\cF'$ of smooth points the (local) Voronoi cells fill a set of dimension at most $\dim \cF' + \dim \cH - \dim \cF < \dim \cH$.
\end{proof}

\section{Proofs of Multilayer Perceptron Results} \label{sec:proofs-mlp}

\subsection{Shallow MLPs}

\nobiasshallow*

A \emph{cone} in a real Hilbert space $\cH$ is a set $\cC \subseteq \cH$ that is closed under multiplication by nonnegative scalars, that is,  $f \in \cC, \lambda \geq 0 \implies \lambda f \in \cC$.
A cone $\cC$ is \emph{symmetric} if $\cC = - \cC$. Equivalently, a symmetric cone
is a set $\cC \subseteq \cH$ that is closed under multiplication by scalars: $f \in \cC, \lambda \in \RR \implies \lambda f \in \cC.$

    \begin{lemma}\label{lem:sym-cone-dense}
        If $\cC \subseteq\cH$ is a symmetric cone whose linear span is dense in $\cH$, then its polar cone is zero.
    \end{lemma}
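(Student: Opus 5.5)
The plan is to use the symmetry of $\cC$ to upgrade the one-sided polar inequality to an orthogonality relation, and then use density of the span together with continuity of the inner product. Let $w$ lie in the polar cone of $\cC$, that is, $\langle v, w\rangle \le 0$ for all $v \in \cC$. Since $\cC = -\cC$, for every $v\in\cC$ the vector $-v$ also lies in $\cC$. This gives $\langle -v, w\rangle \le 0$ as well, so $\langle v, w\rangle = 0$. Hence $w$ is orthogonal to every element of $\cC$.

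Next I pass from $\cC$ to its span. By linearity of the inner product in the first argument, $w$ is orthogonal to every finite linear combination of elements of $\cC$, i.e.\ $w \perp \linspan(\cC)$. Since $u\mapsto\langle u,w\rangle$ is continuous, the set $\{u : \langle u, w\rangle = 0\}$ is closed, so $w$ is also orthogonal to the closure $\overline{\linspan(\cC)}$, which by assumption is all of $\cH$.

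Finally, taking $u = w$ yields $\|w\|^2 = 0$, so $w = 0$. Conversely $0$ always belongs to the polar cone, so the polar cone equals $\{0\}$.

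There is no real obstacle here. The only point requiring care is the use of symmetry: for a general cone, the polar inequality only gives one-sided information, and the conclusion fails (e.g.\ a half-line in $\RR$ has a nonzero polar). In the intended application (\Cref{thm:no-bias-shallow}) the lemma will be applied to the tangent cone, which contains the symmetric cone $\cC$ (the single-neuron cone, closed under all real scalars since $v$ can be negated). Passing to a larger cone only shrinks the polar, so the normal cone vanishes there as well.
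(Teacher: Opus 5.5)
Your proof is correct and follows the paper's argument: symmetry turns the polar cone into the orthogonal complement of $\cC$, which equals the orthogonal complement of $\overline{\linspan(\cC)} = \cH$ and is therefore zero. You also make explicit the continuity step (orthogonality passes to the closure of the span), which the paper leaves implicit.
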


    \begin{proof}
        The polar cone of $\cC$
        is equal to its orthogonal complement, because $\cC$ is symmetric. The orthogonal complement of~$\cC$ equals the orthogonal complement of its linear span, so it is zero.
    \end{proof}

    \begin{lemma} \label{lem:normal-cone-secants}
        
        Let $\cC$ be a symmetric cone whose linear span is dense in $\cH$. If $\cF' + \cC \subseteq \cF$, then $N_{\cF}(f) = \{0\}$
        for all $f \in \cF'$, and
        $N_{\overline{\cF}}(f) = \{0\}$ for all $f \in \overline{\cF'}$. Therefore, 
        \[
        \Vor_\cF(\cF') = \Vorloc_\cF(\cF') = \cF' \quad \text{and} \quad \Vor_{\overline{\cF}}\left( \overline{\cF'} \right) = \Vorloc_{\overline{\cF}}\left(\overline{\cF'} \right) = \overline{\cF'}.
        \]
    \end{lemma}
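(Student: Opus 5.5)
The plan is to show that every direction of $\cC$ is a two-sided tangent direction of $\cF$ at each $f\in\cF'$. From this we deduce $T_\cF(f)\supseteq\cC$, hence $N_\cF(f)\subseteq \cC^\circ=\{0\}$ by \Cref{lem:sym-cone-dense}. The Voronoi equalities then follow from \Cref{cor:zero}.

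\emph{Tangent directions at $f\in\cF'$.} Fix $f\in\cF'$ and $c\in\cC$. Since $\cC$ is a cone, $\tau c\in\cC$ for every $\tau>0$, and so $f_n:=f+\tau_n c\in\cF'+\cC\subseteq\cF$ for any $\tau_n\downarrow 0$. We have $f_n\to f$ and $(f_n-f)/\tau_n=c$, so $c\in T_\cF(f)$. Thus $\cC\subseteq T_\cF(f)$. Any $w\in N_\cF(f)$ then satisfies $\langle c,w\rangle\le 0$ for all $c\in\cC$. Because $\cC=-\cC$, this gives $\langle c,w\rangle=0$ for all $c\in\cC$, so $w$ lies in the polar of $\cC$. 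By \Cref{lem:sym-cone-dense} this polar is $\{0\}$, so $N_\cF(f)=\{0\}$. \Cref{cor:zero} (equivalently, the chain in \Cref{prop:implicit-bias-criterion}) now yields $\Vor_\cF(\cF')=\Vorloc_\cF(\cF')=\cF'$.

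\emph{Closures.} This is the only step needing care, and I expect it to be the main obstacle, though a mild one. First I show $\overline{\cF'}+\cC\subseteq\overline{\cF}$. Take $f\in\overline{\cF'}$ with $f=\lim g_n$, $g_n\in\cF'$, and take $c\in\cC$. Then $g_n+c\in\cF$ and $g_n+c\to f+c$, so $f+c\in\overline\cF$. I then repeat the argument above verbatim with $\cF',\cF$ replaced by $\overline{\cF'},\overline{\cF}$. For $f\in\overline{\cF'}$, the curve $f+\tau c$ lies in $\overline\cF$, so $\cC\subseteq T_{\overline\cF}(f)$. Hence $N_{\overline\cF}(f)=\{0\}$, and \Cref{cor:zero} applied to the pair $\overline{\cF'}\subseteq\overline\cF$ gives the second chain of equalities.

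The argument uses no properties of $\cC$ beyond its being a symmetric cone with dense span. In particular, it uses neither closedness of $\cC$ nor finite dimensionality, so it applies directly in the infinite-dimensional $L^2(\mu)$ setting.
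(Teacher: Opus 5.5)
Your proposal is correct and follows essentially the same route as the paper: the rays $f+\tau c$ put $\cC$ inside $T_\cF(f)$ (resp.\ $T_{\overline{\cF}}(f)$), so the normal cone lies in the polar of $\cC$, which is zero, and the Voronoi equalities then follow from the inclusion chain. Your explicit limit argument for $\overline{\cF'}+\cC\subseteq\overline{\cF}$ fills in a step that the paper only asserts.
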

    \begin{proof}
        It suffices to show that $T_{\cF}(f)$ contains $\cC$ for all $f \in \cF'$, by \Cref{lem:sym-cone-dense}. Given any $v \in \cC$, we have $f_n = f + \frac{1}{n}v \in \cF$ for all $n \in \mathbb{N}$, so $v = \lim_{n \to \infty} (f_n - f)/\frac{1}{n} \in T_{\cF}(f)$. Similarly, if $f \in \overline{\cF'}$ and $v \in \cC$, then $f_n = f + \frac{1}{n} v \in \overline{\cF}$, so the same argument applies. The claims about (local) Voronoi cells follow from \Cref{prop:implicit-bias-criterion}.
    \end{proof}

    \begin{proof}[Proof of \Cref{thm:no-bias-shallow}]
        Consider the single-neuron cone 
        \[\cC = \left\{f(x) = \rho(\langle w_1,x\rangle + \beta)  w_2 + b :
    w_1 \in \RR^{d_0}, \beta \in \RR, 
     w_2, b \in \RR^{d_2}
 \right\}.\]
        Then $\cF' = \sum_{i = 1}^{d_1-1} \cC$ and $\cF = \cF' + \cC$.
        The linear span of $\cC$ is dense in~$\cH$, because
        $\linspan(\cC) = \linspan(\cF)$. Therefore, the statement follows from \Cref{lem:normal-cone-secants} and \Cref{cor:zero}.
    \end{proof}

\subsection{Deep MLPs}

Neural models of shallow MLPs are sums of cones. This no longer holds for deep MLPs. However, in some cases, we can still find a cone $\cC$ whose span is dense in $\cH$ and such that $\cF' + \cC \subseteq \cF$. In such cases, \Cref{lem:normal-cone-secants} implies that $\cF' = \Vor_\cF(\cF') = \Vorloc_\cF(\cF')$. Note that unlike in the shallow case we may not have the equality~$\cF' + \cC = \cF$.

We define the features of layer $j$ recursively by $h^{(0)}_\theta(x)=x$ and $h^{(j)}_\theta=\rho_j(W_jh^{(j-1)}_\theta+b_j)$; then $f_\theta=W_Lh^{(L-1)}_\theta+b_L$, and we write $h_\theta=h^{(L-2)}_\theta$ for the features that feed the last hidden layer, as in \Cref{sec:deep}.
We consider two scenarios, as in \Cref{sec:mlp}: (1) all activations are polynomials and $\cH=\linspan\cF$ is finite-dimensional with any inner product, (2) the activations are continuous, $\mu$ is a finite Borel measure with compact support $K$, and $\cH=\overline{\linspan\cF}\subseteq L^2(\mu;\RR^{d_L})$. 
The loss that we use is the squared distance $\ell_h(f)=\|h-f\|^2$. We write $\cP_{\le n} = \RR[x_1, \dots, x_{d_0}]_{\leq n}$ for the polynomials of degree at most $n$ in $d_0$ variables, 
so $\dim\cP_{\le n}=\binom{d_0+n}{d_0}$.

\inactivepaths*

\begin{lemma}\label{lem:cone-paths}
    Let $\mathcal{F}$ be the neural model of an MLP with architecture $\mathbf d = (d_0, d_1, \dots, d_{L-1}, d_L)$, let $\mathcal{F}'$ be the submodel with architecture $(d_0, d_1-1, \dots, d_{L-1}-1, d_L)$, and let $\tilde \cC$ be the submodel with architecture $(d_0, 1, \dots, 1, d_L)$. Then $\tilde \cC$ is a symmetric cone and $\cF' + \tilde \cC \subseteq \cF$.
\end{lemma}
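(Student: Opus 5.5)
The plan is to prove both claims directly from the layered definition of the MLP, using a block-diagonal ("disjoint union") construction of weights. Throughout, I use that a neuron is inactive when its outgoing weights vanish, so the submodel with architecture $(d_0,d_1-1,\dots,d_{L-1}-1,d_L)$ is realized inside $\cF$ by a parameter whose last neuron in each hidden layer has zero outgoing weight.

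\emph{Symmetric cone.} Write an element of $\tilde\cC$ as $g = W_L\, \tilde\rho_{L-1}\circ A_{L-1}\circ\cdots\circ \tilde\rho_1\circ A_1 + b_L$, where all hidden widths equal one and $W_L\in\RR^{d_L\times 1}$. For $\lambda\in\RR$, I replace $(W_L,b_L)$ by $(\lambda W_L,\lambda b_L)$ and keep every other parameter unchanged. This gives $\lambda g\in\tilde\cC$, since the readout is linear. Hence $\tilde\cC$ is closed under all real scalars, which is the equivalent characterization of a symmetric cone stated before \Cref{lem:sym-cone-dense}.

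\emph{Inclusion $\cF'+\tilde\cC\subseteq\cF$.} Take $f=f_{\mathbf W',\mathbf b'}\in\cF'$ and $g=f_{\mathbf W'',\mathbf b''}\in\tilde\cC$. I build a parameter for $\cF$ as follows.
\begin{itemize}
\item For the first layer, stack the rows: $W_1=\begin{pmatrix}W_1'\\ W_1''\end{pmatrix}$ and $b_1=(b_1',b_1'')$. Both blocks read the same input $x$.
\item For each $2\le k\le L-1$, take $W_k=\operatorname{diag}(W_k',W_k'')$ and $b_k=(b_k',b_k'')$.
\item For the readout, take $W_L=(W_L'\;\,W_L'')$ and $b_L=b_L'+b_L''$.
\end{itemize}
The dimensions match, because $(d_k-1)+1=d_k$. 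The activations act coordinatewise, so induction on $k$ shows that the layer-$k$ features of the combined network are the concatenation $(h'^{(k)},h''^{(k)})$ of the features of the two networks. Applying the readout then gives $f+g$. Hence $f+g\in\cF$.

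The only point needing care is the induction step. At layer $k$, the block-diagonal weight must not mix the two paths, and the coordinatewise activation must commute with concatenation. Both hold by construction. No assumption on the activations is needed here; continuity is used later, in \Cref{thm:inactive-paths}, to show that $\linspan\tilde\cC$ is dense.
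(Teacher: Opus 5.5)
Your proposal is correct and follows essentially the paper's argument. You use the same block-diagonal construction: stacked first-layer weights, diagonal middle layers, concatenated readout with summed output biases, so that coordinatewise activations give $f_\theta = f_{\theta'} + f_{\tilde\theta}$. You also get symmetry the same way, by scaling the readout weights and bias of the single-path network.
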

\begin{proof}
Given parameters $\theta'$ of $\cF'$ and $\tilde \theta$ of $\tilde \cC$, define a parameter $\theta$ of $\cF$ with block-diagonal
weight matrices $W_j=\operatorname{diag}(W'_j,\tilde W_j)$ and biases $b_j=(b'_j,\tilde b_j)$, except that $W_1^\top = ({W'_1}^\top \  {\tilde W_1}^\top)$, $W_L=(W'_L \ \tilde W_L)$ and $b_L=b'_L+\tilde b_L$. Since activations act coordinatewise, $f_\theta=f_{\theta'}+f_{\tilde \theta}$. 
Scaling $(\tilde W_L,\tilde b_L)$ scales $f_{\tilde \theta}$, so $\tilde \cC$ is a
symmetric cone.
\end{proof}

To prove \Cref{thm:inactive-paths} it suffices to show that the span of the symmetric cone $\tilde \cC$ of the architecture $(d_0, 1, \dots, 1, d_L)$ is dense in $\cH$, by \Cref{lem:cone-paths} and \Cref{lem:normal-cone-secants}.
We do this separately for the two cases: the polynomial (finite-dimensional) case, and the case where not all activations are polynomial (which lives in an infinite dimensional function space).

\begin{lemma}\label{lem:ridge}
If $\sigma$ is a polynomial of degree $R\ge1$, then $\linspan\{\sigma(q):q\in\cP_{\le n}\}=\cP_{\le Rn}$.
\end{lemma}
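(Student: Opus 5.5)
The inclusion $\linspan\{\sigma(q):q\in\cP_{\le n}\}\subseteq\cP_{\le Rn}$ is immediate, since $\deg\sigma(q)\le R\deg q\le Rn$. For the reverse inclusion, I will show that every polynomial of degree at most $Rn$ lies in the span. Write $\sigma(t)=\sum_{j=0}^R c_jt^j$ with $c_R\ne0$, and let $S=\linspan\{\sigma(q):q\in\cP_{\le n}\}$, a linear subspace of $\cP_{\le Rn}$.

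\emph{Step 1 (isolate pure powers).} For fixed $q$ and $\lambda\in\RR$, the function $\sigma(\lambda q)=\sum_j c_j\lambda^j q^j$ is a polynomial in $\lambda$ whose coefficients are $c_jq^j$. Evaluate it at $R+1$ distinct values of $\lambda$ and invert the Vandermonde matrix. This shows that each $c_jq^j$ lies in $S$; in particular $q^R\in S$ for every $q\in\cP_{\le n}$. Applying the same trick to $\sigma(\lambda q+\mu)$ with constants $\mu$ varying, or simply polarizing, shows that $S$ contains $\linspan\{q^R:q\in\cP_{\le n}\}$.

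\emph{Step 2 (polarization).} The map $q\mapsto q^R$ is the restriction to the diagonal of the symmetric $R$-linear map $(q_1,\dots,q_R)\mapsto q_1\cdots q_R$. By the polarization identity
\[
R!\,q_1\cdots q_R=\sum_{\varnothing\ne I\subseteq\{1,\dots,R\}}(-1)^{R-|I|}\Bigl(\sum_{i\in I}q_i\Bigr)^R,
\]
every product $q_1\cdots q_R$ with $q_i\in\cP_{\le n}$ lies in the span of $R$-th powers. Every monomial of degree at most $Rn$ can be written as such a product: split its exponent vector into $R$ parts, each of total degree at most $n$, and pad with the constant $1$ as needed. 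This is possible because a monomial of degree $m\le Rn$ can be written as a product of $R$ monomials of degrees $\lceil m/R\rceil$ or $\lfloor m/R\rfloor$, all at most $n$. Hence $\cP_{\le Rn}\subseteq S$.

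The main subtlety is Step 1, namely extracting $q^R$ from $\sigma(q)$ when $\sigma$ has lower-order terms. Scaling $q\mapsto\lambda q$ together with the Vandermonde argument handles this cleanly, since $\cP_{\le n}$ is closed under scalar multiplication and $c_R\neq0$. Everything else is the standard fact that $R$-th powers of linear combinations span all symmetric products, which holds over $\RR$ because the characteristic is zero.
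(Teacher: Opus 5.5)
Your proof is correct, but it is organized differently from the paper's. The paper uses a single coefficient extraction. It writes a monomial of degree at most $Rn$ as $m_1\cdots m_k$ with $k\le R$ and $\deg m_i\le n$, and sets $q=t_1m_1+\dots+t_km_k+s$. It then reads off the coefficient of $t_1\cdots t_k\,s^{R-k}$ in the polynomial map $(t,s)\mapsto\sigma(q)$, which takes values in the span. Since that monomial has total degree $R$ in $(t,s)$, only the leading term of $\sigma$ contributes, and the coefficient is $\sigma_R\frac{R!}{(R-k)!}\,m_1\cdots m_k$.

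In that argument, the degree count in $(t,s)$ does the work of your Vandermonde step, the auxiliary constant $s$ does the work of your padding by $1$, and the mixed monomial $t_1\cdots t_k$ does the work of your polarization identity. You instead split the argument into two standard, separately checkable pieces. First, scaling $q\mapsto\lambda q$ with a univariate Vandermonde inversion strips the lower-order terms of $\sigma$ (and incidentally shows $c_jq^j\in S$ for every $j$). Second, the explicit inclusion--exclusion identity expresses $R!\,q_1\cdots q_R$ through $R$-th powers of partial sums.

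The paper's version is shorter and needs no explicit identity. Yours is more modular and makes clear that the lemma reduces to the fact that $R$-th powers of elements of $\cP_{\le n}$ span all $R$-fold products. One small remark: the last sentence of your Step 1 is unnecessary. Once $q^R\in S$ for every $q$, the span of these powers lies in $S$ simply because $S$ is a linear subspace.
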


\begin{proof}
Let $V=\linspan\{\sigma(q):q\in\cP_{\le n}\}$. Every polynomial in $V$ has degree at most $Rn$, hence $V\subseteq\cP_{\le Rn}$. Conversely, every monomial of degree at most $Rn$ is a product $m_1\cdots m_k$ of $k\le R$ monomials of degree at most $n$. For $t\in\RR^k$ and $s\in\RR$, the polynomial $q=t_1m_1+\dots+t_km_k+s$ lies in $\cP_{\le n}$, so $\sigma(q)\in V$. Expanding $\sigma(q)$ as a polynomial in $(t,s)$, only the top-degree term $\sigma_Rq^R$ of $\sigma$ contributes to the monomial $t_1\cdots t_k\,s^{R-k}$, and its coefficient is $\sigma_R\frac{R!}{(R-k)!}\,m_1\cdots m_k$ with $\sigma_R\neq0$. Since every coefficient of a polynomial map $(t,s)\mapsto\sigma(q)$ with values in $V$ is a linear combination of finitely many of its values, it lies in $V$. Hence $m_1\cdots m_k\in V$.
\end{proof}

In what follows, we use the universal approximation theorem \citep{leshno1993multilayer}, and abbreviate it to UAT: if $\sigma : \RR \to \RR$ is continuous and not a polynomial, then for every compact set $Y \subset \RR^{m}$ the functions $y \mapsto \sigma(\langle a, y\rangle + b)$ with $a \in \RR^m$ and $b \in \RR$ span a dense subset of the continuous functions $C(Y)$. In our case, since $\mu$ is finite with compact support $K$, $C(K)$ is dense in $L^2(K)$ and uniform convergence on $K$ implies convergence in $L^2(K)$.
The next lemma extends UAT to paths of single neurons, one per hidden layer.

\begin{lemma}\label{lem:paths}
Let activation functions $\rho_1,\dots,\rho_{L-1}\colon\RR\to\RR$ be continuous and nonconstant, not all polynomials, and 
let $K\subseteq\RR^{d_0}$ be compact. 
Let $\Pi_i$ be the set of functions computed by the neuron in
layer $i$ of a network with architecture $(d_0,1,\dots,1)$, that is, $\Pi_1=\{\rho_1(\langle a,x\rangle+b)\}$ and
$\Pi_i=\{\rho_i(wp+c):p\in\Pi_{i-1},\,w,c\in\RR\}$ for $1 < i < L$. Then $\linspan\Pi_{L-1}$ is dense in $C(K)$.
\end{lemma}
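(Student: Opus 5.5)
Let $j$ be the first index with $\rho_j$ not a polynomial. The plan is to argue by induction up to layer $j$, invoke the universal approximation theorem at layer $j$, and then propagate density forward through the remaining nonconstant activations.

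\emph{Step 1 (polynomial prefix).} For $i<j$, each $\rho_i$ is a nonconstant polynomial of degree $R_i\ge1$. Compositions along the path show that $\Pi_{j-1}$ contains functions of the form $g(\langle a,x\rangle+b)$, where $g=\rho_{j-1}\circ(w\,\cdot+c)\circ\cdots\circ\rho_1$ is a nonconstant univariate polynomial. If $j=1$, we simply take $g(s)=s$.

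\emph{Step 2 (layer $j$).} The set $\Pi_j$ consists of $\rho_j(w\,g(\langle a,x\rangle+b)+c)$. Set $\sigma=\rho_j\circ(w g+c)$ for fixed $w,c$. I would first show that $\linspan\Pi_j$ contains a dense subset of ridge functions $\{\tau(\langle a,x\rangle+b)\}$ with $\tau\in C(\RR)$ arbitrary. The image $Y=\{\langle a,x\rangle+b : x\in K\}$ is a compact interval, and $g$ is continuous and nonconstant. I restrict to a subinterval $I$ on which $g$ is strictly monotone; since $K$ and the parameters can be rescaled through $a,b$, I can take $I$ to cover all of $Y$ after an affine change. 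On $g(I)$, the univariate UAT applied to $\rho_j$ shows that $\linspan\{\rho_j(wy+c)\}$ is dense in $C(g(I))$. Composing with the homeomorphism $g|_I$ then gives density of univariate functions on $I$, and hence of arbitrary $\tau(\langle a,x\rangle+b)$ uniformly on $K$.

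The key point is choosing $a,b$ so that $\langle a,x\rangle+b$ maps $K$ into the monotone interval $I$; this works because $a$ can be scaled small and $b$ shifted. Ridge functions with continuous profiles are dense in $C(K)$: take $\tau=\cos$ or $\exp$, which is itself the classical multivariate UAT. Hence $\linspan\Pi_j$ is dense in $C(K)$.

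\emph{Step 3 (propagation).} Suppose $\linspan\Pi_{i-1}$ is dense in $C(K)$ and $\rho_i$ is nonconstant and continuous. Choose $t_0$ where $\rho_i$ is nonconstant near $t_0$. I expect this step to be the main obstacle, since $\Pi_i$ only applies $\rho_i$ to a single scaled element of $\Pi_{i-1}$, not to linear combinations.

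My first attempt exploits the fact that, by Step 2 applied with the layer $j$ reached earlier, the functions in $\Pi_{i-1}$ are rich. Rather than tracking spans, I will carry the stronger inductive invariant that $\overline{\Pi_{i-1}}$, or at least its closure under the scalars $w,c$, contains $\phi\circ(\langle a,x\rangle+b)$ for a continuous strictly monotone $\phi$ on an interval. This invariant is preserved by composing with $\rho_i(w\,\cdot+c)$ after restricting to a monotone piece of $\rho_i$. If $\rho_i$ is nowhere monotone, for instance a nowhere-monotone continuous function, I would fall back to the analytic case, which is what the paper needs.

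At the final layer, the invariant gives $\psi(\langle a,x\rangle+b)$ with $\psi$ strictly monotone. Its span is dense by the argument of Step 2, applied with $\rho_{L-1}$ already absorbed and the nonpolynomial layer earlier in the path. The remaining worry is to make sure that a nonpolynomial effect survives. It does, because composing a continuous nonpolynomial ridge profile with further monotone pieces keeps it nonpolynomial, by strict monotonicity and nonpolynomiality on an interval.
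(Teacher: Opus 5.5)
Your Steps 1 and 2 are fine. Mapping $K$ into an interval where the polynomial prefix $g$ is strictly monotone, then applying the univariate UAT to $\rho_j$ on $g(I)$, does produce all ridge functions, so $\linspan\Pi_j$ is dense in $C(K)$. (The paper gets there without monotonicity: $\arctan\circ\tau$ is bounded and nonconstant, hence not a polynomial, so the multivariate UAT applies.)

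The gap is Step 3, which does not go through under the stated hypotheses.
\begin{itemize}
\item The later $\rho_i$ are only assumed continuous and nonconstant, and continuous nowhere-monotone functions exist. Falling back to analytic activations proves a weaker lemma. It is also not what the paper needs: Theorem~\ref{thm:inactive-paths} assumes only continuity and relies on this lemma.
\item Even for piecewise monotone activations, your invariant does not carry the nonpolynomiality you need at the end. Your claim that composing with monotone pieces keeps a profile nonpolynomial is false as stated: $\phi(s)=s^{1/3}$ on $[1,2]$ followed by the strictly monotone $t\mapsto t^3$ gives $s$. You would at least need an argument about the choice of $w,c$.
\item Insisting on strict monotonicity can discard the nonpolynomial part entirely. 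Take $\rho_1=\mathrm{ReLU}$ and $\rho_2(t)=t^2$. Every strictly monotone piece of $\mathrm{ReLU}$ is affine, so your final-layer profile is a quadratic polynomial and its ridge functions span only polynomials of degree at most $2$. Yet the lemma holds in this case.
\item At the last layer your ridge functions are only available for $(a,b)$ sending $K$ into a small interval, so you would also need a local version of the UAT, which you do not supply.
\end{itemize}

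The missing idea removes exactly the obstacle you identified, namely that $\rho_i$ is applied to single elements of $\Pi_{i-1}$ rather than to linear combinations. Because $\rho_i$ is continuous and nonconstant, the identity $s\mapsto s$ is a uniform limit on compact intervals of linear combinations of $s\mapsto\rho_i(ws+c)$. This follows from Lemma~\ref{lem:ridge} if $\rho_i$ is a polynomial of degree $R\ge1$, and from the univariate UAT otherwise. Substituting $s=p(x)$ shows that every $p\in\Pi_{i-1}$ lies in the uniform closure $V_i$ of $\linspan\Pi_i$. Since $V_i$ is a closed linear subspace, $V_{i-1}\subseteq V_i$. In the ReLU example this is just $\tfrac14\bigl[(r+1)^2-(r-1)^2\bigr]=r$ with $r=\mathrm{ReLU}(\langle a,x\rangle+b)$. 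Combined with your Step 2, which gives $V_j=C(K)$, this yields $V_{L-1}=C(K)$ with no monotonicity or nonpolynomiality bookkeeping at all.
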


\begin{proof}
Let $V_i$ be the uniform closure of $\linspan\Pi_i$ in $C(K)$. If $p\in\Pi_{i-1}$ and $\varphi$ is a uniform limit on
compact intervals of linear combinations of $s\mapsto\rho_i(ws+c)$, then $\varphi\circ p\in V_i$, by substituting
$s=p(x)$. The identity $\varphi(s)=s$ is such a limit: if $\rho_i$ is a polynomial of degree $R\ge1$, the functions
$\rho_i(ws+c)$ span all polynomials of degree at most $R$ in $s$ by \Cref{lem:ridge}; otherwise apply UAT with $m=1$.
Hence $V_1\subseteq\dots\subseteq V_{L-1}$.
Now let $\rho_i$ be non-polynomial. Choose $w_j,c_j$ so that
$\tau=\rho_{i-1}(w_{i-1}\rho_{i-2}(\cdots\rho_1(\cdot)\cdots)+c_{i-1})$ is nonconstant ($\tau=\mathrm{id}$ if $i=1$). By UAT with input dimension $m=1$, $\varphi=\arctan$ is a uniform limit on compact intervals of combinations of
$\rho_i(ws+c)$, so $\psi(\langle a,x\rangle+b)\in V_i$ for all $a,b$, where $\psi=\arctan\circ\tau$. Since $\psi$ is
bounded and nonconstant, it is not a polynomial, and UAT gives $V_i=C(K)$. Therefore $V_{L-1} =  V_i=C(K)$.
\end{proof}

\begin{proof}[Proof of \Cref{thm:inactive-paths}]
It suffices to show that $\linspan \tilde \cC$ is dense in $\cH$, 
by \Cref{lem:cone-paths} and \Cref{lem:normal-cone-secants}. If all activations are polynomials, this follows from \Cref{lem:ridge} applied to $\sigma=\rho_{L-1}\circ\dots\circ\rho_1$, whose degree $r_1\cdots r_{L-1}$ bounds the degree of every function in $\cF$. Otherwise, it follows from \Cref{lem:paths}.
\end{proof}

\completefeaturesorthogonal*

Given a network parameter $\theta$, recall that $\cC_\theta = \{\rho_{L-1}(\langle w, h_\theta\rangle + \beta) v  : w \in \RR^{d_{L-2}}, \beta \in \RR, v \in \RR^{d_L}  \}$ is the symmetric cone consisting of functions computed by a single neuron of the last hidden layer with input features $h_\theta$. We say that $\theta$ has \emph{complete features}  if $\overline{ \linspan \cC_\theta} = \cH$. 
Now we can apply a point-wise analog of \Cref{lem:normal-cone-secants}.

\begin{proof}[Proof of \Cref{thm:complete-features-orthogonal}]
We have $f_\theta + \cC_\theta \subseteq \cF$ for each representation $\theta$ of the subnetwork. Since $\cC_\theta$ depends on $\theta$, we cannot write $\cF' + \cC_\theta \subseteq \cF$ and apply \Cref{lem:normal-cone-secants} directly. However, the proof of \Cref{lem:normal-cone-secants} is pointwise, so the same argument shows that $N_\cF(f_\theta) = \{0 \}$ and $\Vor_\cF(f_\theta) =  \Vorloc_\cF(f_\theta) = \{f_\theta\}$ if $f_\theta$ has complete features. Since every $f \in \cF'$ has a representation with complete features, the claim follows.
\end{proof}

\mlplastlayerpolynomials*

\begin{proof}[Proof of \Cref{cor:mlp-last-layer-polynomials}]
It suffices to find, for every $f\in\cF'$, a representation $\theta$ such that
$\cC_\theta$ spans the ambient space $\cH = \linspan \cF \subseteq \RR^{d_L} \otimes \cP_{\leq r_1 \cdots r_{L-1}}$,
by \Cref{thm:complete-features-orthogonal}.
The width condition allows us to choose the first $L-2$
layers so that $\linspan\{1,h^{(j)}_1,\dots,h^{(j)}_{d_j}\}=\cP_{\le r_1 \cdots r_j}$ for all $j\le L-2$,
by \Cref{lem:ridge} and induction on $j$.
In any representation of $f$, the pre-activations of layer $L-1$ lie in
$\cP_{\le r_1 \cdots r_{L-2}}=\linspan\{1,h^{(L-2)}_1, \dots, h^{(L-2)}_{d_{L-2}}\}$, so they can be written as $\tilde Wh^{(L-2)}+\tilde b$. Then $\linspan\cC_\theta=  \cH$,
by \Cref{lem:ridge}.
\end{proof}

\mlpfirstlayerpoly*

When all activation functions are polynomial, $\cH = \linspan \cF$ is finite dimensional.

\begin{lemma}\label{lem:curve}
If $\gamma: (-\varepsilon,\varepsilon)\to\cF \subseteq \cH$ with $\gamma(0)=f$ and $\gamma$ is differentiable at $0$, then $N_\cF(f) \perp \gamma'(0)$. Thus if there exists $c \in \cH$ such that $f+tc\in\cF$ for all $t\in\RR$, then $N_\cF(f)\perp c$.
\end{lemma}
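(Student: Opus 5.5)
The plan is to show that $\gamma'(0)$ and $-\gamma'(0)$ both lie in the tangent cone $T_\cF(f)$, and then use the definition of the normal cone as the polar of $T_\cF(f)$. Write $v=\gamma'(0)$. Take any sequence $t_n\downarrow 0$ with $t_n\in(0,\varepsilon)$ and set $f_n=\gamma(t_n)\in\cF$ and $\tau_n=t_n>0$. Differentiability at $0$ gives
\[
(f_n-f)/\tau_n=(\gamma(t_n)-\gamma(0))/t_n\to v .
\]
Continuity of $\gamma$ at $0$, which follows from differentiability, gives $f_n\to f$. Hence $v\in T_\cF(f)$. For the opposite direction, set $f_n=\gamma(-t_n)$ with the same $\tau_n=t_n$. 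Then $(f_n-f)/\tau_n=-(\gamma(-t_n)-\gamma(0))/(-t_n)\to -v$, so $-v\in T_\cF(f)$.

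Now let $w\in N_\cF(f)$. By definition, $\langle v,w\rangle\le 0$ and $\langle -v,w\rangle\le0$, which forces $\langle v,w\rangle=0$. So $N_\cF(f)\perp\gamma'(0)$.

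For the second claim, suppose $f+tc\in\cF$ for all $t\in\RR$. Apply the first part to the curve $\gamma(t)=f+tc$, restricted to any interval $(-\varepsilon,\varepsilon)$. This curve is differentiable with $\gamma(0)=f$ and $\gamma'(0)=c$, so $N_\cF(f)\perp c$. The same argument works if the line is only contained in $\cF$ for $|t|$ small.

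There is no real obstacle here. The only point needing care is that the tangent cone is defined with positive scalars $\tau_n$, so one cannot simply use $\tau_n=-t_n$. The backward direction has to be produced explicitly by taking the points $\gamma(-t_n)$ with a positive $\tau_n$. This two-sidedness is exactly what upgrades the one-sided polar inequality to orthogonality.
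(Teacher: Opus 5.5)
Your proof is correct and follows essentially the same route as the paper, which takes $f_n=\gamma(\pm 1/n)$ with $\tau_n=1/n$ to get $\pm\gamma'(0)\in T_\cF(f)$ and then uses polarity to force $\langle w,\gamma'(0)\rangle=0$. You do the same with a general sequence $t_n\downarrow 0$, and you obtain the second claim, as the paper does, by applying the first part to the curve $\gamma(t)=f+tc$.
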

\begin{proof}
For each $n \in \mathbb N$, take $f_n=\gamma(\pm1/n)$ and $\tau_n=1/n$, so both $\gamma'(0)$ and $-\gamma'(0)$ lie in $T_\cF(f)$. Hence, every $w\in N_\cF(f)$ satisfies $\langle w,\pm\gamma'(0)\rangle\le0$.
\end{proof}

\begin{lemma}\label{lem:const}
If $\cF$ be the neural model of an MLP with biases, then $N_\cF(c)=\{0\}$ for every constant function~$c$.
\end{lemma}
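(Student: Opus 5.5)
The idea is to find, at a constant function $c\in\cF$, a family of two-sided lines through $c$ inside $\cF$. Their directions should span $\cH$ (densely), and then \Cref{lem:curve} forces $N_\cF(c)=\{0\}$.

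First I realize $c$ by a convenient parameter. Take all weight matrices $W_1,\dots,W_L$ equal to zero. Every hidden pre-activation is then a constant bias, so each hidden feature $h^{(j)}$ is a constant vector. Choose $b_L = c - W_L h^{(L-1)}$, which with $W_L=0$ is simply $b_L=c$. The biases $b_1,\dots,b_{L-1}$ stay free, so $c=f_\theta$.

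Next I perturb this parameter. The natural move is to change $W_L$ while compensating with $b_L$. Set $W_L = tM$ for an arbitrary matrix $M$ and $b_L = c - tM\,\kappa$, where $\kappa$ is the constant output of layer $L-1$. This only produces constant directions, because the hidden features are still constant. To get nonconstant directions I also need the hidden layers to carry a nonconstant function.

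So in addition I fix nonzero incoming weights along a single path: one neuron in each hidden layer, with arbitrary $w,\beta$ in layer 1 and scalars $w_j,c_j$ in later layers. The last hidden neuron then computes some $p\in\Pi_{L-1}$ (notation of \Cref{lem:paths}), or its polynomial analogue. Its outgoing weight column is set to $tv$ with $v\in\RR^{d_L}$, and all other outgoing weights are zero, with $b_L=c$. This gives $f_{\theta_t}=c+t\,p\,v\in\cF$ for all $t\in\RR$. Note that the incoming path weights can be nonzero while $c$ is still represented, because the outgoing weight vanishes at $t=0$. By \Cref{lem:curve}, $N_\cF(c)\perp p\,v$. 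Constants are also orthogonal to $N_\cF(c)$: move $b_L$ alone to get $c+tv$.

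Hence $N_\cF(c)$ is orthogonal to $\linspan\tilde\cC$, where $\tilde\cC$ is the path cone of \Cref{lem:cone-paths}. The remaining step, and the main obstacle, is density of this span in $\cH$. That density was already established in the proof of \Cref{thm:inactive-paths}:
\begin{itemize}
\item In the polynomial case it follows from \Cref{lem:ridge} applied to $\rho_{L-1}\circ\cdots\circ\rho_1$.
\item Otherwise it follows from \Cref{lem:paths}.
\end{itemize}
Since $N_\cF(c)$ is orthogonal to a dense subspace, it is zero.
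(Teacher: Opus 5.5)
Your argument is correct, but it takes a longer route than the paper. You build two-sided lines $c+t\,p\,v$ through $c$ from a single active path, deduce that $N_\cF(c)$ is orthogonal to $\linspan\tilde\cC$, and then import the density of $\linspan\tilde\cC$ from the proof of \Cref{thm:inactive-paths} (via \Cref{lem:ridge} or \Cref{lem:paths}).

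The paper instead takes the symmetric cone to be $\cF$ itself:
\begin{itemize}
\item $\cF$ is a symmetric cone, since one can scale $(W_L,b_L)$.
\item $c+\cF=\cF$, since $c$ can be absorbed into $b_L$. So for every $g\in\cF$ the whole line $c+tg$, $t\in\RR$, lies in $\cF$.
\item $\linspan\cF$ is dense in $\cH$ by the very definition $\cH=\overline{\linspan\cF}$.
\end{itemize}
\Cref{lem:normal-cone-secants}, or directly \Cref{lem:curve}, then gives $N_\cF(c)=\{0\}$.

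So the step you flag as ``the main obstacle'' disappears. The paper's argument needs no universal approximation, no \Cref{lem:ridge}, and no continuity or nonconstancy hypothesis on the activations. Your route genuinely depends on those hypotheses. They do hold in both settings used in the appendix, so your proof is valid where the lemma is applied. The only thing your version adds is that the relevant tangent lines can be taken from a single path, which is not needed here.

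Your first two paragraphs are also superfluous: setting all weights to zero and perturbing $W_L$ with a compensating $b_L$. The path construction already represents $c$, and moving $b_L$ alone already gives the constant directions.
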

\begin{proof}
The model $\cF$ is a symmetric cone that contains the constant functions, so $c + \cF = \cF$ for all constants $c$ and the claim follows from \Cref{lem:normal-cone-secants}.
\end{proof}

Next, we show that if the normal cone at each point of a semialgebraic set $S$ is cut down by at least $m$ independent linear constraints, then the union of translated normal cones over $S$ has codimension at least $m - \dim S$. When $m > \dim S$, this union has positive codimension, ruling out proximity bias toward $S$ by \Cref{prop:implicit-bias-criterion}.

\begin{lemma}\label{lem:count}
Let $\Theta\subseteq\RR^P$ be semialgebraic and let $\Phi,v_1,\dots,v_r\colon\Theta\to\cH$ be semialgebraic maps with $\Phi(\Theta)\subseteq\cF$ and $N_\cF(\Phi(\theta))\perp v_i(\theta)$ for all $\theta,i$. Let $n = \dim \cH$, let $S\subseteq\Phi(\Theta)$ be semialgebraic and assume that for every $f\in S$ the vectors $v_i(\theta)$ with $\Phi(\theta)=f$ span a space of dimension at least $m$. Then $\bigcup_{f\in S}(f+N_\cF(f))$ lies in a semialgebraic set of dimension at most $\dim S+n-m$.
\end{lemma}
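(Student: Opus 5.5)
The plan is to build an incidence set and bound the dimension of its image. Consider
\[
\Sigma=\Bigl\{(\theta,w)\in\Theta\times\cH:\ \Phi(\theta)\in S,\ w\in N_\cF(\Phi(\theta))\Bigr\}.
\]
The set $N_\cF(\Phi(\theta))$ itself need not be semialgebraic in an obvious way, so we replace it by a larger set. Let
\[
\Sigma'=\Bigl\{(\theta,w):\ \Phi(\theta)\in S,\ w\perp v_i(\theta')\ \text{for all } \theta'\in\Phi^{-1}(\Phi(\theta)),\ i=1,\dots,r\Bigr\}.
\]
By hypothesis, $N_\cF(f)$ is orthogonal to every $v_i(\theta')$ with $\Phi(\theta')=f$, so $N_\cF(\Phi(\theta))$ sits inside the fiber of $\Sigma'$ over $\theta$. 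The defining formula of $\Sigma'$ is first order, quantifying over $\theta'$, and $\Phi$, the $v_i$, $S$ and the inner product are all semialgebraic. By Tarski--Seidenberg, $\Sigma'$ is therefore semialgebraic. The target set is $\bigcup_{f\in S}(f+N_\cF(f))$, which is contained in the image of $\Sigma'$ under the semialgebraic map $(\theta,w)\mapsto\Phi(\theta)+w$.

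The fiber of $\Sigma'$ over $\theta$ depends only on $f=\Phi(\theta)$. So it is cleaner to work over $S$ directly. Define
\[
\Sigma''=\{(f,w)\in S\times\cH:\ w\perp v_i(\theta')\ \text{for all } \theta'\in\Phi^{-1}(f),\ i=1,\dots,r\}.
\]
This set is also semialgebraic, and the union of translated normal cones lies in the image of $\Sigma''$ under $(f,w)\mapsto f+w$. For each $f$, the fiber $\Sigma''_f$ is the orthogonal complement of $\linspan\{v_i(\theta'):\Phi(\theta')=f\}$. By assumption this span has dimension at least $m$, so $\Sigma''_f$ is a linear subspace of dimension at most $n-m$.

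Two standard facts about semialgebraic dimension finish the argument. First, a semialgebraic set fibered over $S$ with fibers of dimension at most $k$ has dimension at most $\dim S+k$, by Hardt triviality or the fiber dimension bound in \citet{bochnak2013real}. This gives $\dim\Sigma''\le \dim S+n-m$. Second, semialgebraic images do not increase dimension. Together these show that the image of $\Sigma''$ is a semialgebraic set of dimension at most $\dim S+n-m$ containing $\bigcup_{f\in S}(f+N_\cF(f))$.

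The main obstacle is the semialgebraicity and definability of the fibers. The normal cone is not used directly; only its containment in the definable orthogonal complement is used. Since $\cH$ is finite dimensional (the polynomial setting), the quantification over $\theta'\in\Theta\subseteq\RR^P$ stays within first-order real-closed-field logic, and the argument goes through.
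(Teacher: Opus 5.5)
Your proposal is correct and follows the paper's proof: your set $\Sigma''$ is the paper's incidence set $E$, semialgebraic by Tarski--Seidenberg, and each fiber is a linear space of dimension at most $n-m$ containing $N_\cF(f)$. The fiber-dimension bound and the image under the linear map $(f,w)\mapsto f+w$ then finish the argument, so your detour through $\Sigma'$ over $\Theta$ is harmless but unnecessary.
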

\begin{proof}
The set $E=\{(f,\nu)\in S\times\cH:\ \forall\theta\,(\Phi(\theta)=f\Rightarrow\langle\nu,v_i(\theta)\rangle=0\ \forall i)\}$ is
semialgebraic, by the Tarski--Seidenberg Theorem. Its fiber over $f$ is a linear space of dimension at most $n-m$ containing $N_\cF(f)$, so $\dim E\le\dim S+n-m$, and $\{f+\nu:(f,\nu)\in E\}$ is its image under a linear map.
\end{proof}

\begin{proof}[Proof of \Cref{thm:mlp-first-layer-poly}] Let $n = \dim \cH$ and $m = \binom{d_0 + r_1}{d_0}$.
We show that the normal cone at every nonconstant~$f\in\mathcal F'$ lies in a subspace of codimension $m$, and use \Cref{prop:implicit-bias-criterion} to conclude that $\Vorloc_\cF(\cF')$ lies in a semialgebraic set of dimension less than $n$, so it has empty interior.

Let $\theta'$ be a parameter of the submodel $\cF'$ and define $f=f_{\theta'}$. Write $f=\Psi(z)$, where $z$ is the input of the second activation (so $z=W_2\rho_1(W_1x+b_1)+b_2$) and $\Psi$ the polynomial map computed by the subsequent layers. Let $J = D\Psi\circ z$. That is, $J(x)= D\Psi(z(x))$ is the $d_L \times d_2$ Jacobian of $\Psi$ evaluated at $z(x)$ to give a matrix of polynomials in~$x$. We write $e_l$ for the $l$-th standard basis vector of $\RR^{d_2}$, so $J e_l$ is the $l$-th column of $J$ and for a polynomial~$g$, $J g e_l$ denotes the vector of polynomials $g \cdot J e_l$.

Adding a neuron to the first hidden layer with incoming weights $(a,b)$ and outgoing weights $tw$ with nonzero $w$ gives $f_t=\Psi(z+tw\,\rho_1(\langle a,x\rangle+b))\in\cF$, a polynomial curve whose derivative at $t=0$ is
$Jw\,\rho_1(\langle a,x\rangle+b)$. 
The curve passes through $f$ at $t=0$, so \Cref{lem:curve} gives $N_\cF(f)\perp Jw\,\rho_1(\langle a,x\rangle+b)$ for all $w\in\mathbb R^{d_2}$, $a\in\mathbb R^{d_0}$, $b\in\mathbb R$. Taking $w=e_l$ for $l=1,\dots,d_2$ and linear combinations over $(a,b)$ gives $N_\cF(f)\perp Jg\,e_l$ for all $l$ and all $g\in\cP_{\le r_1}$, by \Cref{lem:ridge}.

These directions give $m$ independent constraints whenever $f$ is not constant, as follows. Let $p_1, \dots, p_m$ be a basis of $\cP_{\leq r_1}$.
Since $f$ is nonconstant, the chain rule gives $0\ne Df= J Dz= \sum_l(Je_l)(\nabla z_l)^\top$, 
so some column $Je_l$ is a nonzero vector of polynomials.
Hence, $g\mapsto Jg e_l$ is injective and so the $m$ functions $J p_1e_l , \dots, J p_m e_l$ are linearly independent.

Let $\Theta$ 
be the parameter space of $\cF'$, and apply Lemma~\ref{lem:count} to $v_{l,j}(\theta')=J_{\theta'}p_j e_l$, and the set $S$ of nonconstant elements of $\cF'$: the union of $f+N_\cF(f)$ over $f\in S$ lies in a semialgebraic set of dimension $\le\dim\cF'+n-m<n$.
Constant $f$ contribute only $f$ itself (Lemma~\ref{lem:const}), a set of dimension at most $\dim\cF'<m\le n$; here $m\le n$ because $S\ne\emptyset$ (as $d_1-1\ge1$) and the previous paragraph gives $m$ independent vectors in $\cH$.
\end{proof}

The upper bound in \Cref{thm:mlp-first-layer-poly} can be improved for polynomial neural networks with no biases $(b_i = 0)$. In this case, the dimension of \emph{homogeneous} forms of degree $r_1$ is $\dim\RR[x_1,\dots,x_{d_0}]_{r_1} = \binom{d_0+r_1-1}{r_1}$.

\begin{proposition}\label{prop:mlp-first-layer-mono}
Let $\cF$ be the neural model of a deep biasless MLP with monomial activations
$\rho_j(t)=t^{r_j}$, $r_j\ge1$, and let $\cF'$ be the submodel with width $d_1-1$ in the first hidden layer. If
$
\dim\cF'<\binom{d_0+r_1-1}{r_1},
$
then there is no proximity bias toward $\cF'$.
\end{proposition}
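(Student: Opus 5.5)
We follow the proof of \Cref{thm:mlp-first-layer-poly}, replacing $\cP_{\le r_1}$ by the space of homogeneous forms $\RR[x_1,\dots,x_{d_0}]_{r_1}$, of dimension $m=\binom{d_0+r_1-1}{r_1}$. Without biases, every function in $\cF$ is a tuple of homogeneous forms of degree $r_1\cdots r_{L-1}$. So $\cH=\linspan\cF$ is finite dimensional; write $n=\dim\cH$. Fix a parameter $\theta'$ of $\cF'$ and set $f=f_{\theta'}=\Psi(z)$, where $z=W_2\rho_1(W_1x)$ and $\Psi$ is the (homogeneous, biasless) map computed by the later layers. Let $J=D\Psi\circ z$.

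\textbf{Tangent constraints.} Reinsert a first-layer neuron with incoming weight $a$ and outgoing weight $tw$. This gives the polynomial curve $f_t=\Psi(z+tw\langle a,x\rangle^{r_1})\in\cF$, with $f_0=f$ and derivative $Jw\langle a,x\rangle^{r_1}$ at $t=0$. By \Cref{lem:curve}, $N_\cF(f)\perp J e_l\langle a,x\rangle^{r_1}$ for all $l$ and all $a$. The powers $\langle a,x\rangle^{r_1}$ span all of $\RR[x]_{r_1}$; this is the standard fact that powers of linear forms span the forms of a given degree, or the homogeneous analogue of \Cref{lem:ridge} obtained by polarization. Hence $N_\cF(f)\perp Jg\,e_l$ for every form $g$ of degree $r_1$.

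\textbf{Independence.} Let $f$ be nonzero. Since $f$ is homogeneous of positive degree, it is nonconstant. By the chain rule, $0\ne Df=\sum_l (Je_l)(\nabla z_l)^\top$, so some column $Je_l$ is a nonzero vector of polynomials. Polynomial rings have no zero divisors, so $g\mapsto g\,Je_l$ is injective. Applying it to a basis $p_1,\dots,p_m$ of $\RR[x]_{r_1}$ therefore gives $m$ linearly independent vectors $Jp_je_l$ in $\cH$.

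\textbf{Dimension count.} Apply \Cref{lem:count} with $\Phi(\theta')=f_{\theta'}$, $v_{l,j}(\theta')=J_{\theta'}p_je_l$, and $S=\cF'\setminus\{0\}$. The union of the translated normal cones over $S$ lies in a semialgebraic set of dimension at most $\dim\cF'+n-m<n$. This needs $m\le n$, which holds because $S\neq\emptyset$ (as $d_1-1\ge1$) and the previous step produces $m$ independent vectors in $\cH$. The point $0$ contributes $0+N_\cF(0)$. Here $\cF$ is a symmetric cone, as in the proof of \Cref{lem:const}, so the tangent cone at $0$ contains $\cF$, whose span is $\cH$; by \Cref{lem:sym-cone-dense}, $N_\cF(0)=\{0\}$. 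Thus $\Vorloc_\cF(\cF')$ has dimension less than $n$ and has empty interior, by \Cref{prop:implicit-bias-criterion}.

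The main obstacle is the step replacing \Cref{lem:ridge}: checking that powers of linear forms span all degree-$r_1$ forms. We would prove this by polarization. Expanding $\langle t_1a_1+\dots+t_ka_k,x\rangle^{r_1}$ in $t$ and extracting coefficients yields every product of $r_1$ linear forms, and these products span $\RR[x]_{r_1}$. Everything else mirrors the biased proof.
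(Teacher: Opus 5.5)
Your proposal is correct and follows essentially the same route as the paper. Like the paper, you rerun the proof of \Cref{thm:mlp-first-layer-poly} with $\RR[x_1,\dots,x_{d_0}]_{r_1}$ in place of $\cP_{\le r_1}$, note that homogeneity makes $0$ the only constant function in $\cF$, and handle the origin through the symmetric-cone structure of $\cF$; the paper phrases that last step via \Cref{lem:curve} and $t\cF\subseteq\cF$, which is the same argument.
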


\begin{proof}
The proof of \Cref{thm:mlp-first-layer-poly} applies with $m=\binom{d_0+r_1-1}{r_1}$ and the following two changes. First, the reinserted neuron now computes $\langle a,x\rangle^{r_1}$, and these powers span the space $\RR[x_1,\dots,x_{d_0}]_{r_1}$ of forms of degree $r_1$. Hence $N_\cF(f)\perp Jge_l$ for all $g\in\RR[x_1,\dots,x_{d_0}]_{r_1}$. 
The only constant function in $\cF$ is $0$, because every $f\in\cF$ is homogeneous of degree $r_1\cdots r_{L-1}\ge1$. 
Let $p_1, \dots, p_m$ be a basis of $\RR[x_1, \dots, x_{d_0}]_{r_1}$.
For nonconstant $f$,
the $m$ functions $Jp_1 e_l, \dots, J p_m e_l$ are linearly independent by the same injectivity argument as the one in \Cref{thm:mlp-first-layer-poly}.
Since scaling $W_L$ gives $t\cF\subseteq\cF$ for all $t\in\RR$, \Cref{lem:curve} gives $N_\cF(0)\perp\linspan\cF=\cH$, so $N_\cF(0)=\{0\}$.
\end{proof}

The following example shows that the bound in \Cref{prop:mlp-first-layer-mono} is sharp.

\begin{example}[Rank two versus rank three]\label{ex:rank-two-three}
Let $\mathbf d=(2,3,1,1)$ and $\rho_1=\rho_2: t \mapsto t^3$, and no bias terms. Every real binary cubic is a sum of three cubes of real linear forms, and it is a sum of two cubes if and only if it is a cube or has negative discriminant (one real and two complex conjugate roots); see, e.g., \cite{comon2012typical}. Hence $\mathcal F=\{p^3 : p \text{ a binary cubic}\}$ and $\mathcal F'=\{p^3 : \operatorname{disc}(p)<0\}\cup\{\ell^9\}$: the model consists of the cubes of the cubics of real rank at most three, the submodel of those of real rank at most two. Since $p\mapsto p^3$ is injective, $\dim\mathcal F'=\dim\mathcal F=4$, and $\mathcal F'\neq\mathcal F$. By \Cref{prop:equal-dimension-bias} the local Voronoi cell $\Vorloc_{\mathcal F}(\mathcal F')$ has nonempty interior: every target close to $p_0^3$ with $\operatorname{disc}(p_0)<0$ has all its nearest points of $\mathcal F$ in $\mathcal F'$. Here $\dim\mathcal F'=4=\binom{2+3-1}{3}$, so the bound in \Cref{prop:mlp-first-layer-mono} cannot be relaxed.
\end{example}

\anylayerpoly*
\begin{proof}
    The argument is the same as the one in \Cref{thm:mlp-first-layer-poly}, but applied at layer $k$ instead of layer 1. 
    Let $R = \deg \rho_k$. We show that the functions computed by a single neuron in layer $k$ span all polynomials of degree at most $R$ in the features. This gives $R+1$ linear constraints on the normal cones $N_\cF(f)$ for nonconstant $f \in \cF'$. We then apply the dimension count from \Cref{lem:count}.

 For a parameter $\theta'$ of $\cF'$, let $\eta = h^{(k-1)}_{\theta'}$ be the features that are the input to layer $k$, and write $f = f_{\theta'} = \Psi \circ z$ where $z = W_{k+1} \tilde \rho_k (W_k \eta + b_k) + b_{k+1}$ and $\Psi$ is the map computed by the remaining layers. Let $J = D \Psi \circ z$. As in the proof of \Cref{thm:mlp-first-layer-poly}, with the features $\eta$ in place of the input variables $x$, the inactive neuron gives $N_\cF(f) \perp J e_l g(\eta)$ for all $l$ and all polynomials $g$ of degree at most $R$. If $f$ is nonconstant, some $J e_l$ is nonzero and some coordinate $\eta_i$ is nonconstant. So the $R+1$ functions $J e_l \eta_i^{j}$ with $0 \leq j \leq R$ are linearly independent. 
 By \Cref{lem:count} with $m = R+1$, applied to $v_{j}(\theta') = J e_l \eta_i^j$ with $0 \leq j \leq R$ and the set $S$ of nonconstant functions in $\cF'$, the union of $f + N_\cF(f)$ over $f \in S$ has codimension at least $R + 1 - \dim \cF' > 0$. Constant $f$ contribute only $f$ itself (\Cref{lem:const}), so $\Vorloc_{\cF}(\cF')$ has empty interior by \Cref{prop:implicit-bias-criterion}.
\end{proof}

\mainnonpolyanyhiddenlayer*

Notation: $\theta'$ is a parameter of the submodel, $\eta=h^{(k-1)}_{\theta'}$ the features that are the input to layer $k$
($\eta(x)=x$ if $k=1$), $z$ the pre-activations of layer $k+1$, $\Psi$ the map computed by the layers above, so that
$f_{\theta'}=\Psi\circ z$, and $J=D\Psi\circ z$ ($J=I$ if $k=L-1$). All these maps are real-analytic in $(\theta',x)$.

\begin{lemma}\label{lem:Jdir}
For every continuous $\gamma\colon\RR^{d_{k-1}}\to\RR$ and every $l$: $Je_l\,(\gamma\circ\eta)\in\cH$ and
$N_\cF(f_{\theta'})\perp Je_l\,(\gamma\circ\eta)$.
\end{lemma}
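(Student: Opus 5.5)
The plan is to imitate the proof of \Cref{thm:mlp-first-layer-poly}: reinsert the inactive neuron of layer $k$, obtain a curve in $\cF$ through $f_{\theta'}$, differentiate it to get tangent directions, and then use continuity and density to pass from ridge functions of the features to arbitrary continuous $\gamma$.

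\textbf{Step 1: the curve.} Fix $\theta'$ and put back the deleted neuron of layer $k$ with incoming weights $(a,b)\in\RR^{d_{k-1}}\times\RR$ and outgoing weights $t e_l$. Its contribution to the pre-activations of layer $k+1$ is then $t\,\rho_k(\langle a,\eta\rangle+b)e_l$. We get a curve $\gamma_t=\Psi\circ\bigl(z+t\,\rho_k(\langle a,\eta\rangle+b)e_l\bigr)\in\cF$ with $\gamma_0=f_{\theta'}$.

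We need this curve to be differentiable at $t=0$ in $\cH\subseteq L^2(\mu,\RR^{d_L})$. On the compact support $K$, all functions involved are continuous, and $\Psi$ is $C^1$ since the activations are analytic. By uniform continuity of $D\Psi$ on a compact neighbourhood of $z(K)$, the difference quotient $(\gamma_t-\gamma_0)/t$ converges uniformly on $K$ to $Je_l\,\rho_k(\langle a,\eta\rangle+b)$. Uniform convergence on $K$ implies convergence in $L^2(\mu)$, so the curve is differentiable at $0$ with this derivative. The derivative lies in $\cH$ because $\cH$ is closed and each difference quotient lies in $\linspan\cF$. \Cref{lem:curve} then gives $N_\cF(f_{\theta'})\perp Je_l\,\rho_k(\langle a,\eta\rangle+b)$.

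\textbf{Step 2: linear combinations.} The orthogonal complement of any set is a closed linear subspace, and $\cH$ is closed. Hence both the orthogonality and membership in $\cH$ extend to finite linear combinations $Je_l\,\varphi(\eta)$ with $\varphi\in\linspan\{\rho_k(\langle a,\cdot\rangle+b)\}$, and then to $L^2(\mu)$-limits of such functions.

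\textbf{Step 3: density, the main point.} Let $Y=\eta(K)$, which is compact because $\eta$ is continuous. Since $\rho_k$ is continuous and not a polynomial, the universal approximation theorem (UAT) with $m=d_{k-1}$ gives $\varphi_n\to\gamma$ uniformly on $Y$ for any continuous $\gamma$. Then $\varphi_n\circ\eta\to\gamma\circ\eta$ uniformly on $K$. Since $Je_l$ is continuous, it is bounded on $K$, so $Je_l(\varphi_n\circ\eta)\to Je_l(\gamma\circ\eta)$ uniformly on $K$, and therefore in $L^2(\mu)$ because $\mu$ is finite. By closedness (Step 2), $Je_l(\gamma\circ\eta)\in\cH$ and $N_\cF(f_{\theta'})\perp Je_l(\gamma\circ\eta)$.

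The only delicate point is the differentiability of the curve in Step 1 in the $L^2$ sense. I expect it to be handled by the uniform-convergence-on-compacts argument above; everything else is closure under limits.
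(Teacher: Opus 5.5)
Your proposal is correct and follows essentially the same route as the paper. You reinsert the neuron with outgoing weight $te_l$, use uniform continuity of $D\Psi$ to get uniform and hence $L^2(\mu)$ differentiability, apply \Cref{lem:curve}, and then pass to arbitrary continuous $\gamma$ via the universal approximation theorem on the compact set $\eta(K)$, using boundedness of $J$ on $K$ and closedness of $\cH$ and of orthogonal complements.
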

\begin{proof}
Adding back the inactive neuron with incoming weights $(a,b)$ and outgoing weights $te_l$ gives the curve $f_t=\Psi(z+te_l\phi)\in\cF$, $\phi=\rho_k(\langle a,\eta\rangle+b)$. Since $D\Psi$ is uniformly continuous on compact sets, $(f_t-f)/t\to Je_l\phi$ uniformly on $K$. Hence in $\cH$, and $N_\cF(f)\perp Je_l\phi$ by Lemma~\ref{lem:curve}. Then, $\gamma\circ\eta$ is a uniform limit on $K$ of linear combinations of such $\phi$, by UAT applied on the compact set $\eta(K)$ since
$J$ is bounded on $K$.
\end{proof}

\begin{lemma}\label{lem:analytic}
Let $\mu$ have a Lebesgue density. (i) A real-analytic function on $\RR^{d_0}$ that is not identically zero vanishes only on a set of Lebesgue measure zero.
(ii) If $f_{\theta'}$ is not constant, there are $l$ and a nonconstant coordinate $\eta_i$ of $\eta$ such that $Je_l\ne0$ $\mu$-almost everywhere, 
and for every $N$ the functions $Je_l\,\eta_i^{\,j}$, $0\le j\le N$, are linearly independent in $\cH$.
\end{lemma}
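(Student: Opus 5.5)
For part (i), I will argue by induction on the dimension $d_0$. The statement is that the zero set $Z(g)$ of a real-analytic $g\not\equiv 0$ on $\RR^{d_0}$ has Lebesgue measure zero. For $d_0=1$, the zeros of a nonzero analytic function of one variable are isolated, hence countable. For the inductive step, write $x=(x',x_{d_0})$. Since $g\not\equiv0$, some coefficient in its expansion around some point is nonzero. More simply, for all $x'$ outside the set $A=\{x': g(x',\cdot)\equiv 0\}$, the slice $g(x',\cdot)$ is a nonzero analytic function of one variable, so its zero set has one-dimensional measure zero. The set $A$ is the common zero set of the analytic functions $x'\mapsto \partial_{x_{d_0}}^j g(x',0)$, $j\ge0$. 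At least one of these is not identically zero: otherwise, by analyticity in $x_{d_0}$ along each slice (connected domain $\RR$), $g\equiv0$. By induction, $A$ has measure zero in $\RR^{d_0-1}$. Fubini then gives $|Z(g)|=0$. Since $\mu$ has a Lebesgue density, $\mu(Z(g))=0$ as well.

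For part (ii), I start from the chain rule $Df_{\theta'}=\sum_l (Je_l)(\nabla z_l)^\top$, exactly as in the proof of \Cref{thm:mlp-first-layer-poly}. Suppose $f_{\theta'}$ is nonconstant on $\RR^{d_0}$. Then $Df_{\theta'}\not\equiv0$ on $\RR^{d_0}$, so some column $Je_l$ is not identically zero as a real-analytic vector function. Some coordinate of it is therefore a nonzero analytic function, and by (i) we get $Je_l\neq0$ $\mu$-a.e.

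To find the coordinate $\eta_i$, I use $z=W_{k+1}\tilde\rho_k(W_k\eta+b_k)+b_{k+1}$. If every $\eta_i$ were constant, then $z$ would be constant and hence so would $f=\Psi\circ z$. Therefore some $\eta_i$ is nonconstant on $\RR^{d_0}$.

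For linear independence, suppose $\sum_{j=0}^N c_j\,Je_l\,\eta_i^j=0$ in $\cH$, i.e.\ $\mu$-a.e. Set $P(s)=\sum_j c_js^j$. Then $Je_l\cdot P(\eta_i)=0$ $\mu$-a.e., and since $Je_l\ne0$ $\mu$-a.e., we get $P(\eta_i)=0$ $\mu$-a.e. The key point is that $P\circ\eta_i$ is real-analytic on $\RR^{d_0}$. If it were not identically zero, then by (i) it could vanish only on a Lebesgue-null set. But it vanishes on the support of $\mu$, which has positive Lebesgue measure (since $\mu\neq0$ has a density). Hence $P\circ\eta_i\equiv0$ on $\RR^{d_0}$.

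Since $\eta_i$ is continuous and nonconstant on the connected set $\RR^{d_0}$, its image contains an interval. $P$ vanishes on that interval, so $P=0$ and all $c_j=0$.

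The main obstacle is the distinction between being nonconstant on $\RR^{d_0}$ and being nonconstant in $\cH=L^2(\mu)$. I handle it by working throughout with global analytic identities and transferring to $\mu$-a.e.\ statements only through (i). If "$f$ not constant" is meant in $\cH$, analyticity together with (i) implies $f$ is nonconstant on $\RR^{d_0}$, so the two readings agree.
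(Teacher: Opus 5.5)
Your proposal is correct and follows the paper's proof essentially step for step. For (i) you use the same induction with Fubini via the coefficients $x'\mapsto\partial^j_{x_{d_0}}g(x',0)$. For (ii) you use the chain rule $Df_{\theta'}=J\,Dz$, the observation that $f_{\theta'}$ factors through $\eta$, and the fact that a polynomial vanishing on the interval $\eta_i(\RR^{d_0})$ is zero. You spell out more explicitly than the paper why $P(\eta_i)=0$ $\mu$-a.e.\ forces $P\circ\eta_i\equiv 0$: its zero set has full $\mu$-measure, hence positive Lebesgue measure, so (i) applies.
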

\begin{proof}
(i) Induction on $d_0$ with Fubini: for $d_0=1$ the zeros are isolated; in general, the set of $x'$ for which $x_{d_0}\mapsto g(x',x_{d_0})$ vanishes identically is the common zero set of the analytic functions $x'\mapsto\partial^m_{x_{d_0}}g(x',0)$, which is null unless all of them vanish, i.e.\ unless $g=0$. (ii) $Df_{\theta'}=J\,Dz\ne0$, so some entry of some column $Je_l$ is a nonzero analytic function, nonzero almost everywhere\ by (i).
Since $f_{\theta'}$ is a function of $\eta$, some $\eta_i$ is nonconstant. If $Je_l\,p(\eta_i)=0$ almost everywhere for a polynomial $p$, then $p(\eta_i)=0$ almost everywhere. Therefore, (i) implies that $p=0$ on the interval $\eta_i(\RR^{d_0})$, so $p=0$.
\end{proof}

\begin{lemma}\label{lem:sweep}
Let $V$ be a finite-dimensional Euclidean space, $\Theta\subseteq\RR^P$ compact, and $A\colon\Theta\to\operatorname{Hom}(V,\RR^M)$, $c\colon\Theta\to\RR^M$ Lipschitz with $\operatorname{rank}A(\theta)\ge P+1$ for all $\theta$. Then $S=\{v\in V:\ A(\theta)v=c(\theta)\text{ for some }\theta\in\Theta\}$ has Lebesgue measure zero.
\end{lemma}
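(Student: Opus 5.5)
The set $S$ is a union, over $\theta\in\Theta$, of affine subspaces $\{v : A(\theta)v=c(\theta)\}$, each of codimension at least $P+1$ in $V$. The family is parametrized by only $P$ dimensions of Lipschitz data, so a covering argument should give measure zero: the $P$ parameter dimensions can sweep out at most $P$ extra dimensions, leaving at least one to spare. I will show $S\cap B_R$ is null for every radius $R$, where $B_R=\{v\in V:\|v\|\le R\}$. Since $S=\bigcup_{R\in\mathbb N}(S\cap B_R)$, this suffices.

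\emph{Uniform nondegeneracy.} Write $n=\dim V$ and let $\sigma_{P+1}(\theta)$ be the $(P+1)$-th largest singular value of $A(\theta)$. Singular values are continuous (Weyl's inequality gives $|\sigma_j(A)-\sigma_j(B)|\le\|A-B\|$), so $\theta\mapsto\sigma_{P+1}(\theta)$ is continuous. It is positive because $\operatorname{rank}A(\theta)\ge P+1$. On the compact set $\Theta$ it therefore attains a minimum $\sigma>0$. Let $L$ be a common Lipschitz constant for $A$ and $c$, with operator norm on $\operatorname{Hom}(V,\RR^M)$.

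\emph{Covering.} Since $\Theta$ is a compact subset of $\RR^P$, for each small $\varepsilon>0$ it can be covered by at most $C_\Theta\,\varepsilon^{-P}$ balls of radius $\varepsilon$ centered at points $\theta_0\in\Theta$. Suppose $v\in B_R$ satisfies $A(\theta)v=c(\theta)$ with $\|\theta-\theta_0\|\le\varepsilon$. Then
\[
\|A(\theta_0)v-c(\theta_0)\|\le \|A(\theta_0)-A(\theta)\|\,\|v\|+\|c(\theta)-c(\theta_0)\|\le L(R+1)\varepsilon=:\delta .
\]
So the part of $S\cap B_R$ coming from this ball lies in the slab $T=\{v\in B_R:\|A(\theta_0)v-c(\theta_0)\|\le\delta\}$.

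\emph{Volume of the slab.} Take a singular value decomposition of $A(\theta_0)$ and let $u_1,\dots,u_{P+1}$ be right singular vectors for its top $P+1$ singular values. The condition defining $T$ forces the component of $v$ in $\linspan\{u_1,\dots,u_{P+1}\}$ to lie in a fixed ellipsoid. Its semi-axes are $\delta/\sigma_j\le\delta/\sigma$, with the ellipsoid centered at the corresponding component of a least-squares solution. The orthogonal component of $v$ lies in a ball of radius $R$. Hence
\[
\operatorname{vol}(T)\le C_n\,(\delta/\sigma)^{P+1}R^{\,n-P-1}.
\]

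\emph{Conclusion.} Summing over the covering balls,
\[
\operatorname{vol}^*(S\cap B_R)\le C_\Theta\varepsilon^{-P}\cdot C_n\bigl(L(R+1)\varepsilon/\sigma\bigr)^{P+1}R^{\,n-P-1}=O(\varepsilon).
\]
Letting $\varepsilon\to0$ shows that $S\cap B_R$ is null. This uses outer measure, which also sidesteps any measurability issue for $S$. Taking the union over $R$ finishes the proof.

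The step I expect to be the main obstacle is the uniform lower bound $\sigma$ on the relevant singular values. Without it the slab volumes are not controlled uniformly over the covering. It is exactly here that compactness of $\Theta$, continuity of $A$, and the hypothesis $\operatorname{rank}A(\theta)\ge P+1$ at every point are used. The rest is routine bookkeeping, the only minor care being to intersect with $B_R$ so that $\|v\|$ is bounded in the Lipschitz estimate.
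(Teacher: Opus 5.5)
Your proof is correct and follows essentially the same route as the paper: compactness gives a uniform lower bound on $A(\theta)$ over a $(P+1)$-dimensional subspace, you cover $\Theta$ by $O(\varepsilon^{-P})$ balls, and each ball contributes a slab of volume $O(\varepsilon^{P+1})$, so the total is $O(\varepsilon)$. The paper phrases the slab bound via sections parallel to a $(P+1)$-dimensional subspace $W_{\theta_i}$ and Fubini; your span of the top $P+1$ right singular vectors is exactly such a subspace.
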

\begin{proof}
By compactness and continuity there are $\sigma_0>0$ and, for each $\theta$, a subspace $W_\theta$ of dimension $P+1$ with $|A(\theta)w|\ge\sigma_0|w|$ on $W_\theta$. Fix $R$ and cover $\Theta$ by $M_\varepsilon\le C_1\varepsilon^{-P}$ balls of radius $\varepsilon$ centered at $\theta_i$.
If $|v|\le R$ and $A(\theta)v=c(\theta)$ with $|\theta-\theta_i|\le\varepsilon$, then $|A(\theta_i)v-c(\theta_i)|\le C_2\varepsilon$, so $v$ lies in a set whose sections parallel to $W_{\theta_i}$ have diameter $\le 2C_2\varepsilon/\sigma_0$; its volume is $\le C_3\varepsilon^{P+1}$ by Fubini.
Hence $\lambda(S\cap\{|v|\le R\})\le C_1C_3\varepsilon\to0$.
\end{proof}

\begin{proof}[Proof of \Cref{thm:main-nonpoly-any-hidden-layer}]
    We start with the case $k=1$, $d_L=1$, where the normal cones vanish. Let $f\in\cF'$ and $\nu\in N_\cF(f)$. If $f$ is
constant, $\nu=0$, by Lemma~\ref{lem:const}. Otherwise take $\theta'$ representing $f$ and $l$ as in
Lemma~\ref{lem:analytic}. Here, $\eta(x)=x$ and $Je_l$ is a bounded scalar function that is nonzero almost everywhere. Then $\nu\perp Je_l\,g$ for all $g\in L^2(\mu)$,
by
Lemma~\ref{lem:Jdir} and density of $C(K)$ in $L^2(\mu)$.
Taking
$g=Je_l\,\nu$ gives $\int(Je_l)^2\nu^2\,d\mu=0$, so $\nu=0$. Thus $N_\cF(f)=\{0\}$ on $\cF'$, and
$\Vorloc_\cF(\cF')=\cF'$, by Corollary~\ref{cor:zero}.

For general $k$ and $d_L$ the normal cones need not vanish, but they satisfy more independent linear constraints than $\cF'$ has parameters.
In finite dimension a dimension count would finish the proof; in $\cH$ we run the count on finite-dimensional slices and conclude with the Baire category theorem. 
Let $P'$ be the number of parameters of the subnetwork and, for $\alpha=(l,i,j)$ with $j\le P'$, let $\chi_\alpha(\theta')=J_{\theta'}e_l\,(\eta_{\theta',i})^j$. If $h\in\Vorloc_\cF(\cF')$ has the local minimizer $f=f_{\theta'}\in\cF'$, then $h-f\in N_\cF(f)\perp\chi_\alpha(\theta')$ for all $\alpha$,
by Lemma~\ref{lem:Jdir}. For $n\ge1$ let
\[
\Theta_n=\{\theta':|\theta'|\le n,\ \operatorname{dist}(f_{\theta'},\cC)\ge1/n\},\qquad B_n=\{h:\ \exists\theta'\in\Theta_n\ \ \langle h-f_{\theta'},\chi_\alpha(\theta')\rangle=0\ \forall\alpha\},
\]
where $\cC$ is the space of constant functions. A constant local minimizer equals $h$, so $\Vorloc_\cF(\cF')\subseteq\cC\cup\bigcup_nB_n$, by Lemma~\ref{lem:const}. The maps $\theta'\mapsto f_{\theta'}$ and $\theta'\mapsto\chi_\alpha(\theta')$ are Lipschitz from $\{|\theta'|\le n\}$ to $\cH$ (their $\theta'$-derivatives are bounded on $\{|\theta'|\le n\}\times K$). Hence $\Theta_n$ is compact and $B_n$ is closed.

Next we show that each $B_n$ is nowhere dense. For each $\vartheta\in\Theta_n$ there are $P'+1$ indices $\alpha_0,\dots,\alpha_{P'}$ with $\chi_{\alpha_r}(\vartheta)$ linearly independent,
by Lemma~\ref{lem:analytic}(ii).
The matrix $(\langle\chi_{\alpha_r}(\vartheta),\chi_{\alpha_s}(\theta')\rangle)_{r,s}$ is then invertible for $\theta'$ near $\vartheta$.
Covering the compact set $\Theta_n$ by finitely many such neighborhoods, the finite-dimensional space $V$ spanned by the corresponding $\chi_\alpha(\vartheta_m)$ has the property that $A_{\theta'}\colon v\mapsto (\langle v,\chi_\alpha(\theta')\rangle)_\alpha$ has rank $\ge P'+1$ on $V$ for all $\theta'\in\Theta_n$. For any $h_0\in\cH$, $h_0+v\in B_n$ iff $A_{\theta'}v=(\langle f_{\theta'}-h_0,\chi_\alpha(\theta')\rangle)_\alpha$ for some $\theta'\in\Theta_n$, so every slice $\{v\in V:h_0+v\in B_n\}$ has measure zero by Lemma~\ref{lem:sweep}. A closed set containing a ball would contain a ball of $h_0+V$ in some slice, so $B_n$ is nowhere dense.

Finally, $\cC$ is a proper closed subspace, hence nowhere dense. So $\Vorloc_\cF(\cF')$ is contained in a countable union of closed nowhere dense sets, and by the Baire category theorem it has empty interior.
\end{proof}

\section{Convolutional neural networks}\label{sec:cnn-proof}

\begin{figure}[t]
    \centering
    \includegraphics[width=0.25\linewidth]{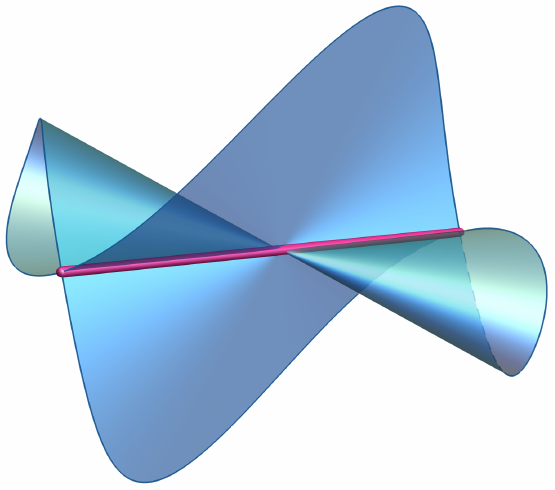}    
    \caption{
    \textbf{No proximity bias toward subnetworks of polynomial CNNs.}}
    \label{fig:cnn}
\end{figure}

A one-dimensional \emph{convolutional neural network (CNN)} with $L\ge2$ layers, filters $w_l\in\mathbb{R}^{k_l}$ and strides~$s_l$ is the map
\[
f_{\mathbf w}(x)=w_L\star_{s_L}\sigma\bigl(\cdots\,\sigma(w_1\star_{s_1}x)\bigr),
\]
where $(w\star_sh)[i]=\textstyle\sum_{j=1}^{k}w[j]\,h[s(i-1)+j]$, with one channel and no bias terms. We assume that the activation $\sigma$ is a generic polynomial of large enough degree depending on $L$ \citep{shahverdi2026learningrazorsedgeidentifiability}
such that $\sigma(0)=0$. Then the neural model $\mathcal F=\{f_{\mathbf w}\}$ is a semialgebraic subset of a finite-dimensional space $\mathcal H$ of polynomial maps, which we equip with any inner product. The natural submodels have smaller filters: for integers $0\le A_l<k_l$, not all zero, $\mathcal F_{\mathbf A}$ consists of the functions
$f_{\mathbf w}$ such that, for every $l$, the first or the last $A_l$ entries of $w_l$ vanish.

\begin{proposition}\label{prop:cnn}
    Let $\cF$ be the neural model of a CNN with a generic polynomial activation $\sigma$ of large enough degree such that $\sigma(0) = 0$, and let $\mathcal F'\subseteq\mathcal F$ be a semialgebraic subset with $\dim\mathcal F'<\dim\mathcal F$. Then, $\Vorloc_{\cF}(\cF')$ is contained in a semialgebraic set of dimension at most
$\dim\mathcal H-(\dim\mathcal F-\dim\mathcal F')$. Therefore, $\mathcal F$ exhibits no proximity bias toward any
submodel $\mathcal F_{\mathbf A}$ with smaller filters, nor toward its singular locus.
\end{proposition}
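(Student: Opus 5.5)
We will bound normal cones via the parametrization, as suggested in \Cref{rem:bound-normal-parametrization}. Let $\phi:\mathbf w\mapsto f_{\mathbf w}$ be the polynomial parametrization of $\cF$ by the filters, with parameter space $\Theta=\prod_l\RR^{k_l}$. For every $\mathbf w$ and every $v\in\Theta$ the curve $t\mapsto\phi(\mathbf w+tv)$ lies in $\cF$ and is differentiable. Hence \Cref{lem:curve} gives $N_\cF(\phi(\mathbf w))\perp d\phi|_{\mathbf w}(v)$, that is,
\[
N_\cF(\phi(\mathbf w))\subseteq\im\bigl(d\phi|_{\mathbf w}\bigr)^\perp .
\]
The key input from \citet{shahverdi2026learningrazorsedgeidentifiability} is that for a generic polynomial activation with $\sigma(0)=0$ and large enough degree, the rank of $d\phi$ equals $\dim\cF$ at every parameter $\mathbf w$ with $\phi(\mathbf w)\neq0$. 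In other words, the only rank drops of the Jacobian come from the continuous rescaling symmetries of the filters, and these are present at every nonzero point. This is the step I expect to be the main obstacle. Identifiability results are usually stated generically or up to these symmetries, so one has to check that they control the Jacobian rank at every nonzero parameter, including parameters whose filters have vanishing leading or trailing entries. These are exactly the subnetworks where \citet{shahverdi2026learningrazorsedgeidentifiability} locate the singularities, and there the rank of $d\phi$ may well remain $\dim\cF$ even though $\cF$ is singular. If the statement is only available on a dense set, I would try to show that the locus where $\rank d\phi<\dim\cF$ is a semialgebraic subset on which the fiber dimension jumps, and treat it separately by induction on dimension.

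Granting the rank statement, apply \Cref{lem:count} with $\Phi=\phi$ and with $v_i(\mathbf w)=d\phi|_{\mathbf w}(e_i)$ for the coordinate directions $e_i$ of $\Theta$. Take $S=\cF'\setminus\{0\}$ and $m=\dim\cF$. For every $f\in S$ and every $\mathbf w$ with $\phi(\mathbf w)=f$, the vectors $v_i(\mathbf w)$ span a space of dimension $\dim\cF$. \Cref{lem:count} then shows that
\[
\bigcup_{f\in S}\bigl(f+N_\cF(f)\bigr)
\]
lies in a semialgebraic set of dimension at most $\dim\cF'+\dim\cH-\dim\cF$. The point $0$ needs separate treatment. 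Scaling $w_L$ gives $t\cF\subseteq\cF$ for all $t\in\RR$, so \Cref{lem:curve} yields $N_\cF(0)\perp\linspan\cF=\cH$, and hence $0$ contributes only itself. By \Cref{prop:implicit-bias-criterion}, $\Vorloc_\cF(\cF')$ is contained in this union together with $\{0\}$. This gives the dimension bound claimed in the statement.

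For the final claims, it suffices to check that $\dim\cF_{\mathbf A}<\dim\cF$ and $\dim\operatorname{Sing}(\cF)<\dim\cF$. Then the bound is less than $\dim\cH$, and there is no proximity bias by the semialgebraic equivalences stated after the definition of proximity bias. The singular locus of a semialgebraic set has strictly smaller dimension. For $\cF_{\mathbf A}$, the parametrization uses strictly fewer free filter entries. By the same identifiability result, the number of rescaling symmetries is unchanged as long as the filters are nonzero, so the dimension of $\cF_{\mathbf A}$ is at most the reduced parameter count minus $L-1$, which is less than $\dim\cF$. This reduces the corollary to the main bound.
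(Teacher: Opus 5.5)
Your argument is the paper's: bound $N_\cF(f_\bw)\subseteq(\im d\phi_\bw)^\perp$ via curves in parameter space, use the identifiability theorem of Shahverdi et al.\ for the rank of $d\phi_\bw$ at every $\bw$ with $f_\bw\neq0$, apply \Cref{lem:count} to $\cF'\setminus\{0\}$, handle the origin with $t\cF\subseteq\cF$, and conclude with \Cref{prop:implicit-bias-criterion}. The paper cites that theorem (Theorem~4.4) in exactly the form you need: $d\phi_\bw$ is injective whenever $f_\bw\neq0$, with no genericity in $\bw$. So filters with vanishing leading or trailing entries are covered, and your fallback is unnecessary.

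\textbf{Rescaling symmetries.} Your picture of these is wrong here. Since $d\phi_\bw$ is injective, there are no continuous symmetries at all, and $\dim\cF=m$, the total number of filter entries. Rescalings $w_l\mapsto\lambda w_l$ that can be compensated in the next layer exist only for homogeneous activations. For a generic $\sigma$ with $\sigma(0)=0$ and $\lambda\neq0$, $\sigma(\lambda z)$ is a multiple of $\sigma(z)$ only when $\lambda=\pm1$. Your main bound is unaffected, because it only uses $\operatorname{rank} d\phi_\bw=\dim\cF$.

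\textbf{The bound for $\cF_\bA$.} Your estimate $\dim\cF_\bA\le(m-\sum_lA_l)-(L-1)$ is actually false. The differential $d\phi$ stays injective on each coordinate subspace defining $\cF_\bA$, so $\dim\cF_\bA=m-\sum_lA_l$ exactly. The conclusion survives with the paper's simpler count: $\cF_\bA$ is a finite union of images of coordinate subspaces, so $\dim\cF_\bA\le m-\sum_lA_l<m=\dim\cF$.

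\textbf{The origin.} Your claim $N_\cF(0)=\{0\}$ uses $\cH=\linspan\cF$. If $\cH$ is a larger space of polynomial maps, use $N_\cF(0)\subseteq(\linspan\cF)^\perp$ instead. This has dimension at most $\dim\cH-m$, which still fits the stated bound.
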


\begin{proof}
    Let $\varphi\colon\RR^m\to\cH$, $\bw\mapsto f_\bw$, be the parametrization and $n=\dim\cH$. By
\citet[Theorem~4.4]{shahverdi2026learningrazorsedgeidentifiability}, $d\varphi_\bw$ is injective whenever $f_\bw\neq0$, so $\dim\cF=m$. We have $N_\cF(f_\bw)\subseteq(\operatorname{im}d\varphi_\bw)^\perp$, a space of dimension $n-m$, for every
$f_\bw\in\cF'\setminus\{0\}$. Hence, $\bigcup_{f\in\cF'\setminus\{0\}}(f+N_\cF(f))$ lies in
a semialgebraic set of dimension at most $\dim\cF'+n-m$, by \Cref{lem:count}. At the origin, $t\cF\subseteq\cF$ for all $t\in\RR$, so $N_\cF(0)\subseteq(\linspan\cF)^\perp$, of dimension at most $n-m$. By \Cref{prop:implicit-bias-criterion},
$\Vorloc_\cF(\cF')$ lies in a semialgebraic set of dimension at most $n-(\dim\cF-\dim\cF')<n$.
Finally, $\cF_\bA$ is a finite union of images of coordinate subspaces of dimension $m-\sum_lA_l<m$, and the singular
locus of the Zariski closure $X$ of $\cF$ is a proper algebraic subset of the irreducible variety $X$, so both have
dimension less than $\dim\cF$. See \Cref{fig:cnn}.
\end{proof}

\section{Proofs of the Multiple-Token Results}
\label{app:multi-token-attention}

Throughout this section, assume that $t>1$ and $d>1$.

\subsection{Lower-rank submodels}

\multitokenlowerrank*

We first record the tangent directions available at a rank-deficient
attention matrix. Let
$D_a=\{A\in\RR^{d\times d}:\rank(A)\leq a\}$.

\begin{proposition}\label{prop:tangent-cone-attention-rank-deficient}
Suppose that $A\in D_a$ has rank $s<a$ and a decomposition
$A=\sum_{i=1}^s k_iq_i^\top$. Then
$$
T_{D_a}(A)=
\left\{
\sum_{i=1}^s
(\dot k_iq_i^\top+k_i\dot q_i^\top)
+\sum_{j=1}^{a-s}u_jv_j^\top \; : \; \dot k_i, \dot q_i , u_j, v_j \in \RR^d \right\}.
$$
\end{proposition}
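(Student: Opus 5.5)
We prove the two inclusions separately. Write $\mathcal{T}$ for the right-hand side. Choose bases so that $A=P\begin{pmatrix} I_s & 0\\ 0&0\end{pmatrix}Q^\top$ with $P,Q$ invertible. The decomposition $A=\sum_i k_iq_i^\top$ then has $k_1,\dots,k_s$ spanning $\im A$ and $q_1,\dots,q_s$ spanning $\im A^\top$, since $s=\rank A$.

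In these coordinates, $\mathcal{T}$ consists of the matrices $M$ whose lower-right block $M_{22}$ (relative to the splitting $\RR^d=\im A\oplus\text{complement}$ on each side) has rank at most $a-s$. The upper-left, upper-right and lower-left blocks are arbitrary. This is because the terms $\dot k_iq_i^\top+k_i\dot q_i^\top$ produce exactly the matrices $X$ with $X_{22}=0$, namely the tangent space of the rank-$s$ manifold. Adding $\sum_j u_jv_j^\top$ then lets $M_{22}$ be any matrix of rank at most $a-s$.

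\emph{Inclusion $\mathcal{T}\subseteq T_{D_a}(A)$.} Take $M=\sum_i(\dot k_iq_i^\top+k_i\dot q_i^\top)+\sum_j u_jv_j^\top$ and set
$$A_\tau=\sum_{i=1}^s(k_i+\tau\dot k_i)(q_i+\tau\dot q_i)^\top+\tau\sum_{j=1}^{a-s}u_jv_j^\top .$$
This is a sum of at most $a$ rank-one terms, so $A_\tau\in D_a$. We have $(A_\tau-A)/\tau=M+O(\tau)$, so $M\in T_{D_a}(A)$ along the sequence $\tau_n=1/n$.

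\emph{Inclusion $T_{D_a}(A)\subseteq\mathcal{T}$.} This is the main step. Take $A_n\in D_a$ and $\tau_n\to0^+$ with $(A_n-A)/\tau_n\to M$. Write $A_n-A=\tau_nM+o(\tau_n)$ and pass to block coordinates. The upper-left block of $A_n$ is $I_s+O(\tau_n)$, which is invertible for $n$ large. The Schur complement formula then gives
$$\rank A_n=s+\rank\bigl((A_n)_{22}-(A_n)_{21}(A_n)_{11}^{-1}(A_n)_{12}\bigr).$$
Since $\rank A_n\le a$, the Schur complement has rank at most $a-s$.

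Now estimate each piece. The off-diagonal blocks $(A_n)_{12}$ and $(A_n)_{21}$ are both $O(\tau_n)$, so the product correction is $O(\tau_n^2)$. The block $(A_n)_{22}=\tau_nM_{22}+o(\tau_n)$. Dividing the Schur complement by $\tau_n$ gives matrices of rank at most $a-s$ that converge to $M_{22}$. The set of matrices of rank at most $a-s$ is closed, since it is cut out by minors, so $\rank M_{22}\le a-s$.

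Finally, write $M_{22}=\sum_{j=1}^{a-s}u_jv_j^\top$ in block coordinates. The remaining part of $M$ has zero lower-right block, so it lies in the span of the $\dot k_iq_i^\top+k_i\dot q_i^\top$ directions. Hence $M\in\mathcal{T}$. Transforming back by $P$ and $Q$ completes the proof.
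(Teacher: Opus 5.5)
Your proof is correct, but it takes a different route from the paper. The paper proves this in two lines. It cites Theorem~4.1 of \citet{olikier2026tangent}, which gives $T_{D_a}(A)=T_{\mathcal R_s}(A)+D_{a-s}$. It then identifies $T_{\mathcal R_s}(A)$ by differentiating the factorization $A=\sum_i k_iq_i^\top$, and writes each element of $D_{a-s}$ as a sum of $a-s$ rank-one matrices.

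You instead prove the cited decomposition from scratch. Block coordinates adapted to $\im A$ and $\im A^\top$ turn the right-hand side into the explicit set $\{M:\rank M_{22}\le a-s\}$. The easy inclusion comes from the same curve $A(t)$ that the paper writes down right after the proposition. The hard inclusion comes from the Schur complement together with closedness of the determinantal variety.

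The paper's route buys brevity and a general reference. Yours buys several things:
\begin{itemize}
\item a self-contained, elementary argument;
\item a coordinate description of the tangent cone that makes its independence from the chosen factorization evident;
\item an explicit check, left implicit in the paper, that differentiating the factorization yields all of $T_{\mathcal R_s}(A)$. This relies on the linear independence of the $k_i$ and of the $q_i$, which you correctly note.
\end{itemize}
It is also the same Schur-complement technique the paper uses later for the inertia-constrained sets $\mathcal R_1$ and $\mathcal R_a$, so your version would make the appendix more uniform.

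One small point: the paper's definition of the tangent cone does not require $\tau_n\to0$. This is forced when $M\neq0$, since $A_n\to A$, and $M=0$ lies in the right-hand side trivially. So your assumption $\tau_n\to0^+$ is harmless.
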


\begin{proof}
Let $\mathcal R_s = \{A\in\RR^{d\times d}:\rank(A)=s\}$ denote the rank-$s$ stratum. 
By
\cite[Theorem~4.1]{olikier2026tangent},
$T_{D_a}(A)=T_{\mathcal R_s}(A)+D_{a-s}$.
Differentiating $A=\sum_{i=1}^s k_iq_i^\top$ gives
$T_{\mathcal R_s}(A)
=\{\sum_i(\dot k_iq_i^\top+k_i\dot q_i^\top)\}$.
Since every matrix in $D_{a-s}$ is a sum of at most $a-s$ rank-one
matrices, the result follows.
\end{proof}

The proposition describes the possible first-order variations of the
attention matrix $A$. We now use these variations to obtain tangent
directions to $\mathcal X_a$. Let $p=\phi(A,V)$, where
$\rank(A)=s<a$, and let $B\in T_{D_a}(A)$. Writing $B$ as in
Proposition~\ref{prop:tangent-cone-attention-rank-deficient}, define
$$
A(t)
=
\sum_{i=1}^s
(k_i+t\dot k_i)(q_i+t\dot q_i)^\top
+
t\sum_{j=1}^{a-s}u_jv_j^\top.
$$
Then $A(t)\in D_a$, $A(0)=A$, and $A'(0)=B$. For
$\dot V\in\RR^{d'\times d}$, set $V(t)=V+t\dot V$. Differentiating
$\phi(A(t),V(t))$ at $t=0$ gives
$
\frac{d}{dt}\phi(A(t),V(t))(X)\rvert_{t=0}
=
\dot VXX^\top AX+VXX^\top BX.
$
Therefore,
$$
T_{\mathcal X_a}(p)\supseteq
\left\{
\dot VXX^\top AX+VXX^\top BX:
\dot V\in\RR^{d'\times d},\ B\in T_{D_a}(A)
\right\}.
$$

For $p=\phi(A,V)$, define
$$
\mathcal J_{A,V}
=
\left\{
\dot VXX^\top AX+VXX^\top BX:
\dot V\in\RR^{d'\times d},\
B\in\RR^{d\times d}
\right\}.
$$

\begin{proposition}\label{prop:normal-cone-attention-rank<a}
Let $p=\phi(A,V)$, where $\rank(A)<a$. Then
$N_{\mathcal X_a}(p)\subseteq\mathcal J_{A,V}^{\perp}$.
\end{proposition}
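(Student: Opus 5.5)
The plan is to show that every element of $\mathcal J_{A,V}$ is a two-sided tangent direction of $\mathcal X_a$ at $p$, i.e.\ that both $J$ and $-J$ lie in $T_{\mathcal X_a}(p)$. Then any $w\in N_{\mathcal X_a}(p)$ satisfies $\langle w,J\rangle\le 0$ and $\langle w,-J\rangle\le0$, so $\langle w,J\rangle=0$. Since $\mathcal J_{A,V}$ is a linear space, this gives $N_{\mathcal X_a}(p)\perp\mathcal J_{A,V}$, which is exactly the claim. The cleanest route is \Cref{lem:curve}: it suffices to exhibit, for each element of $\mathcal J_{A,V}$, a curve in $\mathcal X_a$ through $p$ that is differentiable at $0$ and has that element as its derivative.

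First, observe that the map $(\dot V,B)\mapsto \dot VXX^\top AX+VXX^\top BX$ is linear. It is therefore enough to treat generators: directions with $B=0$ (arbitrary $\dot V$), and directions with $\dot V=0$ and $B=uv^\top$ of rank one. For the second kind this suffices because every $B\in\RR^{d\times d}$ is a sum of rank-one matrices, and the normal cone is orthogonal to sums of vectors it is orthogonal to.

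The two curves are as follows.
\begin{itemize}
\item For $\dot V$, take $t\mapsto\phi(A,V+t\dot V)$. It lies in $\mathcal X_a$ for all $t\in\RR$ because $\rank A$ is unchanged, and its derivative at $0$ is $\dot VXX^\top AX$.
\item For $B=uv^\top$, take $t\mapsto\phi(A+tuv^\top,V)$. Since $\rank(A+tuv^\top)\le\rank A+1\le a$ (using $\rank A<a$), this curve lies in $\mathcal X_a$ for all real $t$, of either sign. Because $\phi$ is linear in $A$, the curve is affine in $t$, so $p+tVXX^\top uv^\top X\in\mathcal X_a$ for all $t$.
\end{itemize}

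By \Cref{lem:curve}, $N_{\mathcal X_a}(p)$ is orthogonal to each of these generators, and hence, by linearity, to all of $\mathcal J_{A,V}$.

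The one point requiring care is that the rank-one perturbation must be admissible for both signs of $t$ and for an arbitrary rank-one $B$, not only for those built from the decomposition of $A$ as in \Cref{prop:tangent-cone-attention-rank-deficient}. This is exactly where the hypothesis $\rank A<a$ is used: it guarantees $\rank(A+tuv^\top)\le a$ for every $t$. With that observation the argument is routine, and I expect no real obstacle.
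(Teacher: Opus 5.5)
Your proof is correct and follows the paper's argument: both show that $\dot VXX^\top AX$ and $VXX^\top BX$ (for rank-one $B$) are two-sided tangent directions, using $\rank A<a$ so that $A\pm tB$ stays of rank at most $a$, and then conclude by linearity because rank-one matrices span $\RR^{d\times d}$. Your explicit lines $p+t\,VXX^\top BX$, which come from linearity of $\phi$ in $A$, are a slightly more direct way to get these two-sided directions than the paper's appeal to $D_{a-s}\subseteq T_{D_a}(A)$.
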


\begin{proof}
Since $\rank(A)=s<a$, every rank-one matrix $B$ and its negative belong
to $D_{a-s}\subseteq T_{D_a}(A)$. Hence $T_{\mathcal X_a}(p)$ contains both
$VXX^\top BX$ and $-VXX^\top BX$ for every rank-one $B$.
Varying $V$ similarly gives both
$\dot VXX^\top AX$ and $-\dot VXX^\top AX$ for every $\dot V$.

If $w\in N_{\mathcal X_a}(p)$, every one of these directions is therefore
orthogonal to $w$. Since rank-one matrices span $\RR^{d\times d}$,
$w$ is orthogonal to all of $\mathcal J_{A,V}$.
\end{proof}

The dimension of the normal cone determines whether a lower-dimensional
stratum can exhibit proximity bias. Indeed, if $\mathcal S\subseteq
\mathcal X_a$ has dimension $r$ and
$\dim N_{\mathcal X_a}(p)\leq n-m$ for every $p\in\mathcal S$, where
$n$ is the ambient dimension, then
$$
\dim\left(
\bigcup_{p\in\mathcal S}
(p+N_{\mathcal X_a}(p))
\right)
\leq r+n-m.
$$
Thus, $\mathcal S$ cannot exhibit proximity bias whenever $r<m$.

If $p$ is a smooth point of $\mathcal X_a$, then
$\dim N_{\mathcal X_a}(p)=n-\dim\mathcal X_a$. Hence a
proper lower-dimensional submodel cannot exhibit proximity bias through
its smooth points. Only points that are singular in $\mathcal X_a$ can
have sufficiently enlarged normal cones to potentially compensate for
the lower dimension of the submodel. We show below that the exceptional
lower-rank strata still do not have sufficiently large normal cones. By Proposition~\ref{prop:normal-cone-attention-rank<a}, for a point $p=\phi(A,V)$ with $\rank(A)<a$, it suffices to lower bound
$\dim\mathcal J_{A,V}$.

\begin{lemma}\label{lem:attention-differential-t>1}
Suppose that $A\neq0$ and $V\neq0$. If there do not exist nonzero
vectors $k,q,h$ such that $A=kq^\top$ and $V=hk^\top$, then
$
\dim\mathcal J_{A,V}=d^2+dd'-1.
$
If $A=kq^\top$ and $V=hk^\top$, then
$
\dim\mathcal J_{A,V}=d^2+dd'-d.
$
\end{lemma}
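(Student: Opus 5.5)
We will compute $\dim\mathcal J_{A,V}$ as the rank of the linear map
$$
L\colon \RR^{d'\times d}\times\RR^{d\times d}\to\cH,\qquad (\dot V,B)\mapsto \dot VXX^\top AX+VXX^\top BX .
$$
Its domain has dimension $dd'+d^2$, so it suffices to show two things. First, $\dim\ker L=1$ in the generic case. Second, $\dim\ker L=d$ when $A=kq^\top$ and $V=hk^\top$.

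\textbf{Lower bounds on the kernel.} These come from explicit elements. Since $\phi$ is bilinear in $(A,V)$ and $\phi(cA,c^{-1}V)=\phi(A,V)$, differentiating in $c$ shows that $(V,-A)\in\ker L$. This element is nonzero because $A\neq 0$. In the special case, for every $u\in\RR^d$ take $\dot V=hu^\top$ and $B=-uq^\top$. Then
$$
L(\dot V,B)=h\bigl(u^\top XX^\top k-k^\top XX^\top u\bigr)q^\top X=0,
$$
by symmetry of $XX^\top$. The map $u\mapsto(hu^\top,-uq^\top)$ is injective, which gives a $d$-dimensional subspace of the kernel. It contains $(V,-A)$, obtained at $u=k$.

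\textbf{Reducing the kernel to matrix identities.} This is where $t>1$ is used. Column $j$ of row $r$ of $L(\dot V,B)$ is
$$
\sum_i\Bigl[(\dot v_r^\top x_i)(x_i^\top Ax_j)+(v_r^\top x_i)(x_i^\top Bx_j)\Bigr],
$$
where $\dot v_r^\top$ and $v_r^\top$ are the $r$-th rows of $\dot V$ and $V$. Pick $i\neq j$, which exists because $t>1$. The terms of bidegree $(2,1)$ in $(x_i,x_j)$ must vanish separately. Writing $a_c,b_c$ for the columns of $A,B$ and reading off the coefficient of each coordinate $x_{j,c}$, this vanishing says
$$
\sym(\dot v_r a_c^\top)+\sym(v_r b_c^\top)=0\qquad\text{for all } r,c .
$$
Conversely, these identities make every term of the sum vanish, including the diagonal terms $i=j$. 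So $\ker L$ is exactly the solution set of this system.

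\textbf{Solving the system.} We use the elementary fact about symmetric rank $\le2$ tensors. We have $\sym(xy^\top)=0$ iff $x=0$ or $y=0$. If $\sym(xy^\top)=\sym(uw^\top)\neq0$, then $(u,w)=(\lambda x,\lambda^{-1}y)$ or $(u,w)=(\lambda y,\lambda^{-1}x)$ for some $\lambda\neq 0$. Fix a nonzero row $v_r$ and a nonzero column $a_c$. For each pair $(r,c)$, the relation $\sym(\dot v_r a_c^\top)=-\sym(v_r b_c^\top)$ leaves two options.
\begin{enumerate}
\item The \emph{aligned} option: $\dot v_r=\lambda v_r$ and $b_c=-\lambda a_c$.
\item The \emph{swapped} option: $\dot v_r\parallel a_c$ and $b_c\parallel v_r$.
\end{enumerate}
The plan is to propagate these options across all rows and columns, including zero rows of $V$ and zero columns of $A$, where the relation reduces to $\sym(\dot v_r a_c^\top)=0$ or $\sym(v_r b_c^\top)=0$. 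The goal is to show that a consistent global choice of the constants forces $(\dot V,B)=\lambda(V,-A)$. The only exception should be when all nonzero $v_r$ are multiples of a single vector $k$ and all nonzero $a_c$ are multiples of the same $k$, that is, $V=hk^\top$ and $A=kq^\top$. In that case the swapped option gives exactly the family $(hu^\top,-uq^\top)$.

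This case analysis is the main obstacle. In particular, we must check that the constant $\lambda$ cannot vary between pairs $(r,c)$, and that mixing the aligned and swapped options is impossible unless the rows of $V$ and the columns of $A$ all lie on one common line. We will handle it by comparing the identities for $(r,c)$, $(r,c')$ and $(r',c)$, which share $\dot v_r$ or $b_c$.
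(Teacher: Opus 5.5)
Your reduction of $\ker L$ to the system $\sym(\dot v_r a_c^\top)+\sym(v_r b_c^\top)=0$ is correct. By linearity in $x$ it is the same as the paper's identity $(\dot v_r^\top y)(y^\top Ax)+(v_r^\top y)(y^\top Bx)=0$ for all $x,y$. Your explicit elements also correctly give $\dim\ker L\ge1$, respectively $\ge d$. But the content of the lemma is the reverse inequality. For that you only describe a plan, so the proof stops exactly at the step that matters.

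Worse, the dichotomy you plan to propagate is false as stated. Unique factorization of $(\dot v_r^\top z)(a_c^\top z)=-(v_r^\top z)(b_c^\top z)$ matches each factor on the left with a factor on the right. So the second option is $a_c\parallel v_r$ and $b_c\parallel\dot v_r$, not $\dot v_r\parallel a_c$ and $b_c\parallel v_r$. Your own family $(hu^\top,-uq^\top)$ shows this: take $u$ not parallel to $k$ and $h_r,q_c\neq0$. Then $\dot v_r=h_ru$, $a_c=q_ck$, $v_r=h_rk$, $b_c=-q_cu$, which satisfies neither of your two options. A case analysis built on them could therefore not reach the exceptional case you expect.

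The missing step becomes short if you replace the columns $a_c$ by a general vector $x$, as the paper does. Suppose that for some nonzero row $v_r$ the vectors $v_r$ and $Ax$ are linearly independent for one $x$. Then they are independent for all $x$ in a dense open set. There $v_r^\top y$ and $y^\top Ax$ are coprime, so $\dot v_r=\mu v_r$ and $Bx=-\mu Ax$. Since $\mu$ is fixed by $\dot v_r$ alone, it is independent of $x$, giving $B=-\mu A$. Substituting back yields $((\dot v_{r'}-\mu v_{r'})^\top y)(y^\top Ax)=0$ for every row $r'$. Choosing $x$ with $Ax\neq0$ gives $\dot V=\mu V$, so the kernel is the line through $(V,-A)$.

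Otherwise every nonzero row of $V$ is parallel to every nonzero vector of $\im A$, which forces $A=kq^\top$ and $V=hk^\top$. Cancelling $k^\top y$ gives $(q^\top x)\,\dot Vy+h\,(y^\top Bx)=0$. Taking $x\in\ker q^\top$ gives $B=rq^\top$, and then $\dot V=-hr^\top$. So the kernel has dimension at most, hence exactly, $d$. This route avoids tracking $\lambda$ and the two options across the pairs $(r,c)$, $(r,c')$, $(r',c)$ altogether.
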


\begin{proof}
Let $(B,\dot V)$ lie in the kernel of the differential. Since $t>1$,
comparing terms involving two distinct input columns $x$ and $y$ gives
$(\dot Vy)(y^\top Ax)+(Vy)(y^\top Bx)=0$ for all $x,y\in\RR^d$.
If $v_i^\top$ and $\dot v_i^\top$ are the
$i$th rows of $V$ and~$\dot V$, then
$(\dot v_i^\top y)(y^\top Ax)+(v_i^\top y)(y^\top Bx)=0$.

Suppose that, for some $i$ and $x$, the nonzero linear forms
$v_i^\top y$ and $y^\top Ax$ are not proportional. Since they are
relatively prime, the identity implies
$\dot v_i=-\lambda v_i$ and $Bx=\lambda Ax$ for some $\lambda\in\RR$.
Varying $i$ and $x$ then gives $\dot V=-\lambda V$ and $B=\lambda A$.
Thus the kernel is one-dimensional.

Otherwise, every nonzero row of $V$ is proportional to every nonzero
vector in $\operatorname{im}(A)$. Hence $A=kq^\top$ and $V=hk^\top$.
Substituting and cancelling $k^\top y$ gives
$(q^\top x)\dot Vy+h\,y^\top Bx=0$.
Taking $x\in\ker(q^\top)$ shows that $B=rq^\top$ for some $r\in\RR^d$,
and then $\dot V=-hr^\top$. Thus the kernel is $d$-dimensional.
\end{proof}

For $s<a$, let
$
\mathcal U_s\subseteq
\mathcal X_s\setminus\mathcal X_{s-1}
$
be the set of points $p$ admitting a rank-$s$ parametrization
$p=\phi(A,V)$ such that
$
\dim\mathcal J_{A,V}
=
d^2+dd'-1
=
\dim\mathcal X_d.
$

\begin{proposition}\label{prop:attention-low-rank-max-differential}
There is no proximity bias toward $\mathcal U_s$ in $\mathcal X_a$.
\end{proposition}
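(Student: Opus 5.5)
The plan is a dimension count in the spirit of \Cref{lem:count}, combined with \Cref{prop:implicit-bias-criterion}. Throughout, let $n=\dim\cH$. By \Cref{prop:normal-cone-attention-rank<a}, every $p\in\mathcal U_s$ has a rank-$s$ parametrization $p=\phi(A,V)$ with $N_{\mathcal X_a}(p)\subseteq\mathcal J_{A,V}^\perp$. By definition of $\mathcal U_s$, this orthogonal complement has dimension $n-m$, where
\[
m=\dim\mathcal J_{A,V}=d^2+dd'-1=\dim\mathcal X_d .
\]
Since $\Vorloc_{\mathcal X_a}(\mathcal U_s)\subseteq\bigcup_{p\in\mathcal U_s}\bigl(p+N_{\mathcal X_a}(p)\bigr)$, it suffices to show that this union lies in a semialgebraic set of dimension less than $n$.

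First, I will put the setup into the form of \Cref{lem:count}. Take $\Theta$ to be the semialgebraic set of pairs $(A,V)$ with $\rank A=s$ and $\dim\mathcal J_{A,V}=d^2+dd'-1$. Membership is semialgebraic because $\dim\mathcal J_{A,V}$ is the rank of a matrix polynomial in $(A,V)$. Let $\Phi=\phi$. For the vectors $v_i(A,V)$, take the images $\dot VXX^\top AX+VXX^\top BX$ with $(B,\dot V)$ running over a fixed basis of $\RR^{d\times d}\times\RR^{d'\times d}$. These span $\mathcal J_{A,V}$, and \Cref{prop:normal-cone-attention-rank<a} guarantees that $N_{\mathcal X_a}(\phi(A,V))$ is orthogonal to each of them. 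Every $p\in S=\mathcal U_s=\phi(\Theta)$ has some parameter in $\Theta$ whose vectors span a space of dimension $m$. \Cref{lem:count} then bounds the union by a semialgebraic set of dimension at most $\dim\mathcal U_s+n-m$.

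One caveat about \Cref{lem:count} as stated: it intersects the constraints over \emph{all} preimages in $\Theta$. That only shrinks the fiber, so the bound still holds.

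It remains to show $\dim\mathcal U_s<m=\dim\mathcal X_d$. The key inequality is
\[
\dim\mathcal U_s\le\dim\mathcal X_s\le\dim D_s+dd'-1,
\]
where the $-1$ comes from the scaling symmetry $(A,V)\mapsto(\lambda A,\lambda^{-1}V)$. Since $\dim D_s=2ds-s^2<d^2$ for $s<a\le d$, this gives $\dim\mathcal U_s<d^2+dd'-1$.

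Equivalently, the generic fiber of $\phi$ on $\Theta$ contains the one-dimensional scaling orbit. Alternatively, it suffices to note that $\dim\phi(D_s\times\RR^{d'\times d})\le\dim D_s+dd'$. That bound is already less than $d^2+dd'-1$ whenever $2ds-s^2\le d^2-2$, which holds for $s\le d-1$ and $d\ge2$: indeed $d^2-(2ds-s^2)=(d-s)^2\ge1$. The case $(d-s)^2=1$ is exactly where the scaling argument supplies the missing $1$.

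The main obstacle is this last dimension comparison in the boundary case $s=d-1$, where the crude bound $\dim D_s+dd'$ is off by exactly one. To close it I will invoke the positive-dimensional fiber coming from scaling, which forces $\dim\phi(D_s\times\RR^{d'\times d})\le\dim D_s+dd'-1$. With this, $\dim\mathcal U_s+n-m<n$, so $\bigcup_{p\in\mathcal U_s}(p+N_{\mathcal X_a}(p))$ has empty interior. By \Cref{prop:implicit-bias-criterion}, there is no proximity bias toward $\mathcal U_s$.
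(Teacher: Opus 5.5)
Your proposal is correct and follows the paper's proof. Both bound $N_{\mathcal X_a}(p)$ inside $\mathcal J_{A,V}^\perp$ via Proposition~\ref{prop:normal-cone-attention-rank<a} and then use $\dim\mathcal U_s+n-(d^2+dd'-1)\le n-(d-s)^2<n$. The differences are minor: you obtain $\dim\mathcal X_s\le 2ds-s^2+dd'-1$ from the scaling symmetry instead of citing the exact value from \citet{HMK}, and you route the count explicitly through Lemma~\ref{lem:count}, which is slightly more careful than the paper's informal count (your aside that $2ds-s^2\le d^2-2$ for all $s\le d-1$ fails at $s=d-1$, but your main chain does not use it).
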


\begin{proof}
Let $n$ denote the ambient dimension. For every $p\in\mathcal U_s$,
Proposition~\ref{prop:normal-cone-attention-rank<a} gives
$
\dim N_{\mathcal X_a}(p)\leq n-\dim\mathcal X_d.
$
Therefore
\begin{align*}
\dim\left(
\bigcup_{p\in\mathcal U_s}
(p+N_{\mathcal X_a}(p))
\right)
&\leq \dim\mathcal U_s+n-\dim\mathcal X_d \leq \dim\mathcal X_s+n-\dim\mathcal X_d =n-(d-s)^2<n,
\end{align*}
where we used~\citep{HMK} to obtain $\dim\mathcal X_s=2sd+dd'-s^2-1$ and $\dim\mathcal X_d=d^2+dd'-1.$
Thus, there is no proximity bias toward $\mathcal U_s$ by \Cref{prop:implicit-bias-criterion}.
\end{proof}

For $s\geq2$, Lemma~\ref{lem:attention-differential-t>1} shows that
$\mathcal U_s=\mathcal X_s\setminus\mathcal X_{s-1}$.
For $s=1$, the only points not covered by
Proposition~\ref{prop:attention-low-rank-max-differential}
are contained in
\begin{equation}\label{eq:rank-one-attention}
    Y=\{\phi(kq^\top,hk^\top):
k,q\in\RR^d\setminus\{0\},\
h\in\RR^{d'}\setminus\{0\}\}.
\end{equation}

\begin{proposition}\label{prop:multi-token-no-bias-Y}
Suppose that $a>1$. There is no proximity bias toward $Y$ in $\mathcal X_a$.
\end{proposition}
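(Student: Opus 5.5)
The approach is a dimension count, as in \Cref{prop:attention-low-rank-max-differential}. At points of $Y$ we can only certify a smaller differential, namely $\dim\mathcal J_{A,V}=d^2+dd'-d$ by \Cref{lem:attention-differential-t>1}. To compensate, we show that $Y$ itself has small dimension. Let $n$ be the ambient dimension. By \Cref{prop:normal-cone-attention-rank<a}, which applies since every point of $Y$ has a parametrization with $\rank(A)=1<a$, each $p\in Y$ satisfies
\[
\dim N_{\mathcal X_a}(p)\le n-(d^2+dd'-d).
\]
Then \Cref{lem:count} applies with $\Theta$ the parameter set $\{(k,q,h)\}$, $\Phi(k,q,h)=\phi(kq^\top,hk^\top)$, and the $v_i$ spanning $\mathcal J_{A,V}$. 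It gives
\[
\dim\bigcup_{p\in Y}\bigl(p+N_{\mathcal X_a}(p)\bigr)\le \dim Y+n-(d^2+dd'-d).
\]
By \Cref{prop:implicit-bias-criterion}, it therefore suffices to prove $\dim Y<d^2+dd'-d$.

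To bound $\dim Y$, parametrize it by $(k,q,h)\in\RR^d\times\RR^d\times\RR^{d'}$:
\[
\phi(kq^\top,hk^\top)(X)=h\,k^\top XX^\top k\,q^\top X .
\]
This map is invariant under the two-dimensional torus action $(k,q,h)\mapsto(\lambda k,\mu q,(\lambda^2\mu)^{-1}h)$. Its fibers therefore have dimension at least $2$, and
\[
\dim Y\le 2d+d'-2.
\]
It remains to check
\[
2d+d'-2<d^2+dd'-d,
\qquad\text{equivalently}\qquad
(d-1)(d+d'-2)+d'(d-1)>0\ \text{ after rearranging.}
\]
Explicitly, $d^2+dd'-3d-d'+2=(d-1)(d-2)+d'(d-1)$. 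Since $d>1$ and $d'\ge1$, this is at least $d'(d-1)>0$. So the union of translated normal cones over $Y$ has codimension at least $d'(d-1)+(d-1)(d-2)\ge1$. It has empty interior, and there is no proximity bias toward $Y$.

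The step needing the most care is applying \Cref{lem:count} uniformly. Every point of $Y$ must have a parametrization with $\rank A<a$, so that \Cref{prop:normal-cone-attention-rank<a} applies; this holds because $a>1$ and $\rank(kq^\top)=1$. The spanning vectors of $\mathcal J_{A,V}$ must also depend semialgebraically on $(k,q,h)$; they are the images of a fixed basis of $(\dot V,B)$ under the differential, which is polynomial. The paper invokes \Cref{lem:count} for polynomial ambient spaces, and that setting applies here since $\cH$ is finite-dimensional.

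A point of $Y$ may also admit another parametrization lying in $\mathcal U_1$. This does not matter: we only need an upper bound on each normal cone, and the bound $n-(d^2+dd'-d)$ is already sufficient.
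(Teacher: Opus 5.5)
Your proof is correct and follows the paper's argument. The paper likewise bounds $\dim Y\le 2d+d'-2$ via the two scaling symmetries and bounds each normal cone by $\mathcal J_{A,V}^\perp$ with $\dim\mathcal J_{A,V}=d^2+dd'-d$, giving codimension $(d-1)(d+d'-2)>0$. One harmless slip: your displayed ``equivalently'' inequality should read $(d-1)(d+d'-2)>0$, since the extra $+d'(d-1)$ double-counts, but the explicit identity you give right afterward is correct, so nothing breaks.
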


\begin{proof}
We have $\dim Y\leq d'+2d-2$, by counting $k,q,h$ and subtracting the
two scaling symmetries. By
Lemma~\ref{lem:attention-differential-t>1},
$\dim\mathcal J_{A,V}=d^2+dd'-d$ at a parametrization
$A=kq^\top$, $V=hk^\top$. Hence
Proposition~\ref{prop:normal-cone-attention-rank<a} gives
$\dim N_{\mathcal X_a}(p)\leq n-(d^2+dd'-d)$ for $p\in Y$. Therefore
$$
\dim\left(\bigcup_{p\in Y}(p+N_{\mathcal X_a}(p))\right)
\leq n-(d-1)(d+d'-2)<n.
$$
Thus the union of the corresponding translated normal cones is not
full-dimensional, so there is no proximity bias toward $Y$.
\end{proof}

\begin{proof}[Proof of Theorem~\ref{thm:multi-token-lower-rank}]
Proposition~\ref{prop:attention-low-rank-max-differential} and
Proposition~\ref{prop:multi-token-no-bias-Y} show there is no
proximity bias toward any nonzero rank-exactly-$r$ stratum with $r<a$.
At the origin, since $\mathcal X_a$ is a cone and
$\mathcal X_a=-\mathcal X_a$, we have 
$
N_{\mathcal X_a}(0)=\operatorname{span}(\mathcal X_a)^\perp,
$
which is a proper linear subspace of the ambient space. Hence there is
no proximity bias toward the origin. Since each $\mathcal X_s$ with $s<a$
is the finite union of the origin and the rank-$r$ strata with $r\leq s$, there is no proximity bias toward $\mathcal X_s$.
\end{proof}

\subsection{Singular loci}

For the algebraic analysis in this subsection, we work over $\CC$. Let
$
D_a^\CC=\{A\in\CC^{d\times d}:\rank(A)\leq a\},
$
let $\mathcal X_a^\CC$ denote the complexification of $\mathcal X_a$,
and let $\PP(\mathcal X_a^\CC)$ denote its projectivization. We also set
$$
Z^\CC=
\{\phi(A,V):
A\in\CC^{d\times d},\
V\in\CC^{d'\times d},\
\rank(A)=\rank(V)=1\}.
$$

\begin{lemma}\label{lem:multi-token-singular}
Assume that either $a>1$ or $d'>1$. Then
$$
\operatorname{Sing}\bigl(\PP(\mathcal X_a^\CC)\bigr)
=
\begin{cases}
\PP(\mathcal X_{a-1}^\CC), & 1<a<d,\\
\PP(Z^\CC), & a=d\text{ or }(a=1,\ d'>1).
\end{cases}
$$
\end{lemma}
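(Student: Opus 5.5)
The plan is to identify singular points of $\PP(\mathcal X_a^\CC)$ through the differential of the parametrization $\Phi:(A,V)\mapsto\phi(A,V)$, which is defined on $D_a^\CC\times\CC^{d'\times d}$. We compare the tangent space of the image with the expected dimension $\dim\mathcal X_a=2ad+dd'-a^2-1$ (by \citet{HMK}). The proof has three parts: smoothness at generic-rank points, singularity at the lower strata, and the converse inclusion.

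\textbf{Smooth part.} Let $A$ have rank exactly $a$, so $A$ is a smooth point of $D_a^\CC$, and suppose $(A,V)$ is not of the form $(kq^\top,hk^\top)$. Over $\CC$, Lemma~\ref{lem:attention-differential-t>1} (whose argument is algebraic) says the kernel of the full differential on $\CC^{d\times d}\times\CC^{d'\times d}$ is the line spanned by $(A,-V)$. This line is tangent to $D_a$, so the differential restricted to $T_A D_a\times\CC^{d'\times d}$ has rank $\dim D_a+dd'-1=\dim\mathcal X_a$. Two further facts are needed:
\begin{enumerate}
\item The fiber of $\Phi$ is the scaling orbit $\{(\lambda A,\lambda^{-1}V)\}$. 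This follows from the same unique-factorization argument, using the identity $(Vy)(y^\top Ax)=(V'y)(y^\top A'x)$.
\item $\Phi$ is therefore a submersion onto its image near such points, modulo the $\CC^*$-action.
\end{enumerate}
Together these show that $\phi(A,V)$ is a smooth point of the affine cone. The projectivization is then smooth there, because a nonzero point of a cone is smooth iff its image in $\PP$ is.

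\textbf{Singular part, case $1<a<d$.} We show every point of $\mathcal X_{a-1}^\CC\setminus\{0\}$ is singular. At $p=\phi(A,V)$ with $\rank A<a$, Proposition~\ref{prop:normal-cone-attention-rank<a} and its tangent-cone argument give that the Zariski tangent space of $\mathcal X_a^\CC$ at $p$ contains $\mathcal J_{A,V}$. For $A$ not of the special form $kq^\top$ paired with $V=hk^\top$, Lemma~\ref{lem:attention-differential-t>1} gives $\dim\mathcal J_{A,V}=d^2+dd'-1$. This exceeds $\dim\mathcal X_a$ exactly because $(d-a)^2>0$ when $a<d$. For the special rank-one points, $\dim\mathcal J_{A,V}=d^2+dd'-d$, which is still greater than $2ad+dd'-a^2-1$, since the difference is $(d-a)^2-(d-1)>0$ for $a\ge2$ unless $a=d-1$. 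For the case $a=d-1$ we would instead use that the singular locus is closed: these points lie in the closure of generic rank-one points, which are already singular.

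\textbf{Case $a=d$ or ($a=1$, $d'>1$).} Here all points with $(A,V)$ not special are smooth by the smooth part. When $a=d$ the rank constraint is vacuous, and when $a=1$ the general rank-one stratum is the top stratum, so we only need to show that $Z^\CC$ is singular. The kernel at $(kq^\top,hk^\top)$ is $d$-dimensional. This suggests a non-immersive, non-birational behaviour that creates extra tangent directions, rather than a defect in the image. To show singularity, we plan to exhibit two distinct branches through $p$, or tangent directions spanning more than $\dim\mathcal X_a$. Concretely, write $p(X)=h\,(k^\top X)(X^\top k)\,(q^\top X)$, which is a product of factors. Curves that deform $V$ away from $hk^\top$ and curves that deform $A$ give tangent vectors whose span we compute. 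We expect this span to reach $\dim\mathcal X_a+1$ by using both branches, for instance curves $V(t)=hk^\top+t\dot V$ combined with $A(t)$ compensating to second order.

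\textbf{Converse inclusion and main obstacle.} We must show that every point outside the claimed locus is smooth. This needs the fiber and injectivity statement above, together with properness of $\Phi$ onto the cone. We expect the main obstacle to be the singular-part argument at the special locus $Z^\CC$: the parametrization is degenerate there, so the differential of $\Phi$ only gives a lower bound on the tangent space, and proving that the Zariski tangent space is strictly larger than $\dim\mathcal X_a$ will likely require second-order curves or an explicit local equation.
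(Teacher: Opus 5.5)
\textbf{Main gap.} The genuine gap is in the case $a=d$ or $(a=1,\ d'>1)$. It comes from your first listed fact, that the fiber over a non-special point is the scaling orbit. This is false when $A=kq^\top$ and $V=hv^\top$ with $v\not\parallel k$. Because $XX^\top$ is symmetric, $\phi(kq^\top,hv^\top)=\phi(vq^\top,hk^\top)$, so the projective fiber has two points. Unique factorization cannot exclude this: $y^\top Ax=(k^\top y)(q^\top x)$ is reducible, and its factor $k^\top y$ can trade places with the factor $v^\top y$ of $Vy$.

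These pairs are not of your special form $(kq^\top,hk^\top)$, so your smooth part would declare them smooth. Yet they make up all of $Z^\CC$ outside the lower-dimensional locus $Y=\{\phi(kq^\top,hk^\top)\}$, and the statement says they are singular. As a result, your singular part only looks at $Y$, and even there it rests on a second-order computation you have not carried out.

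\textbf{Missing idea.} On $Z^\CC\setminus Y$ the differential is injective; the singularity comes from non-injectivity of the parametrization. First establish that the projectivized map is finite and birational (Proposition~\ref{prop:multi-token-finite-birational}); this is also the properness you invoke for the converse. Then apply Lemma~\ref{lem:kohn-finite-birational}, which is essentially Zariski's main theorem: a finite birational map onto a variety that is smooth, hence normal, at $p$ is an isomorphism over a neighborhood of $p$. So a two-point fiber forces $p$ to be singular.

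The points of $Y$ then follow in either of two ways. By the same criterion, the fiber is one point but the $d$-dimensional affine kernel makes the projective differential non-injective. Alternatively, $Y\subseteq\overline{Z^\CC\setminus Y}$ and the singular locus is closed. No second-order curves are needed.

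\textbf{The case $1<a<d$.} Your argument here is sound in outline and differs usefully from the paper's. The paper deduces singularity from the singularity of $D_a^\CC$ at a preimage, via the finite-birational criterion. Your bound $\dim\mathcal J_{A,V}=d^2+dd'-1>\dim\mathcal X_a$ on the Zariski tangent space is a direct, more elementary alternative. Two corrections are needed, though.
\begin{enumerate}
\item At the critical points, the inequality $(d-a)^2>d-1$ fails more often than you say; for instance $d=6$, $a=4$ gives $(d-a)^2-(d-1)=-1$. So use the closure argument for all points $\phi(kq^\top,hk^\top)$, not only when $a=d-1$.
\item Your smoothness step needs exactly the finiteness and birationality above. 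An injective differential and a one-point fiber do not by themselves make the image smooth.
\end{enumerate}
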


We first establish the finite-birational statement used in the proof.

\begin{proposition}\label{prop:multi-token-finite-birational}
The projectivized parametrization
$
\overline{\phi}:
\PP(D_a^\CC)\times\PP(\CC^{d'\times d})
\longrightarrow\PP(\mathcal X_a^\CC)
$
has finite fibers. It is birational when $a>1$, and also when
$a=1$ and $d'>1$. When $a=d'=1$, it is generically two-to-one.
\end{proposition}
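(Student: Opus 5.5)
We treat the projectivized parametrization $\overline\phi([A],[V]) = [VXX^\top AX]$ through its fibers over $\CC$, using unique factorization of polynomials in the entries of $X$. Write $p=\phi(A,V)$ with $A\neq 0$, $V\neq 0$. Since $t>1$, extract the part of $p$ that is bilinear-type in two distinct columns $x=x_1$ and $y=x_2$: the coefficient of the terms of degree two in $y$ and one in $x$ equals $(Vy)(y^\top A x)$. So $p$ determines the $d'$-tuple of polynomials $(v_i^\top y)(y^\top Ax)$, where $v_i^\top$ are the rows of $V$. Knowing these products, we recover the factors by unique factorization in $\CC[x,y]$. The factor $y^\top A x$ is bilinear, and it is irreducible precisely when $\rank A\ge 2$. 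The factor $v_i^\top y$ is linear in $y$ alone.

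\textbf{Finite fibers.} If $\rank A\geq2$, then $y^\top Ax$ is the unique factor involving $x$. It is therefore determined up to scalar, and each $v_i$ is determined up to the reciprocal scalar. So $([A],[V])$ is recovered exactly, and the map is injective there. If $\rank A=1$, write $A=kq^\top$. Then each product is $(v_i^\top y)(k^\top y)(q^\top x)$. The factor $q^\top x$ is determined up to scalar. The product of the two linear forms in $y$ is also determined: it is the quadratic $y^\top(k v_i^\top)y$ for each $i$, i.e. the symmetric matrices $\tfrac12(kv_i^\top+v_ik^\top)$. The remaining task is to recover $[k]$ and $[V]$ from the tuple of quadratics $(k^\top y)(v_i^\top y)$.
\begin{itemize}
\item When $d'>1$ and $V$ has a row not proportional to $k$, the common linear factor of the nonzero quadratics is exactly $k^\top y$ up to scalar: gcd reasoning leaves at most two candidates, and the tuple singles out one. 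Hence injectivity holds off a proper closed set.
\item When $d'=1$, the single quadratic $(k^\top y)(v^\top y)$ factors in at most two ways, by swapping $k$ and $v$. These give the two preimages $([kq^\top],[v^\top])$ and $([vq^\top],[k^\top])$, which are distinct for generic $k,v$.
\end{itemize}
Every fiber is thus finite, since there are at most two choices at each step.

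\textbf{Birationality and degree.} For $a>1$, the generic point of $D_a^\CC$ has rank $\geq2$, so the map is injective on a dense open set. Since we work in characteristic zero, it is birational onto its image. For $a=1$ and $d'>1$, generic $V$ has rows not all proportional to $k$, so we are again generically injective. For $a=d'=1$, the generic fiber consists of exactly the two points produced by the swap, so the map is generically two-to-one.

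\textbf{Main obstacle.} The delicate step is to make the factorization argument rigorous so that it gives finiteness of \emph{every} fiber, including degenerate ones. Two cases need care: $A$ of rank one with $V=hk^\top$, where all quadratics equal $(k^\top y)^2$ times scalars, and the extraction of the $(x,y)$-component via a multihomogeneous decomposition of the cubic. In the degenerate case we still recover $[k]$ as the square root of the quadratic, and then $[h]$ and $[q]$ follow, so the fiber is a single point. Finally, we should note that $p\neq0$ whenever $A,V\neq0$, because the product of nonzero polynomials is nonzero. This shows that $\overline\phi$ is a morphism defined everywhere on the projective product.
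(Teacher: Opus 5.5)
Your route is the paper's. You isolate the two-token component $(Vy)(y^\top Ax)$ and use that $y^\top Ax$ is irreducible exactly when $\rank A\ge 2$. In the rank-one case you first recover $[q]$ and then work with the tuple of quadratics $(k^\top y)(v_i^\top y)$.

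The step that fails is the first bullet of the rank-one case. The hypothesis ``$V$ has a row not proportional to $k$'' does not make the fiber a singleton. Take $V=hv^\top$ of rank one with $v\not\parallel k$. This is allowed for every $d'$, and every nonzero row of $V$ is then a multiple of $v^\top$, so your hypothesis holds. All nonzero quadratics are then multiples of the single form $(k^\top y)(v^\top y)$. Both $k^\top y$ and $v^\top y$ are common factors, and the tuple does not single one out. So $(kq^\top,hv^\top)$ and $(vq^\top,hk^\top)$ are two distinct projective preimages of the same point, exactly as in your $d'=1$ bullet. A concrete instance is $d=d'=2$, $k=e_1$, $v=e_2$, $h=(1,1)^\top$, where both rows give the quadratic $y_1y_2$.

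The correct criterion is $\rank V\ge 2$, which is the condition the paper uses. If two rows $v_i,v_j$ are nonproportional, then $\gcd\bigl((k^\top y)(v_i^\top y),(k^\top y)(v_j^\top y)\bigr)=k^\top y$, which pins down $[k]$ and then $[V]$. With this replacement your conclusions go through:
\begin{itemize}
\item For $a=1$ and $d'>1$, a generic $V$ has rank at least two. Your sentence ``generic $V$ has rows not all proportional to $k$'' should be changed accordingly.
\item For $\rank V=1$ and any $d'$, the swap gives at most two points, and $V=hk^\top$ gives one, so every fiber is still finite.
\end{itemize}
The distinction matters beyond this proposition. The paper later uses that the two-point fibers lie exactly over $\{\rank A=\rank V=1\}$ (with $k\not\parallel v$) to show that $\PP(Z^\CC)$ is singular. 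Your version would misplace that locus.
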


\begin{proof}
Suppose that $\phi(A,V)=\phi(A',V')$. Since $t>1$, comparing terms
involving two distinct input columns gives
$
(v_i^\top y)(y^\top Ax)
=
(v_i'{}^\top y)(y^\top A'x)
$
for every row $i$ and all $x,y\in\CC^d$.
If $\rank(A)>1$, the bilinear form $y^\top Ax$ is irreducible.
Unique factorization therefore gives $A'=\lambda A$ and
$V'=\lambda^{-1}V$, so the projective fiber is a singleton.

Suppose now that $\rank(A)=1$ and write $A=kq^\top$. Unique
factorization first gives $A'=k'q^\top$, up to scaling, and then
$
(v_i^\top y)(k^\top y)
=
(v_i'{}^\top y)(k'{}^\top y).
$
If $\rank(V)>1$, two nonproportional rows of $V$ determine the common
factor $k^\top y$, so the projective fiber is again a singleton.

Finally, if $\rank(V)=1$, write $V=hv^\top$. Up to scaling, the only
possibilities are
$
(A',V')=(kq^\top,hv^\top)
$
or
$
(A',V')=(vq^\top,hk^\top).
$
Thus every projective fiber contains at most two points. Since the
source is projective, the map is proper; finite fibers therefore imply
that it is finite.

If $a>1$, a general $A\in D_a^\CC$ has rank greater than one, so the
general fiber is a singleton. If $a=1$ and $d'>1$, a general $V$ has
rank greater than one, so again the general fiber is a singleton.
Hence the map is birational in these cases. If $a=d'=1$, then for
general nonproportional $k$ and $v$ the two projective preimages above
are distinct, so the map is generically two-to-one.
\end{proof}

We also recall the following standard criterion. It is stated in this
form in~\cite[Lemma~3.2]{kohn2017secants}; see also
\cite[Theorem~14.9]{harris1992algebraic} for the classical criterion
for finite morphisms underlying the result.

\begin{lemma}\label{lem:kohn-finite-birational}
Let $f:X\to Y$ be a finite birational surjective morphism between
irreducible projective varieties. Then $Y$ is smooth at $y$ if and
only if $f^{-1}(y)$ consists of a single point $x$, the variety $X$
is smooth at $x$, and $d_xf$ is injective.
\end{lemma}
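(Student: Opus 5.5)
The plan is to prove the two implications separately, working locally at $y$ on the level of local rings. Both directions rest on two standard facts: a finite morphism is affine, and a birational finite map $f$ makes $f_*\mathcal O_X$ a finite $\mathcal O_Y$-algebra contained in the function field $K(Y)=K(X)$.

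For the direction ``$Y$ smooth at $y$ $\Rightarrow$ the three conditions'', I would first note that a smooth point is normal, so $\mathcal O_{Y,y}$ is integrally closed in $K(Y)$. I would then pick an affine neighborhood $U=\operatorname{Spec} R$ of $y$ with $R$ normal. Since $f$ is finite, $f^{-1}(U)=\operatorname{Spec} S$ with $S$ a finite $R$-module. By birationality, $R\subseteq S\subseteq K(Y)$, so $S$ is integral over the integrally closed $R$, which forces $S=R$. This is the finite form of Zariski's main theorem. Hence $f$ restricts to an isomorphism $f^{-1}(U)\to U$. Consequently $f^{-1}(y)$ is a single point $x$, $X$ is smooth at $x$ because it is isomorphic to a neighborhood of the smooth point $y$, and $d_xf$ is an isomorphism, in particular injective.

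For the converse, assume $f^{-1}(y)=\{x\}$, $X$ is smooth at $x$, and $d_xf$ is injective. Again I would pass to $U=\operatorname{Spec}R\ni y$ affine with $f^{-1}(U)=\operatorname{Spec}S$ and $S$ finite over $R$. Since the fiber is one point, after localizing at $y$ the ring $S_y=S\otimes_R R_{\mathfrak m_y}$ is local with maximal ideal over $\mathfrak m_y$; here finiteness is used to ensure that $S_y$ is semilocal with maximal ideals corresponding to the fiber. Injectivity of $d_xf$ dualizes to surjectivity of $\mathfrak m_y/\mathfrak m_y^2\to \mathfrak m_x/\mathfrak m_x^2$. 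The residue fields agree, both being $\CC$. Then $\mathfrak m_yS_y+\mathfrak m_x^2=\mathfrak m_x$, so Nakayama over the local ring $S_y$ gives $\mathfrak m_yS_y=\mathfrak m_x$, and hence $S_y/\mathfrak m_yS_y=\CC$. Applying Nakayama once more, now to the finite $R_y$-module $S_y$, shows that $R_y\to S_y$ is surjective. It is also injective, since $f$ is dominant (indeed birational) and $R$ is a domain. Thus $R_y\cong S_y=\mathcal O_{X,x}$, which is regular, so $Y$ is smooth at $y$.

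The main obstacle I expect is the bookkeeping in the converse: one must justify that localizing the finite algebra $S$ at $y$ yields exactly the local ring $\mathcal O_{X,x}$, which uses the single-point fiber. One must also apply Nakayama in the right order: first over $S_y$ to upgrade the cotangent-space surjectivity to $\mathfrak m_yS_y=\mathfrak m_x$, then over $R_y$ for the module $S_y$. The first direction is essentially immediate once normality of smooth points and the integral-closure argument are invoked. Since the lemma is standard (\cite[Theorem~14.9]{harris1992algebraic}), I would keep the write-up brief and cite these commutative-algebra facts.
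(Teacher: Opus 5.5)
Your argument is correct and complete. The paper itself gives no proof of this lemma: it quotes it from \cite[Lemma~3.2]{kohn2017secants} and refers to \cite[Theorem~14.9]{harris1992algebraic} for the underlying criterion. So your write-up replaces a citation by the standard local-algebra proof.

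Your forward implication is the finite form of Zariski's main theorem. A finite birational map onto a normal neighborhood is an isomorphism there. Your choice of a normal affine $U$ is justified because regular local rings are normal and the smooth locus is open. Your converse is the usual two-step Nakayama argument: first over $S_y=\mathcal O_{X,x}$ to get $\mathfrak m_y\mathcal O_{X,x}=\mathfrak m_x$, then over $\mathcal O_{Y,y}$ for the finite module $S_y$.

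Compared with the citation, your route makes the role of each hypothesis transparent:
\begin{itemize}
\item In both directions you prove the stronger statement that $\mathcal O_{Y,y}\to\mathcal O_{X,x}$ is an isomorphism, so $f$ is a local isomorphism over $y$.
\item Projectivity is never used, only finiteness. In the paper, finiteness comes from properness of the projectivized parametrization together with finite fibers.
\item Birationality is needed only for the forward implication. The converse uses just finiteness and dominance. It cannot be dropped in the forward direction: the double cover $\PP^1\to\PP^1$, $z\mapsto z^2$, has smooth target but two-point fibers.
\end{itemize}
The citation buys brevity, which suits the paper, since the lemma is only used as a black box in the singular-locus computations.
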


\begin{proof}[Proof of Lemma~\ref{lem:multi-token-singular}]
Suppose first that $1<a<d$. If
$p\in\PP(\mathcal X_{a-1}^\CC)$, then $p$ has a preimage with
$\rank(A)<a$. Since
$
\operatorname{Sing}(D_a^\CC)=D_{a-1}^\CC,
$
the source is singular at this preimage. Hence $p$ is singular by
Lemma~\ref{lem:kohn-finite-birational}.

Conversely, let $p\notin\PP(\mathcal X_{a-1}^\CC)$. Its unique
projective preimage has $\rank(A)=a>1$ by
Proposition~\ref{prop:multi-token-finite-birational}, and the source is
smooth there. The proof of
Lemma~\ref{lem:attention-differential-t>1} is algebraic and applies
over $\CC$; since $\rank(A)>1$, its affine kernel consists only of the
scaling direction. Thus the projective differential is injective, and
$p$ is smooth by Lemma~\ref{lem:kohn-finite-birational}. Therefore
$
\operatorname{Sing}\bigl(\PP(\mathcal X_a^\CC)\bigr)
=
\PP(\mathcal X_{a-1}^\CC).
$

Now suppose that $a=d$. The parameter space $D_d^\CC$ is smooth. Let
$p\in\PP(Z^\CC)$, with $A=kq^\top$ and $V=hv^\top$. If $k$ and $v$
are not proportional,
Proposition~\ref{prop:multi-token-finite-birational} gives two distinct
projective preimages. If $k$ and $v$ are proportional, the fiber is a
singleton, but Lemma~\ref{lem:attention-differential-t>1} gives a
$d$-dimensional affine kernel, so the projective differential is not
injective. Thus $p$ is singular in either case.

If $p\notin\PP(Z^\CC)$, its projective fiber is a singleton and the
exceptional case in Lemma~\ref{lem:attention-differential-t>1} does
not occur. The source is smooth and the projective differential is
injective, so $p$ is smooth. Therefore
$
\operatorname{Sing}\bigl(\PP(\mathcal X_d^\CC)\bigr)
=
\PP(Z^\CC).
$

Finally, suppose that $a=1$ and $d'>1$. The projective determinantal
variety $\PP(D_1^\CC)$ is smooth, and
Proposition~\ref{prop:multi-token-finite-birational} shows that the
projectivized parametrization is finite and birational.

Let $p\in\PP(Z^\CC)$ and write $A=kq^\top$, $V=hv^\top$. If $k$ and
$v$ are not proportional, the fiber contains the two distinct
projective points obtained by exchanging $k$ and $v$. If they are
proportional, the fiber is a singleton, but
Lemma~\ref{lem:attention-differential-t>1} gives a $d$-dimensional
affine kernel. Thus $p$ is singular.

Conversely, if $p\notin\PP(Z^\CC)$, then $\rank(V)>1$. The projective
fiber is a singleton and the affine kernel of the differential is only
the scaling direction. Hence $p$ is smooth, and
$
\operatorname{Sing}\bigl(\PP(\mathcal X_1^\CC)\bigr)
=
\PP(Z^\CC).
$
\end{proof}

The work of~\citep{HMK} characterizes the nontrivial fibers and critical
points of the attention parametrization, which account for two possible
sources of singularities. When $a<d$, there is an additional source:
the parameter space $D_a^\CC$ is itself singular, with
$
\operatorname{Sing}(D_a^\CC)=D_{a-1}^\CC.
$
Together with the fiber and differential analysis above, this gives the
singular-locus description in Lemma~\ref{lem:multi-token-singular}.

For $1<a<d$, the corresponding real singular locus is
$\mathcal X_{a-1}$, while for $a=d$ it is contained in $\mathcal X_1$.
Thus, in both cases, the absence of proximity bias toward the real
singular locus follows from
Theorem~\ref{thm:multi-token-lower-rank}. When $a=1$ and $d'>1$, the
nonzero real singular locus is the real counterpart $Z$ of $Z^\CC$;
the absence of proximity bias toward this locus is proved next.

\subsection{The rank-one model with \texorpdfstring{$d'>1$}{d'>1}}

\multitokenrankonenoBias*

\begin{proof}[Proof of Proposition~\ref{prop:multi-token-rank-one-no-bias}]
Since $a=1$, the projective determinantal variety $\PP(D_1^\CC)$ is
smooth, and hence so is the source of the projectivized parametrization.
Proposition~\ref{prop:multi-token-finite-birational} shows that
the projectivized parametrization is finite and birational.

Let $p\in\PP(Z^\CC)$ and write $A=kq^\top$, $V=hv^\top$. If $k$ and
$v$ are not proportional, the fiber contains the two distinct
projective points obtained by exchanging $k$ and $v$. If they are
proportional, the fiber is a singleton, but
Lemma~\ref{lem:attention-differential-t>1} gives a $d$-dimensional
affine kernel. Since $d>1$, the projective differential is therefore
not injective. Thus $p$ is singular by
Lemma~\ref{lem:kohn-finite-birational}.

Conversely, if $p\notin\PP(Z^\CC)$, then $\rank(V)>1$. The projective
fiber is a singleton and the affine kernel of the differential is only
the scaling direction, so the projective differential is injective.
Hence $p$ is smooth, and
$
\operatorname{Sing}(\PP(\cX_1^\CC))=\PP(Z^\CC).
$

We now return to the real model. Let $Z$ denote the real counterpart of
$Z^\CC$ and retain the locus $Y$ defined in \Eqref{eq:rank-one-attention}. 
For any real preimage
$(A,V)$ of $p$, the image of the differential restricted to
$T_{D_1}(A)\times\RR^{d'\times d}$ is contained in $T_{\cX_1}(p)$.
Consequently,
$$
N_{\cX_1}(p)
\subseteq
\operatorname{im}\left(
d\phi|_{T_{D_1}(A)\times\RR^{d'\times d}}
\right)^\perp.
$$

At a point $p\in Z\setminus Y$, choose a parametrization with
$A=kq^\top$ and $V=hv^\top$, where $v$ is not proportional to $k$.
The tangent space of $D_1$ at $A$ has dimension $2d-1$, and the
restricted differential has only the one-dimensional scaling kernel.
Its image therefore has dimension $2d+dd'-2=\dim \cX_1$. Since
$\dim Z=3d+d'-3$,
\begin{align*}
\dim\left(\bigcup_{p\in Z\setminus Y}(p+N_{\cX_1}(p))\right)
&\leq 3d+d'-3+n-(2d+dd'-2)=n-(d-1)(d'-1)<n.
\end{align*}

For $p\in Y$, the $d$-dimensional kernel from
Lemma~\ref{lem:attention-differential-t>1} lies in
$T_{D_1}(A)\times\RR^{d'\times d}$. Hence the restricted differential
has rank $d+dd'-1$. Since $\dim Y\leq d'+2d-2$,
\begin{align*}
\dim\left(\bigcup_{p\in Y}(p+N_{\cX_1}(p))\right)
&\leq d'+2d-2+n-(d+dd'-1)=n-(d-1)(d'-1)<n.
\end{align*}

Finally, since $\cX_1$ is a symmetric cone, we get
$
N_{\cX_1}(0)=\operatorname{span}(\cX_1)^\perp,
$
which is a proper linear subspace. Thus the union of the translated
normal cones over the real singular locus is not full-dimensional, so
there is no proximity bias toward the real singular locus.
\end{proof}

\subsection{The scalar-output rank-one model}

When $d'=1$, the dimension-counting argument of the previous subsection
no longer rules out proximity bias: the upper bounds there become
$n-(d-1)(d'-1)=n$. Moreover, in this case the parametrization is
generically two-to-one due to the symmetry exchanging $k$ and $v$, and
at the locus where $k$ and $v$ are proportional its differential has a
$d$-dimensional kernel. Thus these parameters do not give local
coordinates on the model at the points of interest. We therefore
introduce an injective parametrization that removes this symmetry. Write
$$
\psi(q,S)(X)
=
\left(\sum_{j=1}^t x_j^\top Sx_j\right)q^\top X,
$$
and define $U = \{\psi(q,S):q\in\RR^d,\ \rank(S)\leq1\}$
and $U^\CC=\{\psi(q,S):q\in\CC^d,\ \rank(S)\leq1\}$.

\multitokenrankone*

Figure~\ref{fig:attention-rank-one-geometry} illustrates the two
geometric regimes in the smallest cases $d=2$ and $d=3$.
As shown in Lemma~\ref{lem:multi-token-symmetric-quotient}, the map
$q\otimes S\mapsto\psi(q,S)$ is injective for $t>1$, so these
low-dimensional slices faithfully represent the corresponding geometry
of $\mathcal X_1$. For $d=2$, the locus $U$ lies on the boundary of the
real model despite the algebraic model being smooth, whereas for $d=3$
it appears as a singular locus.

\begin{lemma}\label{lem:multi-token-symmetric-quotient}
Suppose that $a=d'=1$. Write $A=kq^\top$, $V=v^\top$, and set
$S=(vk^\top+kv^\top)/2$. Then $\phi(A,V)=\psi(q,S)$.

For $\mathbb F=\RR$ or $\CC$, the map
$q\otimes S\mapsto\psi(q,S)$ is an injective linear map from
$\mathbb F^d\otimes S^2(\mathbb F^d)$ into the ambient function space.
Over $\CC$, the matrices $S$ arising in this way are exactly the
symmetric matrices of rank at most two. Over $\RR$, they are exactly
the symmetric matrices having at most one positive and at most one
negative eigenvalue.
\end{lemma}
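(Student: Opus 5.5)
The lemma has three parts: the identity $\phi(A,V)=\psi(q,S)$, injectivity of the linear map, and the description of the matrices $S$ that occur. I would prove them in that order.

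\emph{The identity.} With $A=kq^\top$ and $V=v^\top$ we have
\[
\phi(A,V)(X)=v^\top XX^\top k\,q^\top X,\qquad v^\top XX^\top k=\sum_j (v^\top x_j)(x_j^\top k)=\sum_j x_j^\top (k v^\top) x_j .
\]
A quadratic form depends only on the symmetric part of its matrix, so $x_j^\top (kv^\top) x_j=x_j^\top S x_j$ with $S=(vk^\top+kv^\top)/2$. This gives $\phi(A,V)=\psi(q,S)$.

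\emph{Linearity and injectivity.} The map $(q,S)\mapsto\psi(q,S)$ is bilinear, so it induces a linear map on $\mathbb F^d\otimes S^2(\mathbb F^d)$. For injectivity, take a general tensor $T=\sum_m q_m\otimes S_m$ and suppose its image is the zero cubic. I would extract the part of $\psi(T)$ that is quadratic in $x_1$ and linear in $x_2$; this uses $t>1$. Since
\[
q^\top X=\sum_j q^\top x_j,
\]
this part equals $\sum_m (x_1^\top S_m x_1)(q_m^\top x_2)$. Expand $T$ in a basis $e_i$ of $\mathbb F^d$ as $T=\sum_i e_i\otimes S'_i$. The extracted part becomes $\sum_i (x_1^\top S'_i x_1)\,x_{2,i}$. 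The monomials $x_{2,i}$ are independent, so each quadratic form $x_1^\top S'_i x_1$ vanishes. Quadratic forms determine symmetric matrices, so $S'_i=0$ for all $i$ and hence $T=0$. The same argument works over $\RR$ and over $\CC$.

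\emph{Which matrices $S$ occur.} The matrices in question are $S=(vk^\top+kv^\top)/2$. Each has rank at most two, with range contained in $\operatorname{span}(v,k)$.

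Over $\CC$, every symmetric matrix of rank at most two is congruent to $y_1^2+y_2^2$ or to $y_1^2$ or is zero. The form $y_1^2+y_2^2=(y_1+iy_2)(y_1-iy_2)$ is a product of two linear forms. So the quadratic form of $S$ factors as $(v^\top x)(k^\top x)$, and symmetrizing that product recovers $S$. The rank-one case is $\ell^2=\ell\cdot\ell$, and zero is trivial.

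Over $\RR$, the restriction of $x^\top Sx=(v^\top x)(k^\top x)$ to $\operatorname{span}(v,k)$ is a product of two real linear forms. Such a form has signature contained in $(1,1)$: it is indefinite of rank two, or $\pm\ell^2$ of rank one, or zero. So $S$ has at most one positive and at most one negative eigenvalue.

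Conversely, suppose $S$ has at most one positive and at most one negative eigenvalue. Diagonalizing gives $x^\top Sx=\alpha\ell_1^2-\beta\ell_2^2$ with $\alpha,\beta\ge0$. This factors over $\RR$:
\[
\alpha\ell_1^2-\beta\ell_2^2=(\sqrt\alpha\,\ell_1-\sqrt\beta\,\ell_2)(\sqrt\alpha\,\ell_1+\sqrt\beta\,\ell_2).
\]
Taking $v,k$ to be the coefficient vectors of the two factors produces $S$. When $\beta=0$ this reads $\alpha\ell_1^2$ and is covered by the same formula, and likewise for $\alpha=0$ with a sign absorbed into one factor.

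The main point needing care is the real converse, in particular the semidefinite rank-one cases handled by the sign conventions above. The injectivity step is where $t>1$ is essential, and the coefficient extraction there needs to be written cleanly.
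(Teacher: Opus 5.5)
Your proposal is correct and follows essentially the paper's route: the same identity, the same coefficient-extraction argument for injectivity (you read off the part of bidegree $(2,1)$ in $(x_1,x_2)$ where the paper uses $(1,2)$, and you treat general tensors explicitly), and the same factorizations $S=pp^\top-rr^\top$ over $\RR$ and $S=uu^\top+ww^\top$ with $v,k=u\pm iw$ over $\CC$, here phrased via products of linear forms. One small slip: $q^\top X$ is the row vector $(q^\top x_1,\dots,q^\top x_t)$, not a sum, so the term $(x_1^\top S x_1)(q^\top x_2)$ should be read off from the output coordinate of the second token; the rest of the argument is unaffected.
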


\begin{proof}
The formula for $\phi$ follows from
$v^\top XX^\top k=\sum_j(v^\top x_j)(k^\top x_j)$.

For injectivity, consider two distinct tokens. In the output coordinate
corresponding to the first token, the terms linear in $x_1$ and
quadratic in $x_2$ are
$(q^\top x_1)(x_2^\top Sx_2)$, whose coefficients recover $q\otimes S$.

Clearly $\rank(S)\leq2$. Conversely, over $\CC$, every symmetric matrix
of rank at most two can be written as $S=uu^\top+ww^\top$. Taking
$v=u+iw$ and $k=u-iw$ gives
$S=(vk^\top+kv^\top)/2$.

Over $\RR$, set $p=(v+k)/2$ and $r=(v-k)/2$. Then
$S=pp^\top-rr^\top$, so $S$ has at most one positive and at most one
negative eigenvalue. Conversely, every symmetric matrix with this
property can be written as $pp^\top-rr^\top$, and hence arises from
$v=p+r$ and $k=p-r$.
\end{proof}

Set
$
\mathcal D_2^{\mathrm{sym}}
=
\{S\in S^2(\CC^d):\rank(S)\leq2\}.
$
By Lemma~\ref{lem:multi-token-symmetric-quotient},
$\PP(\cX_1^\CC)$ is linearly isomorphic to the Segre embedding of
$\PP^{d-1}\times\PP(\mathcal D_2^{\mathrm{sym}})$.

\begin{lemma}\label{lem:multi-token-symmetric-rank-two-singular}
If $d>2$, then the singular locus of
$\PP(\mathcal D_2^{\mathrm{sym}})$ is its rank-one locus. If $d=2$,
$\PP(\mathcal D_2^{\mathrm{sym}})$ is smooth.
\end{lemma}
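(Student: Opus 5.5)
The plan is to treat $\PP(\mathcal D_2^{\mathrm{sym}})$ as the secant variety of the Veronese variety $\nu_2(\PP^{d-1})\subseteq\PP(S^2(\CC^d))$, and to use the standard determinantal description of symmetric matrices of bounded rank. Concretely, $\mathcal D_2^{\mathrm{sym}}$ is cut out by the $3\times 3$ minors of a generic symmetric matrix. We will show two things. First, rank-two points are smooth, since there the variety is locally a manifold of the expected dimension. Second, when $d>2$, rank-one points are singular, because the tangent cone there is too large.

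\textbf{Smoothness at rank two.} Let $S$ have rank exactly two. Use the congruence action of $GL_d(\CC)$, $S\mapsto gSg^\top$. It preserves $\mathcal D_2^{\mathrm{sym}}$ and acts transitively on each rank stratum, so we may take $S=\operatorname{diag}(1,1,0,\dots,0)$. Write a nearby symmetric matrix in block form $\begin{pmatrix} P & Q\\ Q^\top & R\end{pmatrix}$ with $P$ a $2\times 2$ invertible block. The rank is at most two if and only if the Schur complement vanishes: $R-Q^\top P^{-1}Q=0$. This equation expresses $R$ as a regular function of $(P,Q)$. Hence, near $S$, the affine variety is the graph of a morphism on an open subset of $S^2(\CC^2)\times\CC^{2\times(d-2)}$. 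It is therefore smooth, of dimension $3+2(d-2)=2d-1$. Since the rank-two locus is the full rank-two stratum, this also covers the case $d=2$. There $\mathcal D_2^{\mathrm{sym}}=S^2(\CC^2)$ is the whole space, so $\PP(\mathcal D_2^{\mathrm{sym}})=\PP^2$ is smooth. (For $d=2$ there is nothing else to check; the statement is immediate.) Projectivization preserves smoothness at nonzero points, because the affine cone is smooth at $S\neq 0$ if and only if its projectivization is smooth at $[S]$.

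\textbf{Singularity at rank one when $d>2$.} Take $S=e_1e_1^\top$. We will show that the tangent cone to $\mathcal D_2^{\mathrm{sym}}$ at $S$ spans a space of dimension greater than $2d-1$. For any rank-one $uu^\top$, the line $S+t\,uu^\top$ stays in $\mathcal D_2^{\mathrm{sym}}$. So every rank-one symmetric matrix lies in the tangent cone, and these span all of $S^2(\CC^d)$. The Zariski tangent space therefore has dimension $\binom{d+1}{2}$. Comparing, $\binom{d+1}{2}>2d-1$ if and only if $(d-1)(d-2)>0$, which holds exactly when $d>2$. Thus the Zariski tangent space exceeds the dimension of the (irreducible) variety, and the point is singular. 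An alternative route is direct: all $3\times 3$ minors have vanishing differential at a rank-one matrix, since each minor is cubic and every $2\times 2$ subminor of a rank-one matrix vanishes. So the tangent space is the whole ambient space. The origin is excluded from the projective statement.

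The main point requiring care is the dimension count $\dim\mathcal D_2^{\mathrm{sym}}=2d-1$ together with its irreducibility. Irreducibility follows because $\mathcal D_2^{\mathrm{sym}}$ is the image of $(u,w)\mapsto uu^\top+ww^\top$. The dimension comes from the Schur-complement chart above. With these two facts, the comparison between the tangent space dimension and the variety dimension is rigorous. Combining the two steps gives that the singular locus is exactly the rank-one locus, namely the Veronese variety.
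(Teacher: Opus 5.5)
Your proof is correct and follows the paper's route. The Schur-complement chart $R=Q^\top P^{-1}Q$ gives smoothness at rank two, and at rank one the Zariski tangent space is all of $S^2(\CC^d)$; the alternative you mention via vanishing differentials of the $3\times3$ minors is exactly the paper's argument. Your primary argument, using the lines $S+t\,uu^\top\subseteq\mathcal D_2^{\mathrm{sym}}$, is slightly more robust because it does not need the minors to generate the ideal of the variety. Where you conclude by comparing with $\dim\mathcal D_2^{\mathrm{sym}}=2d-1$, the paper instead just uses that $\mathcal D_2^{\mathrm{sym}}$ is a proper subvariety when $d>2$.
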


\begin{proof}
If $d=2$, every symmetric matrix has rank at most two, so
$\mathcal D_2^{\mathrm{sym}}=S^2(\CC^2)$. Suppose that $d>2$. At a rank-two matrix $S$, choose coordinates with
$S=\left(\begin{smallmatrix}S_0&0\\0&0\end{smallmatrix}\right)$,
where $S_0$ is invertible. A nearby symmetric matrix can be written as
$\left(\begin{smallmatrix}P&Q\\Q^\top&R\end{smallmatrix}\right)$.
Since $P$ remains invertible, the rank-at-most-two condition is locally
equivalent to $R=Q^\top P^{-1}Q$. Hence the variety is smooth at~$S$. The variety is defined by its $3\times3$ minors. At a matrix of rank at
most one, the differential of every such minor vanishes, since every
partial derivative is a $2\times2$ minor. Thus the Zariski tangent
space is the full ambient space. Since
$\mathcal D_2^{\mathrm{sym}}$ is proper for $d>2$, every projective
rank-one point is singular.
\end{proof}

We now return to the real model. Let
$$
\mathcal R_1
=
\{S\in S^2(\RR^d):
S\text{ has at most one positive and at most one negative eigenvalue}\}.
$$
By Lemma~\ref{lem:multi-token-symmetric-quotient},
$\cX_1=\{\psi(q,S):q\in\RR^d,\ S\in\mathcal R_1\}$.

\begin{lemma}\label{lem:multi-token-rank-one-normal}
Let $S\in\mathcal R_1$ be positive semidefinite of rank one and
$K=\ker(S)$, so $\dim K=d-1$. Since $K^\perp$ is one-dimensional,
with respect to $\RR^d=K^\perp\oplus K$ write
\[
S=\begin{pmatrix}
s_0&0\\
0&0
\end{pmatrix},
\qquad s_0>0.
\]
If $B_{22}$ denotes the $K\times K$ block of $B$, then
$$
T_{\mathcal R_1}(S)
=
\{B:B_{22}\preceq0,\ \rank(B_{22})\leq1\},
$$
and
$$
N_{\mathcal R_1}(S)
=
\{W:W=P_KWP_K,\ W|_K\succeq0\}.
$$
\end{lemma}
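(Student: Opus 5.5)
We will compute the tangent cone of $\mathcal R_1$ at $S$ directly from its description by inertia, and then obtain the normal cone as the polar. First we note that $\mathcal R_1$ is exactly the set $\{pp^\top-rr^\top : p,r\in\RR^d\}$, by Lemma~\ref{lem:multi-token-symmetric-quotient}. Equivalently, it consists of the symmetric matrices with at most one positive and at most one negative eigenvalue.

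For the inclusion $\supseteq$ of the tangent cone, take $B$ with $B_{22}\preceq0$ and $\rank B_{22}\le1$, say $B_{22}=-uu^\top$ with $u\in K$. We use the Schur complement with respect to the invertible $1\times1$ block: for small $\tau>0$ the matrix $S+\tau B$ has top-left entry $s_0+\tau B_{11}>0$. It is congruent to $\operatorname{diag}\bigl(s_0+\tau B_{11},\ \tau B_{22}-\tau^2 B_{21}B_{12}/(s_0+\tau B_{11})\bigr)$. The second block is not yet admissible, so we perturb $B$ to a curve, adding a correction of order $\tau^2$ in the $22$ block. Concretely, $S_\tau = S+\tau B+\tau^2 B_{21}B_{12}/(s_0+\tau B_{11})$, placed in the $K\times K$ block. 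Then $S_\tau$ is congruent to $\operatorname{diag}(\text{positive},-\tau uu^\top)$, so $S_\tau\in\mathcal R_1$, and $(S_\tau-S)/\tau\to B$.

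For the inclusion $\subseteq$, take $S_n\in\mathcal R_1$ with $S_n\to S$ and $(S_n-S)/\tau_n\to B$. Near $S$ the top-left entry of $S_n$ stays positive. The Schur complement $C_n=(S_n)_{22}-(S_n)_{21}(S_n)_{11}^{-1}(S_n)_{12}$ has inertia equal to that of $S_n$ minus one positive eigenvalue. Hence $C_n$ has no positive eigenvalue and at most one negative eigenvalue, i.e.\ $C_n\preceq0$ with $\rank C_n\le1$. Moreover $C_n/\tau_n\to B_{22}$, because the off-diagonal blocks are $O(\tau_n)$, which makes the correction term $O(\tau_n^2)$. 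The set $\{C\preceq0,\ \rank C\le1\}$ is a closed cone, so $B_{22}$ lies in it.

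For the normal cone, we compute the polar of $T=\{B: B_{22}\in -\mathcal P_1\}$, where $\mathcal P_1=\{uu^\top\}$. The blocks $B_{11}$ and $B_{12}$ are unconstrained, so any $W$ in the polar satisfies $W_{11}=0$ and $W_{12}=0$, i.e.\ $W=P_KWP_K$. The remaining condition is $\langle W_{22},-uu^\top\rangle\le0$ for all $u\in K$, which is exactly $W|_K\succeq0$. Conversely, every such $W$ clearly lies in the polar. Here we use the trace inner product on symmetric matrices.

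The main obstacle is the $\subseteq$ inclusion of the tangent cone. It requires checking that the Schur-complement inertia argument is uniform along the sequence and that the $O(\tau_n^2)$ correction indeed vanishes in the limit.
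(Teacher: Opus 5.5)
Your proposal is correct and follows the paper's route: the Schur complement with respect to the positive $1\times1$ block gives necessity, the curve with the $\tau^2 B_{21}B_{12}/(s_0+\tau B_{11})$ correction in the $K\times K$ block (the same as the paper's $S(t)$) gives sufficiency, and polarity against the two-sided $B_{11},B_{12}$ directions and the directions $B_{22}=-uu^\top$ gives the normal cone. The step you flag as the main obstacle is immediate: $(S_n)_{21}(S_n)_{11}^{-1}(S_n)_{12}/\tau_n=\bigl((S_n)_{21}/\tau_n\bigr)(S_n)_{11}^{-1}(S_n)_{12}$, whose factors tend to $B_{21}$, $s_0^{-1}$ and $S_{12}=0$, so $C_n/\tau_n\to B_{22}$; closedness of $\{C\preceq0,\ \rank C\le1\}$ then finishes the argument, and this is exactly what the paper's ``taking first-order terms'' leaves implicit.
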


\begin{proof}
Write a nearby matrix as
$S'=\left(\begin{smallmatrix}P&Q\\Q^\top&R\end{smallmatrix}\right)$.
Since $s_0>0$, the block $P$ remains positive. By the Schur complement
formula, the remaining positive and negative eigenvalues of $S'$ are
those of
$C=R-Q^\top P^{-1}Q$. Since $S'\in\mathcal R_1$, the matrix $C$ must
be negative semidefinite and have rank at most one. Taking first-order
terms gives $B_{22}\preceq0$ and $\rank(B_{22})\leq1$.

Conversely, if $B_{22}$ satisfies these conditions, then
$$
S(t)=
\begin{pmatrix}
s_0+tB_{11} & tB_{12}\\
tB_{12}^\top &
tB_{22}+t^2B_{12}^\top(s_0+tB_{11})^{-1}B_{12}
\end{pmatrix}
$$
lies in $\mathcal R_1$ for small $t>0$ and has derivative $B$ at zero.

For the normal cone, the $B_{11}$ and $B_{12}$ blocks vary with both
signs, so a normal vector must be supported on $K$. The remaining
tangent directions contain $B_{22}=-zz^\top$ for every $z\in K$.
Polarity therefore gives $W|_K\succeq0$.
\end{proof}

\begin{proposition}\label{prop:multi-token-rank-one-bias}
The union of the Voronoi cells of points in $U$ contains a nonempty open
subset of the ambient space; that is,
$
\operatorname{int}\left(
\bigcup_{p\in U}\operatorname{Vor}_{\cX_1}(p)
\right)\neq\varnothing.
$
Hence there is proximity bias toward $U$.
\end{proposition}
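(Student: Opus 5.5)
The plan is to reduce to a finite-dimensional problem on the linear span of $\cX_1$. Then I would show that an explicit open ball of targets has all its nearest points in $U$, because of the one-sidedness of $\mathcal R_1$ at rank-one positive semidefinite matrices (\Cref{lem:multi-token-rank-one-normal}).

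\emph{Reduction.} By \Cref{lem:multi-token-symmetric-quotient}, $L\colon q\otimes S\mapsto\psi(q,S)$ is an injective linear map on $E_0=\RR^d\otimes S^2(\RR^d)$. Let $E=L(E_0)\subseteq\cH$ and pull the inner product of $\cH$ back to $E_0$; call it $\langle\cdot,\cdot\rangle_E$. This is just some Euclidean inner product. Since $\cX_1\subseteq E$, the nearest points of $\cX_1$ to $h$ are those to the orthogonal projection $\pi_E(h)$. So if a set $O\subseteq E$ is open in $E$ and lies in $\bigcup_{p\in U}\Vor_{\cX_1}(p)$, then $\pi_E^{-1}(O)$ is open in $\cH$ and lies there too. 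It therefore suffices to work in $E_0$ with $\cX_1=\{q\otimes S: S\in\mathcal R_1\}$ and $U=\{q\otimes S:\rank S\le1\}$. The set $\cX_1$ is closed (a cone over a compact set), so nearest points exist.

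\emph{Choice of base point and target.} Fix $q_0\neq0$, $S_0=uu^\top$ with $u\neq0$, $K=u^\perp$, and $p_0=q_0\otimes S_0$. Define a linear functional $\lambda$ on $E_0$ as follows: on $q_0\otimes S^2(\RR^d)$ let $\lambda(q_0\otimes B)=\operatorname{tr}(P_KBP_K)$; on a complement containing $\{\dot q\otimes S_0\}$ let $\lambda=0$. Let $N$ be its Riesz representative for $\langle\cdot,\cdot\rangle_E$. Then $N$ is orthogonal to all two-sided tangent directions at $p_0$, namely $\dot q\otimes S_0$ and $q_0\otimes B$ with $B_{22}=0$. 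It also satisfies $\langle N,q_0\otimes(-zz^\top)\rangle_E=-|z|^2<0$ for $z\in K$. So $N$ lies in the normal cone, in its \emph{interior relative to the one-sided directions}. Set $h_0=p_0+\delta N$ with $\delta>0$ small.

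\emph{Key comparison.} Take $h$ in a small ball around $h_0$. Any nearest point $x=q\otimes S$ with $S\notin U$ can be written $S=S_+-rr^\top$ with $S_+\succeq0$ of rank one and $r\ne0$. Compare $x$ with $x_+=q\otimes S_+\in U$:
\[
\|h-x_+\|^2-\|h-x\|^2=-2\langle h-x,\,q\otimes rr^\top\rangle_E-\|q\otimes rr^\top\|^2 .
\]
Hence it suffices to show $\langle h-x,\,q\otimes rr^\top\rangle_E>0$; this makes $x_+$ strictly closer and contradicts minimality. I would carry this out in three steps. First, by upper semicontinuity of nearest-point maps and a choice of $\delta$ making $p_0$ a nearest point of $h_0$ (the normal direction is "within reach" for small $\delta$), all nearest points $x$ of $h$ lie close to $p_0$. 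Consequently $q\approx q_0$, $S_+\approx S_0$, and $r$ is small. Second, for such $x$ the vector $r$ must be nearly in $K$: the component of $r$ along $u$ is controlled, since otherwise $S$ would fail to be close to $S_0$. Third, $h-x=\delta N+o(1)$, and $\langle N,q_0\otimes rr^\top\rangle_E=|P_Kr|^2>0$; this gives the desired sign after normalizing $r$.

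\emph{Main obstacle.} The hard part is making this last estimate uniform. I must show that $\langle h-x,\,q\otimes\hat r\hat r^\top\rangle_E$ stays bounded below by a positive constant for unit $\hat r$ and all $x$ near $p_0$, when $h$ is near $h_0$. The danger is that the error terms $x-p_0$ are comparable to $\delta$. I would handle this by first fixing $\delta$ and then shrinking the ball around $h_0$, so that nearest points lie within $o(\delta)$ of $p_0+{}$(tangential corrections). Then by compactness of the unit sphere in $K$, the pairing is $\ge c\,\delta$. Once this holds, every nearest point of every $h$ in the ball lies in $U$, so the ball is contained in $\bigcup_{p\in U}\Vor_{\cX_1}(p)$. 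Proximity bias toward $U$ then follows from \Cref{prop:implicit-bias-criterion}.
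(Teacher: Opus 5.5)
Your route differs from the paper's, and as written it has a gap exactly at the step you call the main obstacle.

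\textbf{The paper's route versus yours.} The paper proves, at every $p\in U$, the quadratic bound $\langle w_W,x-p\rangle\le c\|x-p\|^2$ for $x\in\cX_1$ near $p$, using the Schur complement $C\preceq 0$. It concludes $p+w\in\Vor_{\cX_1}(p)$ for small admissible $w$. It then sweeps these pieces of the normal bundle into an open set via $\operatorname{codim}_{\widetilde X}(U)=\dim S^2(K)$ and the inverse function theorem. You instead fix one $p_0$ and try to show that all nearest points of all targets near $h_0=p_0+\delta N$ lie in $U$, by flattening the negative eigenvalue. That plan is viable.

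\textbf{The gap: localization.} Your comparison needs every nearest point $x$ of every such $h$ to satisfy $\|x-p_0\|\le\eta\delta$, for a small constant $\eta$ independent of $\delta$. Otherwise $\langle p_0-x,q\otimes rr^\top\rangle_E$ is as large as $\delta\langle N,q\otimes rr^\top\rangle_E$. Knowing that $p_0$ is \emph{a} nearest point of $h_0$ does not give this. Upper semicontinuity only localizes near the whole nearest-point set $\Pi(h_0)$. So ``fix $\delta$, then shrink the ball'' needs $\Pi(h_0)\subseteq B(p_0,\eta\delta/2)$, for instance $\Pi(h_0)=\{p_0\}$. Moreover, ``within reach'' is not automatic for normal vectors. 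For $F=\{y\ge-|x|^{3/2}\}\subseteq\RR^2$ one has $(0,-1)\in N_F(0)$. Yet for every $\delta>0$, the points $(x,-|x|^{3/2})$ with small $x\neq0$ are closer to $(0,-\delta)$ than the origin is.

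You can close the gap in either of two ways. One is to import the paper's quadratic bound, which makes $p_0$ the unique nearest point of $h_0$. The other is the following argument, which needs only $N\in N_{\cX_1}(p_0)$. Take any $x_\delta\in\Pi(p_0+\delta N)$. The inequality $\|p_0+\delta N-x_\delta\|\le\delta\|N\|$ gives $\|x_\delta-p_0\|^2\le 2\delta\langle N,x_\delta-p_0\rangle_E$. So every subsequential limit $v$ of $(x_\delta-p_0)/\delta$ lies in $T_{\cX_1}(p_0)$ and satisfies $\|v\|^2\le2\langle N,v\rangle_E\le0$, hence $v=0$. Therefore $\Pi(h_0)\subseteq B(p_0,\eta\delta/2)$ once $\delta$ is small. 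Upper semicontinuity, using that $\cX_1$ is closed, then gives your ball.

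\textbf{Two smaller repairs in the comparison.} First, there is a sign error. Since $x_+=x+q\otimes rr^\top$, one has $\|h-x_+\|^2-\|h-x\|^2=-2\langle h-x,q\otimes rr^\top\rangle_E+\|q\otimes rr^\top\|^2$. So a positive pairing does not by itself make $x_+$ closer. Use instead the curve $s\mapsto q\otimes(S_+-s\,rr^\top)$, which stays in $\cX_1$ for all $s\ge0$ and passes through $x$ at $s=1$. Minimality then forces $\langle h-x,q\otimes rr^\top\rangle_E=0$, which contradicts your lower bound.

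Second, take $S=S_+-rr^\top$ to be the spectral decomposition, with $r$ an eigenvector for the negative eigenvalue. The representations of $S$ as $ss^\top-rr^\top$ form a hyperbolic family. Along it one can keep $S_+\approx S_0$ and $r$ small while $r$ is almost parallel to $u$. So your step $\|P_Kr\|\approx\|r\|$ needs the spectral choice. There it follows from $r\perp\operatorname{range}(S_+)$ and Davis--Kahan.

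With these fixes you get a complete proof. It replaces the paper's normal-bundle sweep with a single explicit ball of targets, and it does not need the quadratic Schur-complement bound.
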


\begin{proof}
The proof proceeds in three steps. First, fix $p=\psi(q,S)\in U\setminus{0}$ with $S\succeq0$ of rank one and let $K=\ker(S)$. For nearby points of $\cX_1$, the $K\times K$ block of the Schur complement is negative semidefinite; pairing it with a positive definite matrix $W$ supported on $K$ produces directions $w$ such that $p+w\in\operatorname{Vor}_{\cX_1}(p)$. Second, we show that, as $W$ varies, these directions fill an open subset of the normal space $T_U(p)^\perp$. Finally, allowing $p$ to vary in $U$ and applying the inverse function theorem to $(p,w)\mapsto p+w$, we show that the corresponding Voronoi cells sweep out a nonempty open subset of the ambient space.

Consider the unrestricted symmetric model
$
\widetilde X
=
\{\psi(q,S):q\in\RR^d,\ S\in S^2(\RR^d)\}.
$
At a nonzero pair $(q,S)$, the kernel of $d\psi_{(q,S)}$ is the one-dimensional scaling direction $(q,-S)$.
Thus, after fixing the scaling locally, $(q,S)\mapsto\psi(q,S)$ is a
local diffeomorphism onto~$\widetilde X$.

Fix $p\in U\setminus\{0\}$ and write $p=\psi(q,S)$ with
$q\neq0$ and $S\succeq0$ of rank one. Let $K=\ker(S)$ and choose
$W\in S^2(\RR^d)$ supported on $K$ with $W|_K\succ0$. There is a unique
vector $w_W\in T_{\widetilde X}(p)$ satisfying
$
\langle w_W,d\psi_{(q,S)}(\dot q,B)\rangle
=
\langle W,B\rangle.
$
Indeed, the right-hand side vanishes on the scaling direction since
$\langle W,S\rangle=0$.

We show that, for sufficiently small perturbations $w$ of $w_W$ in the
directions normal to $\widetilde X$, the point $p+w$ belongs to
$\operatorname{Vor}_{\cX_1}(p)$. Let
$x=\psi(q',S')\in \cX_1$ be sufficiently close to $p$, with $(q',S')$
chosen close to $(q,S)$. With respect to $K^\perp\oplus K$, write
$$
S'=
\begin{pmatrix}
P&Q\\
Q^\top&R
\end{pmatrix}.
$$
Since $S'\in\mathcal R_1$ and $P>0$, its Schur complement
$
C=R-Q^\top P^{-1}Q
$
satisfies $C\preceq0$. Hence
\begin{align*}
\langle W,S'-S\rangle
&=
\langle W,C\rangle
+
\langle W,Q^\top P^{-1}Q\rangle
\leq
c_0\|S'-S\|^2
\end{align*}
for some $c_0>0$. Taylor expansion of $\psi$ and the local inverse then
give
$
\langle w_W,x-p\rangle
\leq
c\|x-p\|^2
$
for all $x\in \cX_1$ sufficiently close to $p$. The same estimate holds for
$
w=w_W+\eta,
$
with
$
\eta\in T_{\widetilde X}(p)^\perp,
$
since $\widetilde X$ is smooth at $p$ and hence
$
\langle\eta,x-p\rangle=O(\|x-p\|^2).
$
Therefore, for sufficiently small such $w$ and every
$x\in \cX_1\setminus\{p\}$ sufficiently close to $p$, we get
\begin{align*}
\|x-(p+w)\|^2-\|p-(p+w)\|^2
&=
\|x-p\|^2-2\langle w,x-p\rangle>0.
\end{align*}
For sufficiently small $w$, points of $\cX_1$ outside a fixed neighborhood
of $p$ are also farther from $p+w$ than $p$. Thus
$
p+w\in\operatorname{Vor}_{\cX_1}(p).
$

It remains to show that these subsets of the Voronoi cells sweep out an
open subset of the ambient space as $p$ varies in $U\setminus\{0\}$.
Since
$
\dim S^2(\RR^d)=\binom{d+1}{2},
$
we have
$
\dim\widetilde X
=
d+\binom{d+1}{2}-1
$
and
$
\dim U=2d-1.
$
Therefore
$$
\operatorname{codim}_{\widetilde X}(U)
=
\binom{d+1}{2}-d
=
\binom d2
=
\dim S^2(K).
$$
The map $W\mapsto w_W$ is injective and tangent directions to
$U$ have zero $K\times K$ block, so
$
w_W\in T_U(p)^\perp\cap T_{\widetilde X}(p).
$
The dimension equality above therefore implies that, as $W$ varies,
the vectors $w_W$ fill
$
T_U(p)^\perp\cap T_{\widetilde X}(p).
$
Allowing $\eta\in T_{\widetilde X}(p)^\perp$ gives all of
$T_U(p)^\perp$.

Since the condition $W|_K\succ0$ is open, the preceding construction
gives an open subset of the normal bundle of $U\setminus\{0\}$ consisting
of sufficiently small vectors $w$ such that
$
p+w\in\operatorname{Vor}_{\cX_1}(p).
$
Consider the addition map
$
\Theta(p,w)=p+w.
$
Along the zero section,
$
d\Theta_{(p,0)}(v,\xi)=v+\xi,
$
which is an isomorphism since
$
T_U(p)\oplus T_U(p)^\perp
$
is the ambient space. Hence $d\Theta$ remains invertible at sufficiently
small admissible nonzero vectors. By the inverse function theorem, the
image of these vectors contains a nonempty open subset of the ambient
space. Since every point in this image belongs to
$\operatorname{Vor}_{\cX_1}(p)$ for some $p\in U$, we obtain
$
\operatorname{int}\left(
\bigcup_{p\in U}\operatorname{Vor}_{\cX_1}(p)
\right)\neq\varnothing.
$
\end{proof}

\begin{proof}[Proof of Theorem~\ref{thm:multi-token-rank-one}]
By Lemma~\ref{lem:multi-token-symmetric-quotient},
$\PP(\cX_1^\CC)$ is linearly isomorphic to the Segre embedding of
$\PP^{d-1}\times\PP(\mathcal D_2^{\mathrm{sym}})$. If $d>2$, Lemma~\ref{lem:multi-token-symmetric-rank-two-singular}
shows that the singular locus is exactly the image of the rank-one
locus in $\mathcal D_2^{\mathrm{sym}}$, namely $\PP(U^\CC)$.
If $d=2$, then
$\mathcal D_2^{\mathrm{sym}}=S^2(\CC^2)$, so
$\PP(\cX_1^\CC)$ is smooth.

Over $\RR$, the model is parametrized by $S\in\mathcal R_1$. At every
positive semidefinite rank-one matrix,
Lemma~\ref{lem:multi-token-rank-one-normal} gives a nontrivial
one-sided normal cone, so these points lie on the boundary of the real
model. Replacing $S$ by $-S$ gives the same conclusion for the negative
semidefinite rank-one matrices. Thus the nonzero locus $U$ is a boundary
of the real model. Finally,
Proposition~\ref{prop:multi-token-rank-one-bias} gives proximity bias
toward $U$.
\end{proof}

\section{Proofs of the Single-Token Results}\label{app:single-token-attention}
Throughout this section, assume that $t=1$. Writing
$S=\frac12(A+A^\top)$, the attention map takes the form $\psi(S,V)(x)=(Vx)(x^\top Sx).$

\singletokenattention*

\subsection{Factorization singularities}

The main difference from the multiple-token case is that the relevant
factor is now the quadratic form $Q_S(x)=x^\top Sx$.

\begin{lemma}\label{lem:single-token-factorization}
Let $S\in S^2(\CC^d)$.
The quadratic form $Q_S$ is reducible if and only if
$\rank(S)\leq2$.

Every nonzero projective fiber of $\psi$ is finite. Moreover, the
projective fiber is a singleton unless $\rank(V)=1$ and $\rank(S)\leq2$. If $V=hv^\top$, the differential drops rank precisely when $v^\top x$ divides $Q_S(x)$. In that case the affine kernel has
dimension $d$; otherwise it consists only of the scaling direction.
\end{lemma}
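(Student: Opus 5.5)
The reducibility claim is classical, and I would dispatch it first. A quadratic form factors over $\CC$ as $\ell_1\ell_2$ exactly when its symmetric matrix is $\tfrac12(u w^\top + w u^\top)$, which has rank at most two. Conversely, a form of rank $\le 2$ diagonalizes to $y_1^2+y_2^2=(y_1+iy_2)(y_1-iy_2)$ or to $y_1^2$. Both are reducible.

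For the fibers, suppose $\psi(S,V)=\psi(S',V')$. Then for each row we have $(v_i^\top x)\,Q_S(x)=(v_i'^\top x)\,Q_{S'}(x)$ in the UFD $\CC[x]$. I split into cases as in the proof of Proposition~\ref{prop:multi-token-finite-birational}.
\begin{itemize}
\item If $Q_S$ is irreducible, i.e.\ $\rank S\ge3$, unique factorization forces $S'=\lambda S$ and $V'=\lambda^{-1}V$. The projective fiber is then a singleton.
\item If $Q_S=\ell_1\ell_2$ is reducible but $V$ has two nonproportional nonzero rows, then $Q_{S'}$ must equal the gcd-determined common factor. The quadratic $Q_S$ is recovered as $\gcd$ of the cubics $(v_i^\top x)Q_S$ over the nonproportional rows, so again the fiber is a singleton.
\item If $V=hv^\top$ has rank one, the only freedom is to redistribute the three linear factors of the cubic $(v^\top x)\ell_1\ell_2$ into one linear factor times a quadratic. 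This gives at most three choices up to scaling, so the fiber is finite.
\end{itemize}

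For the differential at $(S,V)$, the kernel consists of $(\dot S,\dot V)$ with $(\dot Vx)Q_S(x)+(Vx)Q_{\dot S}(x)=0$. Row by row this reads $(\dot v_i^\top x)Q_S=-(v_i^\top x)Q_{\dot S}$. I would argue as in Lemma~\ref{lem:attention-differential-t>1}.
\begin{itemize}
\item If some nonzero row $v_i$ has $v_i^\top x\nmid Q_S$, coprimality gives $Q_{\dot S}=\lambda Q_S$ and $\dot v_i=-\lambda v_i$. Propagating across rows forces $\dot V=-\lambda V$, so the kernel is only the scaling direction.
\item If every nonzero row's linear form divides $Q_S$, then either $V$ has rank one, or $V$ has two nonproportional rows whose forms are exactly the two factors of $Q_S$.
\end{itemize}

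In the rank-one case $V=hv^\top$ with $Q_S=(v^\top x)(u^\top x)$, the equation becomes $(\dot Vx)(u^\top x)=-h\,Q_{\dot S}(x)/(v^\top x)\cdot(v^\top x)$. So $Q_{\dot S}$ must be divisible by $v^\top x$, say $Q_{\dot S}=(v^\top x)(r^\top x)$, and then $\dot V=-h r^\top$ after matching. This kernel is parametrized by $r\in\CC^d$ and has dimension $d$. It contains the scaling direction, which corresponds to $r$ proportional to $u$.

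The main obstacle I anticipate is the nonproportional-rows subcase. There $V$ could have rank two with rows $\ell_1,\ell_2$ both dividing $Q_S=\ell_1\ell_2$. Taking rows $i,j$ with $v_i=\ell_1$ and $v_j=\ell_2$, the two equations give $\ell_1\mid Q_{\dot S}$ and $\ell_2\mid Q_{\dot S}$. Hence $Q_{\dot S}=\lambda\ell_1\ell_2$, and the kernel collapses back to scaling. I would check this carefully, including the degenerate case $\ell_1\propto\ell_2$ (rank-one $S$), so that the statement ``the differential drops rank precisely when $V$ has rank one and $v^\top x\mid Q_S$'' holds exactly as claimed.
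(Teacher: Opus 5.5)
Your approach is the paper's: unique factorization in $\CC[x]$, a case split on $\rank V$ and on whether $Q_S$ is irreducible, and the row-by-row kernel equation $(\dot v_i^\top x)Q_S+(v_i^\top x)Q_{\dot S}=0$. The fiber arguments are fine. So is your treatment of the subcase $\rank V\ge 2$ in which the rows are the two distinct factors $\ell_1,\ell_2$ of $Q_S$: there $\ell_1\ell_2\mid Q_{\dot S}$, which forces scaling. That subcase is one the paper covers only by the phrase ``the same factorization argument.'' The degenerate case $\ell_1\propto\ell_2$ you worry about cannot arise there, since then every nonzero row is proportional to $\ell_1$ and $V$ has rank one.

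The rank-one kernel computation is wrong as written, because you have swapped the two factors of $Q_S$.

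\begin{itemize}
\item \textbf{The divisibility step.} With $V=hv^\top$ and $Q_S=(v^\top x)(u^\top x)$, cancelling $v^\top x$ in row $i$ gives $(\dot v_i^\top x)(u^\top x)=-h_i\,Q_{\dot S}(x)$. So it is $u^\top x$, not $v^\top x$, that divides $Q_{\dot S}$.
\item \textbf{Your family is not in the kernel.} Substituting $Q_{\dot S}=(v^\top x)(r^\top x)$ and $\dot V=-hr^\top$ gives $h\,(v^\top x)(r^\top x)\bigl(v^\top x-u^\top x\bigr)$. This is nonzero for $r\neq 0$ unless $u=v$.
\item \textbf{The ansatz contradicts your conclusion.} Impose the kernel equation with $v^\top x\mid Q_{\dot S}$ and $u\not\propto v$. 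Unique factorization then forces $Q_{\dot S}\propto Q_S$ and $\dot V\propto V$, so you would recover only the scaling line, not a $d$-dimensional kernel.
\end{itemize}

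The correct kernel, as in the paper, is $\dot V=hr^\top$, $Q_{\dot S}=-(r^\top x)(u^\top x)$ for $r\in\CC^d$. To derive it, pick $i_0$ with $h_{i_0}\neq 0$ and set $r=\dot v_{i_0}/h_{i_0}$; row $i_0$ gives exactly this $Q_{\dot S}$. Every other row then gives $\dot v_j=h_j r$. The map $r\mapsto(\dot S,\dot V)$ is injective because $h\neq 0$, so the kernel has dimension $d$. In this parametrization the scaling direction $(S,-V)$ corresponds to $r=-v$, not to $r\propto u$. With this fix your argument is complete.
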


\begin{proof}
A quadratic form over $\CC$ factors into two linear forms precisely
when its symmetric matrix has rank at most two. Now suppose $\psi(S,V)=\psi(S',V')\neq0.$
If $\rank(V)>1$, two nonproportional rows of $V$, together with unique
factorization, determine $Q_S$ and hence $(S,V)$ up to scaling.
If $\rank(V)=1$, write $V=hv^\top$. When $Q_S$ is irreducible,
unique factorization again gives only the scaling ambiguity.
Thus nontrivial fibers can occur only when
$\rank(V)=1$ and $\rank(S)\leq2$. When $\rank(V)=1$ and $Q_S$ is reducible, unique factorization shows that any other preimage is obtained by assigning one of the finitely
many linear factors of $(v^\top x)Q_S(x)$ to $V'$, so the projective
fiber is finite.

For the differential, write $\dot v_i^\top$ for the rows of $\dot V$.
The kernel equation is
\[
(\dot v_i^\top x)Q_S(x)+(v_i^\top x)Q_B(x)=0.
\]
The same factorization argument gives only the scaling direction
unless $V=hv^\top$ and $v^\top x$ divides $Q_S(x)$. In the latter
case, writing
$Q_S(x)=(v^\top x)(w^\top x)$, the kernel is parametrized by
$r\in\CC^d$ through
$
\dot V=hr^\top
$ and $
Q_B(x)=-(r^\top x)(w^\top x),
$
and therefore has dimension $d$.
\end{proof}

Over $\CC$, let $Z^\CC=\{\psi(S,V):\rank(V)=1,\ \rank(S)\leq2\}$, and let $Z$ denote the corresponding locus over $\RR$.

\begin{proposition}\label{prop:single-token-full-rank}
Assume that $a=d>1$ and $d'>1$. Then
$
\operatorname{Sing}\bigl(\PP(\mathcal X_d^\CC)\bigr)=\PP(Z^\CC)
$
and there is no proximity bias toward the corresponding real singular locus $Z$.
\end{proposition}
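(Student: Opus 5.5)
The plan has two parts. The singular locus comes from \Cref{lem:kohn-finite-birational}, exactly as in the proof of \Cref{lem:multi-token-singular}. The absence of proximity bias comes from a dimension count of translated normal cones, as in \Cref{prop:multi-token-rank-one-no-bias}.

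Since $a=d$, the rank constraint is vacuous and the source $\PP(S^2(\CC^d))\times\PP(\CC^{d'\times d})$ is smooth. By \Cref{lem:single-token-factorization}, the projectivized parametrization has finite fibers, so it is finite, being proper. For $d'>1$ a general $V$ has rank greater than one, so the general fiber is a singleton and the map is birational.

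\begin{itemize}
\item Take $p\in\PP(Z^\CC)$ with $V=hv^\top$ and $Q_S=\ell_1\ell_2$. If $v^\top x$ is not proportional to some $\ell_i$, swapping $v^\top x$ with that factor gives a second projective preimage. If $v^\top x$ divides $Q_S$, \Cref{lem:single-token-factorization} gives a $d$-dimensional affine kernel, so the projective differential is not injective when $d>1$. In either case $p$ is singular.
\item If $p\notin\PP(Z^\CC)$, the fiber is a singleton and the kernel is only scaling. Hence $p$ is smooth.
\end{itemize}

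For the real model, the parametrization is a polynomial map on the linear space $S^2(\RR^d)\times\RR^{d'\times d}$. So \Cref{rem:bound-normal-parametrization} gives $N_{\cX_d}(p)\subseteq\im(d\psi_{(S,V)})^\perp$ for every real preimage. Let $n$ be the ambient dimension and set $m_0=\binom{d+1}{2}+dd'-1=\dim\cX_d$. Split $Z\setminus\{0\}$ into two pieces:
\begin{itemize}
\item $Z_1$, where some real preimage has $v^\top x\nmid Q_S$;
\item $Z_2$, where $Q_S=(v^\top x)(w^\top x)$.
\end{itemize}
On $Z_1$ the differential has rank $m_0$. A parameter count gives $\dim Z\le\dim\{\rank S\le2\}+d'+d-2=(2d-1)+d'+d-2=3d+d'-3$. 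On $Z_2$ the rank is $m_0+1-d$, and $\dim Z_2\le d'+2d-2$, counting $h,v,w$ and subtracting two scalings. \Cref{lem:count} then bounds the union of translated normal cones by:
\begin{itemize}
\item $n-\bigl(\binom{d+1}{2}+dd'-3d-d'+2\bigr)$ on $Z_1$;
\item $n-\bigl(\binom{d+1}{2}+dd'-3d-d'+2\bigr)$ on $Z_2$.
\end{itemize}
Both bounds are the same, and $\binom{d+1}{2}-3d+2+(d-1)(d'-1)-1=\binom{d-1}{2}-1+(d-1)(d'-1)$. For $d>1$ and $d'>1$ this is at least $(d-1)(d'-1)-1+\binom{d-1}{2}$. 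Its only zero case is $d=d'=2$, where it equals $0$.

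That borderline case $d=d'=2$ is where I expect the real obstacle. There I would sharpen the count. The real locus requires a real factorization of $Q_S$, i.e.\ $S$ indefinite or degenerate, and this does not reduce the dimension. So instead I would use that on $Z_1$ the two swapped preimages give two different image subspaces. Their sum should have dimension at least $m_0+1$, which cuts the normal cone by one more dimension. I would verify this on the explicit $d=d'=2$ parametrization.

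The origin is handled as before: $\cX_d$ is a symmetric cone, so $N_{\cX_d}(0)=\operatorname{span}(\cX_d)^\perp$ is a proper subspace. \Cref{prop:implicit-bias-criterion} then concludes that there is no proximity bias toward $Z$.
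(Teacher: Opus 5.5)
Your plan follows the paper. You use \Cref{lem:kohn-finite-birational} for the complex singular locus. You then bound $N_{\cX_d}(p)\subseteq\im(d\psi_{(S,V)})^\perp$ at real preimages, apply \Cref{lem:count}, and use the symmetric-cone argument at the origin.

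The bounds you write down are correct, but your simplification of them is wrong, and this creates an obstacle that does not exist. Using $dd'-d'=(d-1)(d'-1)+(d-1)$ and $\binom{d+1}{2}-2d+1=\binom{d-1}{2}$, you get
\[
\binom{d+1}{2}+dd'-3d-d'+2=\binom{d-1}{2}+(d-1)(d'-1)=\frac{(d-1)(2d'+d-4)}{2}\geq 1
\]
for all $d,d'>1$. At $d=d'=2$ the value is $1$: there $n=8$, $m_0=6$, $\dim Z\le 5$, and $5+8-6=7<8$. So both of your pieces already have positive codimension, and the sharpening paragraph should be deleted.

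That is just as well, because the sharpening was unproved and could not work as stated. A real $S$ of rank two may be semidefinite. Then $Q_S$ has no real linear factor, and $p$ has a unique real projective preimage. Such points form a full-dimensional part of $Z$, so there are no two real image subspaces to add. For the same reason, your remark that the real locus requires a real factorization of $Q_S$ is false. Your bound $\dim Z\le 3d+d'-3$ does not use it, so it is unaffected.

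The only structural difference from the paper is how you treat critical points.
\begin{itemize}
\item The paper isolates $Y=\{\psi(v^{\otimes 2},hv^\top)\}$. It uses the swap $V'=hw^\top$, $Q_{S'}=(v^\top x)^2$ to show that every point of $Z\setminus(Y\cup\{0\})$ has a noncritical real preimage. It then bounds $Y$, of dimension $d'+d-1$, separately.
\item You put every point with a critical preimage into $Z_2$, of dimension at most $d'+2d-2$, and use the critical rank $m_0+1-d$ there.
\end{itemize}
Your choice gives exactly the same codimension $\frac{(d-1)(2d'+d-4)}{2}$ as the paper's bound on $Z\setminus Y$. So your route is valid and avoids the swap step over $\RR$.
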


\begin{proof}
By Lemma~\ref{lem:single-token-factorization}, the projectivized
parametrization has finite fibers and is generically one-to-one.
Since its source is smooth, the finite-birational criterion from
Lemma~\ref{lem:kohn-finite-birational} shows that the singular points are exactly
those in $\PP(Z^\CC)$: outside $Z^\CC$ the fiber is a singleton and
the projective differential is injective, whereas on $Z^\CC$ there
is either a nontrivial fiber or a critical preimage.

We now work over $\RR$. For $p\in\mathcal X_d$, set
$\mathcal L_p
=
p+\bigcap_{(S,V)\in\psi^{-1}(p)}
\operatorname{im}(d\psi_{(S,V)})^\perp$.
Since every image of the differential is contained in
$T_{\mathcal X_d}(p)$, we have
$\Vorloc_{\mathcal X_d}(p)
\subseteq p+N_{\mathcal X_d}(p)\subseteq\mathcal L_p$.

To bound $\mathcal L_p$ over the real singular locus $Z$, we separate
the exceptional locus
$Y=\{\psi(v^{\otimes2},hv^\top):v\neq0,\ h\neq0\}\subseteq Z$.
For points in $Z\setminus Y$, there is always a noncritical preimage,
which gives the stronger dimension bound below. On $Y$, the unique
projective preimage is critical, so a separate bound is needed.

Set $n=d'\binom{d+2}{3}$ and
$m=dd'+\binom{d+1}{2}$, so that $\dim\mathcal X_d=m-1$.

Let $p\in Z\setminus(Y\cup\{0\})$. If $p$ already has a noncritical
preimage, there is nothing to show. Otherwise, choose a critical
preimage $(S,V)$. By Lemma~\ref{lem:single-token-factorization}, we
may write $V=hv^\top$ and
$Q_S(x)=(v^\top x)(w^\top x)$. Since $p\notin Y$, we have
$w\not\parallel v$. The same output then has a parametrization with
$V'=hw^\top$ and $Q_{S'}(x)=(v^\top x)^2$, which is noncritical.
Hence $\dim\mathcal L_p\leq n-(m-1)$. Since
$\dim Z\leq d'+3d-3$,
$$
\dim\left(
\bigcup_{p\in Z\setminus(Y\cup\{0\})}\mathcal L_p
\right)
\leq
n-\frac{(d-1)(2d'+d-4)}{2}
<n.
$$

For $p\in Y$, the projective preimage is unique and
Lemma~\ref{lem:single-token-factorization} gives a $d$-dimensional
affine kernel. Thus $\dim\mathcal L_p=n-(m-d)$. Since
$\dim Y=d'+d-1$,
$$
\dim\left(
\bigcup_{p\in Y}\mathcal L_p
\right)
\leq
n-\frac{(d-1)(2d'+d-2)}{2}
<n.
$$
Finally, since $\mathcal X_d$ is a symmetric cone, its local Voronoi cell
at the origin is contained in a proper linear subspace. Hence the union of
the local Voronoi cells over $Z$ has empty interior.
\end{proof}

\subsection{Rank-constrained models}

The second new feature in the regime $t=1$ is that symmetrization interacts differently
with the rank constraint over $\CC$ and over $\RR$.

\begin{proposition}\label{prop:single-token-symmetric-parts}
Let $1\leq a<d$ and let $S=\frac12(A+A^\top)$. Over $\CC$, the
possible symmetric parts of matrices satisfying $\rank(A)\leq a$ are
exactly
$\{S\in S^2(\CC^d):\rank(S)\leq\min\{2a,d\}\}$.
Over $\RR$, they are exactly
$\mathcal R_a=\{S\in S^2(\RR^d):n_+(S)\leq a,\ n_-(S)\leq a\}$, where $n_+(S)$ and $n_-(S)$ denote the numbers of positive and negative eigenvalues of $S$, respectively.
\end{proposition}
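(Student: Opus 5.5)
We prove the two descriptions separately. In each case we first show that every symmetric part of a matrix with $\rank(A)\leq a$ lies in the claimed set, and then construct, for each matrix in the set, some $A$ with $\rank(A)\le a$ whose symmetric part it is. The template is Lemma~\ref{lem:multi-token-symmetric-quotient}, which is the case $a=1$, and we extend it by summing rank-one pieces.

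\emph{Necessity.} Write $A=\sum_{i=1}^a k_iq_i^\top$. Then
\[
S=\tfrac12\sum_{i=1}^a (k_iq_i^\top+q_ik_i^\top).
\]
Over $\CC$ this is a sum of $a$ matrices of rank at most two, so $\rank(S)\le\min\{2a,d\}$. Over $\RR$ we polarize each term as in Lemma~\ref{lem:multi-token-symmetric-quotient}. Set $p_i=(k_i+q_i)/2$ and $r_i=(k_i-q_i)/2$. Then
\[
S=\sum_i p_ip_i^\top-\sum_i r_ir_i^\top=P-N,
\]
where $P,N\succeq0$ each have rank at most $a$. To get $n_+(S)\le a$, we use a Courant--Fischer argument. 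On the subspace $\ker P$, of codimension at most $a$, the form $x^\top Sx=-x^\top Nx$ is $\le0$. A positive definite subspace of $S$ meets $\ker P$ only in $0$, so it has dimension at most $a$. Hence $n_+(S)\le a$. By symmetry, $n_-(S)\le a$.

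\emph{Sufficiency over $\CC$.} Let $\rank(S)=\rho\le\min\{2a,d\}$. Diagonalize over $\CC$ by congruence to get $S=\sum_{j=1}^{\rho}u_ju_j^\top$. Pair the terms two at a time, and pad with a zero vector if $\rho$ is odd. For each pair, write
\[
u u^\top+ww^\top=\tfrac12(vk^\top+kv^\top)\quad\text{with } v=u+iw,\ k=u-iw,
\]
exactly as in Lemma~\ref{lem:multi-token-symmetric-quotient}. The symmetric part of $vk^\top$ is this sum. This produces at most $\lceil\rho/2\rceil\le a$ pairs. Summing the rank-one matrices $v_jk_j^\top$ gives $A$ with $\rank(A)\le a$ and symmetric part $S$.

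\emph{Sufficiency over $\RR$.} Use the spectral decomposition
\[
S=\sum_{i=1}^{n_+}p_ip_i^\top-\sum_{i=1}^{n_-}r_ir_i^\top,
\]
and pad with zero vectors so that both sums have exactly $a$ terms. This is possible because $n_\pm\le a$. Set $A=\sum_{i=1}^a (p_i+r_i)(p_i-r_i)^\top$, which has rank at most $a$. Its symmetric part is $\sum_i(p_ip_i^\top-r_ir_i^\top)=S$, because the cross terms $p_ir_i^\top$ and $r_ip_i^\top$ cancel under symmetrization.

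\emph{Main obstacle.} The only nonroutine step is the real necessity bound $n_\pm(S)\le a$. Subadditivity of rank alone only gives $\rank(S)\le 2a$, which does not control the signs. The kernel/Courant--Fischer argument above is the step I would check carefully. The hypothesis $a<d$ is used only so that the $\min\{2a,d\}$ bound is stated correctly.
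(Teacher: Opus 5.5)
Your proof is correct and follows the paper's route. Over $\CC$ you pair terms of $S=\sum u_ju_j^\top$ via $v=u+iw$, $k=u-iw$; over $\RR$ you write $\operatorname{Sym}(A)=\sum_{i=1}^a p_ip_i^\top-\sum_{i=1}^a r_ir_i^\top$ and reverse this with $A=\sum_i(p_i+r_i)(p_i-r_i)^\top$. Your kernel/Courant--Fischer argument supplies the step $n_\pm(S)\le a$ that the paper asserts without justification, and the hypothesis $a<d$ is not actually needed for the statement.
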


\begin{proof}
The inclusion $\rank(S)\leq2a$ over $\CC$ is immediate. Conversely,
write $S=\sum_{i=1}^r u_i u_i^\top$ with $r\leq2a$. Pairing the
summands and using
$\operatorname{Sym}((u+iv)(u-iv)^\top)=uu^\top+vv^\top$
expresses $S$ as the symmetric part of a sum of at most
$\lceil r/2\rceil\leq a$ rank-one matrices.

Over $\RR$, if $A=K^\top Q$ with $K,Q\in\RR^{a\times d}$, then
$\operatorname{Sym}(A)=PP^\top-RR^\top$ for suitable
$P,R\in\RR^{d\times a}$, so it has at most $a$ positive and at most
$a$ negative eigenvalues. The converse follows by reversing this
construction.
\end{proof}

If $2a\geq d$, the complexified model is therefore the same as in the
unconstrained case. The genuinely new regime is $2a<d$.

\begin{proposition}\label{prop:single-token-constrained-singular}
Assume that $a>1$ and $2a<d$. Then
$
\operatorname{Sing}\bigl(\PP(\mathcal X_a^\CC)\bigr)
=
\PP\bigl(
\{\psi(S,V):\rank(S)\leq2a-1\}
\bigr).
$
\end{proposition}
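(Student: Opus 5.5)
We describe $\PP(\mathcal X_a^\CC)$ as the image of a finite birational map and then apply Lemma~\ref{lem:kohn-finite-birational}, as in the proofs of Lemma~\ref{lem:multi-token-singular} and Proposition~\ref{prop:single-token-full-rank}.

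By Proposition~\ref{prop:single-token-symmetric-parts}, since $2a<d$, the complexified model is
\[
\mathcal X_a^\CC=\{\psi(S,V): S\in\mathcal D_{2a}^{\mathrm{sym}},\ V\in\CC^{d'\times d}\},
\]
where $\mathcal D_{2a}^{\mathrm{sym}}$ is the variety of symmetric matrices of rank at most $2a$. This is a proper determinantal variety, and its singular locus is $\mathcal D_{2a-1}^{\mathrm{sym}}$. We can verify this by the same Schur complement and minor argument as in Lemma~\ref{lem:multi-token-symmetric-rank-two-singular}: at rank exactly $2a$ the locus is locally a graph, while at lower rank all differentials of the $(2a+1)$-minors vanish.

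Next we check the hypotheses of the criterion for the projectivized parametrization
\[
\overline\psi:\PP(\mathcal D_{2a}^{\mathrm{sym}})\times\PP(\CC^{d'\times d})\to\PP(\mathcal X_a^\CC).
\]
Lemma~\ref{lem:single-token-factorization} gives finite fibers. A general $S$ of rank $2a\ge4>2$ has irreducible $Q_S$, so the general fiber is a singleton and the map is birational. The map is proper because the source is projective, so it is finite.

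For the inclusion $\supseteq$, take $p$ with a preimage of rank at most $2a-1$. The source is singular at that preimage, so $p$ is singular by Lemma~\ref{lem:kohn-finite-birational}. This applies to every point of the fiber we choose. Note, though, that a point might also have a different preimage of rank $2a$. That is impossible in a nontrivial way: fibers are nontrivial only when $\rank S\le2$, and $2<2a$ while $\rank S$ is determined by $Q_S$ up to scaling otherwise.

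For the reverse inclusion, take $p$ none of whose preimages has rank below $2a$. Then $\rank S=2a\ge4$, so $Q_S$ is irreducible. By Lemma~\ref{lem:single-token-factorization} the projective fiber is a singleton and the affine kernel of $d\psi$ is only the scaling direction, so the projective differential is injective. The source is smooth there, so $p$ is smooth.

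The main obstacle is applying the kernel computation correctly. It must be restricted to the tangent space of $\mathcal D_{2a}^{\mathrm{sym}}$ rather than all of $S^2$. This is fine, since the kernel over all $S^2$ is already just scaling, and scaling is tangent.
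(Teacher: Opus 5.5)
Your proposal is correct and follows the paper's proof essentially step for step. You identify the complex parameter space as $\mathcal D_{2a}$ with singular locus $\mathcal D_{2a-1}$, use Lemma~\ref{lem:single-token-factorization} together with irreducibility of $Q_S$ at rank $2a\ge 4$ to get a finite birational parametrization with singleton fibers and injective projective differential off $\mathcal D_{2a-1}$, and read off both inclusions from Lemma~\ref{lem:kohn-finite-birational}. Your side remark ruling out a second preimage of rank $2a$ is correct but unnecessary, since the criterion already makes $p$ singular as soon as its fiber meets the singular locus of the source.
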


\begin{proof}
Proposition~\ref{prop:single-token-symmetric-parts} implies that the complex parameter space for $S$ is the symmetric determinantal variety
$\mathcal D_{2a}=\{S\in S^2(\CC^d):\rank(S)\leq2a\}$, whose singular
locus is $\mathcal D_{2a-1}$.

Outside $\mathcal D_{2a-1}$, the matrix $S$ has rank $2a>2$, so
$Q_S$ is irreducible. Hence the projective fiber is a singleton and
the projective differential is injective. By Lemma~\ref{lem:single-token-factorization}, the projectivized parametrization has finite fibers and is generically one-to-one,
hence is finite and birational. The finite-birational
criterion \ref{lem:kohn-finite-birational} therefore gives smoothness there. Conversely, every point
in the displayed locus has a preimage in the singular locus
$\mathcal D_{2a-1}$ of the parameter space, and is therefore
singular.
\end{proof}

Let $U$ be the locus of points $\psi(S,V)$ with $V\neq0$ and
$\operatorname{inertia}(S)=(a,a-1,d-2a+1)$. Since $\psi(S, V) = \psi(-S, -V)$, inertia $(a-1, a, d-2a+1)$ gives the same set.

\begin{lemma}\label{lem:single-token-boundary-normal}
Let $S\in\mathcal R_a$ have inertia $(a,a-1,k)$, where
$k=d-2a+1$, and let $K=\ker(S)$. With respect to
$\RR^d=K^\perp\oplus K$, write
$S=\left(\begin{smallmatrix}S_0&0\\0&0\end{smallmatrix}\right)$,
and let $B_{22}$ denote the $K\times K$ block of $B$. Then
$$
T_{\mathcal R_a}(S)
=
\{B:B_{22}\preceq0,\ \rank(B_{22})\leq1\},
\qquad
N_{\mathcal R_a}(S)
=
\{W:W=P_KWP_K,\ W|_K\succeq0\}.
$$
For inertia $(a-1,a,k)$, the inequalities are reversed.
\end{lemma}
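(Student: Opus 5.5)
The plan is to mirror the proof of Lemma~\ref{lem:multi-token-rank-one-normal}, with the positive block $s_0$ replaced by the invertible block $S_0$ of inertia $(a,a-1,0)$. First we treat the tangent cone. Take a nearby matrix $S'=\left(\begin{smallmatrix}P&Q\\Q^\top&R\end{smallmatrix}\right)$ in the decomposition $K^\perp\oplus K$. For $S'$ close to $S$, the block $P$ is close to $S_0$, so it is invertible with the same inertia $(a,a-1,0)$.

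By Haynsworth inertia additivity (the Schur complement formula), $\operatorname{inertia}(S')=\operatorname{inertia}(P)+\operatorname{inertia}(C)$, where $C=R-Q^\top P^{-1}Q$. The constraints $n_+(S')\le a$ and $n_-(S')\le a$ therefore become $n_+(C)\le 0$ and $n_-(C)\le 1$. In other words, $S'\in\mathcal R_a$ near $S$ if and only if $C\preceq0$ and $\rank C\le1$.

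For the inclusion $T_{\mathcal R_a}(S)\subseteq\{B:B_{22}\preceq0,\ \rank B_{22}\le1\}$, take $S_n=S+\tau_nB_n$ with $B_n\to B$. Then $C_n/\tau_n=(B_n)_{22}-\tau_n(B_n)_{12}^\top P_n^{-1}(B_n)_{12}\to B_{22}$. The set of negative semidefinite matrices of rank at most one is a closed cone, so the limit $B_{22}$ lies in it.

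For the reverse inclusion, we use the explicit curve from Lemma~\ref{lem:multi-token-rank-one-normal}, with $s_0$ replaced by $S_0$:
\[
S(t)=\begin{pmatrix} S_0+tB_{11} & tB_{12}\\ tB_{12}^\top & tB_{22}+t^2B_{12}^\top(S_0+tB_{11})^{-1}B_{12}\end{pmatrix}.
\]
Its Schur complement is exactly $tB_{22}$. This is negative semidefinite of rank at most one for $t>0$, so $S(t)\in\mathcal R_a$ for small $t>0$, and $S'(0)=B$.

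Next we compute the normal cone by polarity. The tangent cone contains every $B$ with $B_{22}=0$, in both signs. Hence any $W\in N_{\mathcal R_a}(S)$ has vanishing $11$ and $12$ blocks, that is, $W=P_KWP_K$. On the $K$-block, the tangent cone is the set of matrices $-zz^\top$ with $z\in K$. The condition $\langle W,-zz^\top\rangle\le0$ for all such $z$ is equivalent to $W|_K\succeq0$. Conversely, any $W$ of this form pairs nonpositively with every tangent $B$, because $\langle W,B\rangle=\langle W|_K,B_{22}\rangle\le0$ whenever $B_{22}\preceq0$. This gives both inclusions.

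For inertia $(a-1,a,k)$ we apply the same argument to $-S$, since $\mathcal R_a=-\mathcal R_a$. This reverses all the inequalities.

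The main obstacle I expect is the first step: justifying that near $S$ membership in $\mathcal R_a$ is governed exactly by the Schur complement conditions. This needs the inertia of $P$ to stay equal to $(a,a-1,0)$, which follows from the invertibility of $S_0$ and continuity of eigenvalues. It also needs Haynsworth's formula rather than a mere eigenvalue interlacing bound.
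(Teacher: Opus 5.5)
Your proposal is correct and follows essentially the same route as the paper. You use Schur-complement inertia additivity to reduce membership in $\mathcal R_a$ near $S$ to $C=R-Q^\top P^{-1}Q\preceq0$ with $\rank C\le1$. The curve whose Schur complement is $tB_{22}$ gives the reverse inclusion, polarity gives the normal cone, and $S\mapsto -S$ handles inertia $(a-1,a,k)$. Your write-up is slightly sharper than the paper's terse proof: you note that the Schur-complement characterization is exact near $S$ rather than only ``to first order,'' and you justify the forward tangent inclusion via closedness of the cone $\{-zz^\top : z\in K\}$.
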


\begin{proof}
For a nearby matrix
$S'=\left(\begin{smallmatrix}P&Q\\Q^\top&R\end{smallmatrix}\right)$,
the block $P$ has inertia $(a,a-1)$. Hence the inertia constraint on
$S'$ is, to first order, equivalent to requiring the Schur complement
$R-Q^\top P^{-1}Q$ to be negative semidefinite of rank at most one.
Conversely, if $B_{22}\preceq0$ and $\rank(B_{22})\leq1$, choosing
a curve whose Schur complement is $tB_{22}$ realizes $B$ as a
tangent direction. Taking the polar gives
the normal cone. The other case follows by replacing $S$ by $-S$.
\end{proof}

\begin{proposition}\label{prop:single-token-boundary-bias}
Assume that $a>1$ and $2a<d$. There is proximity bias toward
$U$.
\end{proposition}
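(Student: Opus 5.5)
I will mirror the three-step proof of \Cref{prop:multi-token-rank-one-bias}, with $\mathcal R_1$ replaced by $\mathcal R_a$ and \Cref{lem:multi-token-rank-one-normal} replaced by \Cref{lem:single-token-boundary-normal}. The ambient comparison model is the unrestricted single-token model $\widetilde X=\{\psi(S,V):S\in S^2(\RR^d),\ V\in\RR^{d'\times d}\}$, which is $\mathcal X_d$.

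The first step is local smoothness. Fix $p=\psi(S,V)\in U$ with $V\neq0$ and $\operatorname{inertia}(S)=(a,a-1,k)$, $k=d-2a+1$. Since $\rank S=2a-1\geq3$, the form $Q_S$ is irreducible. By \Cref{lem:single-token-factorization}, the differential of $\psi$ at $(S,V)$ therefore has only the scaling kernel, and $p$ has a unique projective preimage. After fixing scale, $\psi$ is a local diffeomorphism from a neighborhood of $(S,V)$ onto a neighborhood of $p$ in $\widetilde X$, and $\widetilde X$ is smooth at $p$. Uniqueness of the preimage, together with properness of the projectivized map, ensures that all points of $\mathcal X_a$ near $p$ come from parameters near $(S,V)$ up to scaling.

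The second step is the one-sided estimate. Choose $W$ supported on $K=\ker S$ with $W|_K\succ0$. Define $w_W\in T_{\widetilde X}(p)$ by $\langle w_W,d\psi(\dot S,\dot V)\rangle=\langle W,\dot S\rangle$; this is well defined because $\langle W,S\rangle=0$ kills the scaling direction. For a nearby $x=\psi(S',V')\in\mathcal X_a$, write $S'$ in blocks with respect to $K^\perp\oplus K$. The block $P$ near $S_0$ has inertia $(a,a-1)$. Since $n_+(S')\le a$, Haynsworth additivity of inertia forces the Schur complement $C=R-Q^\top P^{-1}Q$ to have no positive eigenvalues, so $C\preceq0$. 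Hence $\langle W,S'-S\rangle=\langle W,C\rangle+\langle W,Q^\top P^{-1}Q\rangle\le c_0\|S'-S\|^2$. This is exactly the argument of \Cref{lem:single-token-boundary-normal} and \Cref{prop:multi-token-rank-one-bias}. Taylor expansion then gives $\langle w,x-p\rangle\le c\|x-p\|^2$ for $w=w_W+\eta$ with $\eta\in T_{\widetilde X}(p)^\perp$ small. Consequently $p+w\in\operatorname{Vor}_{\mathcal X_a}(p)$ for all sufficiently small such $w$, using a compactness argument for points of $\mathcal X_a$ far from $p$.

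The third step is the dimension match. The inertia stratum has codimension $\binom{k+1}{2}=\dim S^2(K)$ in $S^2(\RR^d)$, so $U$ has codimension $\binom{k+1}{2}$ in $\widetilde X$. Tangent directions to $U$ have zero $K\times K$ block to first order, because the rank of $S$ is preserved. Hence $W\mapsto w_W$ maps $S^2(K)$ injectively into $T_U(p)^\perp\cap T_{\widetilde X}(p)$, and by the dimension count it fills this space. Adding $\eta$ gives an open cone in the normal space $T_U(p)^\perp$. Applying the inverse function theorem to $(p,w)\mapsto p+w$ on the normal bundle of the smooth manifold $U$ shows that these Voronoi pieces sweep out an open set. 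Thus $\bigcup_{p\in U}\operatorname{Vor}_{\mathcal X_a}(p)$ has nonempty interior, and there is proximity bias toward $U$.

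I expect the main obstacle to be the first step: verifying that $U$ is a smooth submanifold along which $\psi$ is a local diffeomorphism, and that no other preimages of points near $p$ escape the local Schur-complement analysis. Irreducibility of $Q_S$, which requires $2a-1\ge3$, i.e. $a>1$, is what makes this work, and it is the place where the hypothesis $a>1$ enters.
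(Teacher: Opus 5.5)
Your proposal is correct and follows essentially the same route as the paper: transplant the normal-bundle argument of the multi-token rank-one case, using the local diffeomorphism available because $\rank S=2a-1>2$, the Schur-complement one-sided estimate from the single-token boundary-normal lemma, the codimension count $\binom{k+1}{2}=\dim S^2(K)$, and the inverse function theorem. Your extra details fill in steps the paper's sketch leaves implicit: Haynsworth inertia additivity gives $C\preceq 0$, and the unique-preimage/properness argument ensures that nearby points of $\mathcal X_a$ come from nearby parameters.
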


\begin{proof}
The argument is the same normal-bundle argument used in
Proposition~\ref{prop:multi-token-rank-one-bias}.

Fix $p=\psi(S,V)\in U$ and let $K=\ker(S)$. Since
$\rank(S)=2a-1>2$, the unrestricted symmetric parametrization is,
after quotienting by scaling, a local diffeomorphism near $p$.
Choose $W$ supported on $K$ with $W|_K\succ0$, and transport it
through this local parametrization to a normal direction $w_W$ at
$p$. By Lemma~\ref{lem:single-token-boundary-normal} and the Schur
complement, for $x\in\mathcal X_a$ sufficiently close to $p$ we have
$\langle w_W,x-p\rangle\leq c\|x-p\|^2$. Hence sufficiently small
points in the direction $w_W$ belong to
$\operatorname{Vor}_{\mathcal X_a}(p)$.

Writing $k=d-2a+1$, the admissible matrices $W$ form an open subset
of $S^2(K)$, and
$\dim S^2(K)=\binom{k+1}{2}
=\operatorname{codim}_{\mathcal X_d}(U)$.
Together with the directions normal to $\mathcal X_d$, these
directions fill $T_{U}(p)^\perp$. The inverse-function-theorem
argument from Proposition~\ref{prop:multi-token-rank-one-bias}
therefore shows that
$
\bigcup_{p\in U}\operatorname{Vor}_{\mathcal X_a}(p)
$
contains a nonempty open subset of the ambient space.
\end{proof}

Propositions~\ref{prop:single-token-full-rank},
\ref{prop:single-token-constrained-singular}, and
\ref{prop:single-token-boundary-bias}
prove Theorem~\ref{thm:single-token-attention}.

\section{Numerical evidence for Conjecture~\ref{conj:mlp-small-dim-no-bias}} \label{sec:numerical-evidence-conjecture}

Let $n=\dim\cH$. For $f\in\cF'$, let $m(f)$ be the dimension of the span of $\operatorname{im}d\Phi_\theta$ over all
parameters $\theta$ of the full network with $f_\theta=f$, where $\Phi: \theta \mapsto f_\theta$ is the parametrization of $\cF$. The normal cone $N_\cF(f)$ lies in a space of dimension
$n-m(f)$, by \Cref{rem:bound-normal-parametrization}.
So there is no proximity bias toward $\cF'$ if $m(f)$ exceeds the dimension of every subset of
$\cF'$ on which it is constant, by \Cref{lem:count}.
At a generic point of $\cF'$, this means $m(f)>\dim\cF'$.

We computed $m(f)$ at random parameters, with exact linear algebra modulo a large prime. To explore the fiber of $f$,
we gave the inactive neurons random incoming weights and merged constant or proportional neurons into the next layer.

We tested 10,768 networks with bias terms: $d_0\le2$, $L\in\{3,4\}$, hidden widths at most $3$, $d_L\le2$, activations
$t^2$,~$t^3$ or generic quadratics and cubics, and one or two inactive neurons.

\begin{center}\small
\begin{tabular}{lrrr}
\toprule
inactive neurons & cases & $\dim\cF'<\dim\cF$ & with $m>\dim\cF'$\\
\midrule
one, in one layer & 5088 & 2884 & 2884\\
two, in one layer & 2544 & 2148 & 2148\\
one in each of two layers & 3136 & 2666 & 2666\\
\bottomrule
\end{tabular}
\end{center}

The condition holds in all 7,698 cases with $\dim\cF'<\dim\cF$. In the 3,070 cases with $\dim\cF'=\dim\cF$, where there
is proximity bias by \Cref{prop:equal-dimension-bias},
we found $m=\dim\cF'$.

We also tested 40,000 degenerate subsets of $\cF'$, taken from 1,600 of these cases. They are defined by zero weights,
inactive or dead neurons, equal, opposite or proportional neurons, or zero biases. The condition holds on all of them,
except those consisting of constant functions, where $N_\cF=\{0\}$ because of the output bias.

The ranks are exact with high probability but not certified, and only some degenerate subsets were tested. They provide evidence  for the conjecture. For a fixed small architecture, an exact symbolic computation of the strata of $\cF'$ on which $m(f)$ is constant would give a proof.

\section{Details for figures}\label{sec:figures}

All three-dimensional figures show the intersection of a neural model with a three-dimensional affine slice of its
ambient space, clipped to a ball. The real model is drawn in blue and the submodel or singular locus in magenta;
dots are points of the Zariski closure that are not in the real model. We write
$\sym(B)=\frac12(B+B^\top)$.

\subsection{\Cref{fig:mlp}}

Shallow MLP $\mathbf d=(3,2,1)$, $\rho(t)=t^2$, without biases. Then $f(x)=x^\top Ax$ with $A=\sum_{i\le2}\lambda_iw_iw_i^\top$,
so $\cH=S^2(\RR^3)$, $\cF=\{\rank A\le2\}$, and the subnetwork $(3,1,1)$ is $\cF'=\{\rank A\le1\}$. We take the linear
slice
\[
A=S(x,y,z)=\begin{pmatrix}z&x&0\\x&x&y\\0&y&x\end{pmatrix}.
\]
Then $\cF\cap\{S(x,y,z)\}=\{z(x^2-y^2)=x^3\}$, the cone over a nodal cubic, and $\cF'\cap\{S(x,y,z)\}$ is the $z$-axis
$\{x=y=0\}$, along which the two sheets of the cone cross. The figure shows the ball of radius $2.4$ about the origin.

\subsection{\Cref{fig:implicit-bias-examples}}\label{sec:fig-examples}
See \Cref{fig:implicit-bias-examples-details} for a detailed description of \Cref{fig:implicit-bias-examples}.
\begin{figure}[h]
    \centering
    \includegraphics[width=\linewidth]{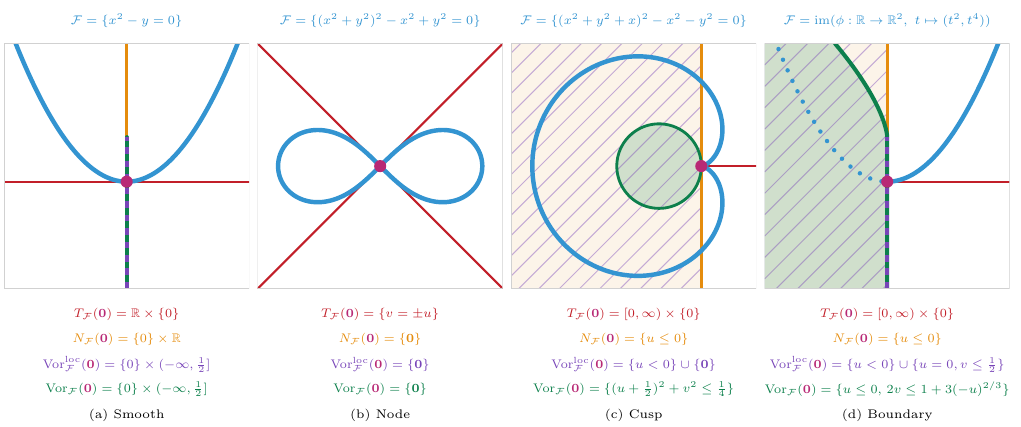}
    \caption{Details for \Cref{fig:implicit-bias-examples}.}
    \label{fig:implicit-bias-examples-details}
\end{figure}

\subsection{\Cref{fig:rank-one-cone}}

The single-neuron cone of architecture $\mathbf d=(2,1,1)$ and activation $\rho(t)=t^2$, without biases: $f(x)=v\,(w^\top x)^2$. Writing
$f=z\,x_1^2+2y\,x_1x_2+x\,x_2^2$, we have $\cH\cong\RR^3$ and
$
\cC=\{(x,y,z):\ xz=y^2\},
$
the cone of symmetric $2\times2$ matrices of rank at most one. Since
$\cC=-\cC$ and $\linspan\cC=\cH$, we have $N_\cC(0)=\{0\}$ and $\Vorloc_\cC(0)=\{0\}$.

\subsection{\Cref{fig:attention-rank-one-geometry}} \label{sec:fig-attention}

All panels use $d'=1$ and $t=2$; write $V=v^\top$. For $t\ge2$ the function $\phi(A,V)$ is determined, by an
injective linear map, by the tuple of symmetric matrices $\sym(Ae_qv^\top)$, $q\le d$; equivalently, by the tuple of
quadratic forms $Q_q(x)=(e_q^\top A^\top x)(v^\top x)$.

\paragraph{(a) $a=d=2$.} Here $\cH\cong S^2(\RR^2)^2\cong\RR^6$, and $\cX_2$ is the set of pairs $(Q_1,Q_2)$ of binary
quadratic forms with a common real linear factor. Its Zariski closure is the hypersurface $\Res(Q_1,Q_2)=0$, that is,
$
4\,\delta_{12}\delta_{23}-\delta_{13}^2=0,
$
where $\delta_{ij}$ is the minor in columns $i,j$ of the $2\times3$ matrix with rows
$\bigl((M_q)_{11},(M_q)_{12},(M_q)_{22}\bigr)$, $M_q=\sym(Ae_qv^\top)$. Its real points are $\cX_2$ together with the pairs $(\lambda S,\mu S)$ with
$S$ definite. The submodel $\cX_1=\{\rank A\le1\}$ consists of the pairs $(\lambda S,\mu S)$ with $\det S\le0$, and
it is the double curve of the surface. We take the affine slice
\[
M_1=\begin{pmatrix}-2y-z&y-1\\y-1&x-1\end{pmatrix},\qquad M_2=\begin{pmatrix}1+x&x-z\\x-z&x+y\end{pmatrix},
\]
in the ball of radius $2.4$ about $(-0.2,0.8,-0.5)$. The slice meets the double curve of proportional pairs in a
twisted cubic. It is drawn solid where $\det S<0$ (points of $\cX_1$) and dotted where $\det S>0$ (Zariski closure only).

\paragraph{(b) $a=1$, $d=2$.} By \Cref{lem:multi-token-symmetric-quotient},
$\cX_1\cong\{q\otimes S:\ q\in\RR^2,\ \det S\le0\}$. We write $q\otimes S$ as the
$2\times3$ matrix with rows $q_i\,(S_{11},S_{12},S_{22})$, so the Zariski closure of $\cX_1$ is the set of matrices of rank
at most one. We take the affine slice
\[
M(u,w,z)=\begin{pmatrix}u&w&u\\z&1-u&z\end{pmatrix}.
\]
In this slice, the Zariski closure is the hyperboloid $wz=u(1-u)$, the real model is its part
$u^2+z^2\le w^2+(1-u)^2$, and $U=\{\rank S\le1\}$ is the pair of lines $(r,r,1-r)$ and $(r,-r,r-1)$, $r\in\RR$. The
figure shows the ball of radius $2.6$ about $(\tfrac12,0,0)$.

\paragraph{(c) $a=1$, $d=3$.} We fix $q$ and take the linear slice $S=S(x,y,z)$ of $S^2(\RR^3)$ from G.1. The Zariski
closure is the cone $z(x^2-y^2)=x^3$. The real model, i.e.\ the matrices on this cone with at most one positive and one
negative eigenvalue, is its part $y^2\ge x^2$. The locus $U=\{\rank S\le1\}$ is the $z$-axis, which is a singular line of
the Zariski closure and a boundary of the real model. The figure shows the ball of radius $2.4$ about the origin.

\subsection{\Cref{fig:softmax-experiment}}

The setting is that of Section~4.3: $d=t=2$, $a=d'=1$, target $h=\psi(q_*,S_*)$ with $q_*$ a unit vector and
$S_*\succ0$ with eigenvalues $1$ and $0.2$, $512$ standard Gaussian inputs, initialization $k_0,q_*,v_0$ with
$S_0=\frac12(v_0k_0^\top+k_0v_0^\top)$ of eigenvalues $1$ and $-0.05$, and $500$ Adam steps on $k,q,v$ with the squared
loss. Softmax attention replaces $VXX^\top AX$ by $VX\operatorname{softmax}(X^\top AX)$. We report the medians,
interquartile ranges and eight individual runs of $\sigma_2(S_t)/\sigma_1(S_t)$ over $30$ runs.

\subsection{\Cref{fig:cnn}}

One-dimensional CNN with $L=2$, filters $w_1=(a,b)$ and $w_2=(c,e)$, strides one, $\sigma(z)=z^2$. Then
\[
f(x)=c\,(ax_1+bx_2)^2+e\,(ax_2+bx_3)^2=F_{11}x_1^2+2F_{12}x_1x_2+F_{22}x_2^2+2F_{23}x_2x_3+F_{33}x_3^2,
\]
so $\cH\cong\RR^5$. The neural model is the real zero set of
\begin{gather*}
F_{12}F_{23}-F_{11}F_{33},\qquad F_{12}^3-F_{11}F_{12}F_{22}+F_{11}^2F_{23},\\
F_{11}F_{23}^2+F_{12}^2F_{33}-F_{11}F_{22}F_{33},\qquad F_{23}^3-F_{22}F_{23}F_{33}+F_{12}F_{33}^2,
\end{gather*}
and its singular locus is the line $\RR x_2^2$. This line is computed by the two subnetworks $((0,b),(c,0))$ and
$((b,0),(0,c))$, and two branches of $\cF$ cross transversally along it. We take the linear slice
$\{F_{11}=F_{23},\ F_{12}=F_{33}\}$ with coordinates $(\xi,\eta,\zeta)=(F_{11},F_{12},F_{22})$. Then
\[
\cF\cap\{F_{11}=F_{23},\ F_{12}=F_{33}\}=\{\xi^3+\eta^3=\xi\eta\zeta\},
\]
the cone over the folium of Descartes, whose double line $\xi=\eta=0$ is the singular line $\RR x_2^2$. The figure uses
the coordinates $(\sqrt3\,\xi,\sqrt3\,\eta,\zeta)$, which are orthonormal for the Bombieri inner product, and shows the
ball of radius $1.6$ about the origin.

\end{document}